\newcommand{\listappendicesname}{Appendix Contents}
\newcommand{\appendixtocentry}[1]{\addcontentsline{aptoc}{section}{#1}}
\DeclareMathOperator*{\argmax}{arg\,max}
\DeclareMathOperator*{\argmin}{arg\,min}
\title{Foundations of Top-$k$ Decoding For Language Models}
\author{
Georgy Noarov$^{1*}$ \hspace{1pt}
Soham Mallick$^{1*}$  \hspace{1pt}
Tao Wang$^{1*}$ \hspace{1pt} 
Sunay Joshi$^{1}$ \hspace{1pt}
Yan Sun$^{2}$ \hspace{1pt}
Yangxinyu Xie$^{1}$ \hspace{1pt}
Mengxin Yu$^{3}$ \hspace{1pt}
Edgar Dobriban$^{1}$
\\
        {\normalfont
            \textsuperscript{1} University of Pennsylvania \\
            \textsuperscript{2} New Jersey Institute of Technology
            \textsuperscript{3} Washington University in St.\ Louis
        }%
}
\begin{document}

\renewcommand{\thefootnote}{\fnsymbol{footnote}}
\footnotetext[1]{Co-first authors. Correspondence to:
        \texttt{gnoarov@seas.upenn.edu}, \texttt{kcillam@wharton.upenn.edu}, \texttt{tawan@wharton.upenn.edu}, \texttt{dobriban@wharton.upenn.edu}.}
\renewcommand{\thefootnote}{\arabic{footnote}}

\maketitle

\begin{abstract}
Top-$k$ decoding is a widely used method for sampling from LLMs: at each token, only the largest $k$ next-token-probabilities are kept, and the next token is sampled after renormalizing them to sum to unity.
Top-$k$ and other sampling methods are
motivated by the intuition that true next-token distributions are sparse, and the noisy LLM probabilities need to be truncated.
However, to our knowledge, a precise theoretical motivation for the use of top-$k$ decoding is missing.
In this work, we develop a theoretical framework  
that both explains and generalizes 
top-$k$ decoding. 
We view decoding at a fixed token 
as the recovery of a sparse 
probability distribution.
We introduce \emph{Bregman decoders} 
obtained by minimizing a separable Bregman divergence (for both the \emph{primal} and \emph{dual} cases)
with a sparsity-inducing $\ell_0$-regularization; in particular, these decoders are \emph{adaptive} in the sense that the sparsity parameter $k$ is chosen depending on the underlying token distribution.
Despite the combinatorial nature of the sparse Bregman objective, 
we show how to optimize it efficiently for a large class of divergences.
We prove that (i) the optimal decoding strategies are greedy, 
and further that (ii) the objective is discretely convex in $k$, such that the optimal $k$ can be identified in logarithmic time.
We note that standard top-$k$ decoding arises as a special case
for the KL divergence, 
and construct new decoding strategies with substantially different behaviors (e.g., non-linearly up-weighting 
larger probabilities after renormalization). 
\end{abstract}

\tableofcontents

\section{Introduction}
\label{intro}
Large language models (LLMs) are powerful
generative AI tools for producing text. 
When pre-trained on large text corpora and aligned according to human preferences, they can be used for a wide range of tasks. 
On a technical level, they are probability distributions over text: given any user text prompt $x$, 
an LLM samples an answer $Y\sim \pi(\cdot|x)$ from a probability distribution $\pi(\cdot|x)$ over text.
However, even after obtaining a pre-trained, fine-tuned, and human preference-aligned model $\pi$, it is uncommon to directly sample from the model.
Instead, several sampling/decoding methods are commonly used, including  top-$k$~\citep{fan2018hierarchical}
or top-$p$ sampling~\citep{holtzman2020curious}.
Due to their improved empirical performance compared to direct sampling, they are used by default or as an option in many popular LLMs, including the GPT series, Gemini, and Claude. From a broader perspective, per-token samplers/decoders belong to an expanding collection of post-hoc methods for improving LLM performance, which range from pre-sampling transforms (e.g.\ temperature decoding), to sequence-level decoding strategies (e.g.\ beam search), to post-hoc selection (e.g.\ best-of-$N$ or self-consistency), to a variety of test-time scaling approaches; see e.g., \cite{fan2018hierarchical, holtzman2020curious, chen2025decoding}.


In this paper, we focus on decoding methods that modify each next-token-probability distribution to induce 
\emph{sparsity}, 
i.e., to keep only a small number of tokens with a nonzero
probability.
This includes
the widely used
top-$k$
\citep{fan2018hierarchical} and
top-$p$ \citep{holtzman2020curious} sampling methods, among others.
These methods 
are motivated by the intuition that
the noisy LLM probabilities need to be truncated to denoise the ``unreliable tail'' \citep{holtzman2020curious}.
In particular, we focus on 
the popular 
top-$k$ decoding method, which
keeps only the largest $k$ next-token-probabilities 
at each decoding step.
These are renormalized---via dividing by their sum---to a probability distribution from which
the next token is sampled.

Despite the wide use and rich intuition behind top-$k$ decoding, 
to our knowledge, a precise theoretical 
understanding of top-$k$ decoding is not available (see Section \ref{rel} for a discussion of related work).
Therefore, in this work, we develop a theoretical framework  
that flexibly generalizes and sheds light on the key properties of top-$k$ decoding. 
For a fixed token, we view decoding as recovering a sparse probability distribution from a given (generally non-sparse) LLM token distribution.
We consider denoisers obtained by minimizing a Bregman divergence (such as KL divergence or Brier score) from the ``raw'' LLM token distribution, with a sparsity-inducing $\ell_0$ regularization.
This approach is motivated by a rich literature of both Bregman divergences and sparsity, see Section \ref{rel} for details.

\begin{wrapfigure}{l}{0.5\textwidth}
    \centering
    \vspace{-8pt}  
    \includegraphics[width=0.5\textwidth]{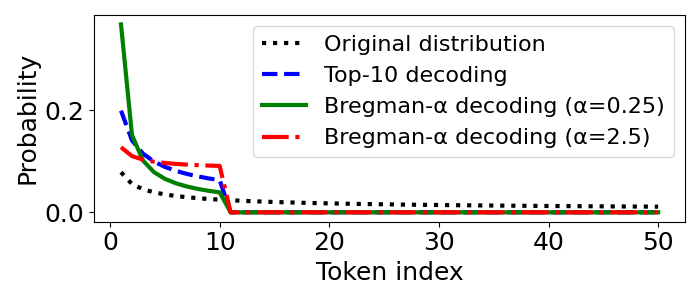}
    \label{fig:breg}
    \vspace{-8pt}  
\end{wrapfigure}
\textbf{Bregman renormalizations.} Our approach both generalizes top-$k$ decoding and opens up a rich field of efficient adaptive ``Bregman decoding'' methods with a wide and tunable range of behaviors.
As an example, we consider Bregman divergences generated by the $\alpha$-entropies $x \mapsto x^{\alpha}/[\alpha(\alpha-1)]$
\citep{havrda1967quantification,tsallis1988possible}, and display in the adjacent figure how Bregman decoders modulate a token distribution for several values of $\alpha$.
For $\alpha\to 1$, we obtain standard top-$k$ decoding (for $k=10$ here). By contrast, for $\alpha=0.25$, the decoder shifts most of the mass onto the top few tokens; while for $\alpha=2.5$, the mass is spread much more uniformly across the top-$k$ tokens.
This exemplifies how our framework enables the design of novel decoders eliciting a wide range of behaviors.

\textbf{Provable adaptivity.} An important feature of our framework is that it studies, and provides, provably \textbf{adaptive} decoding strategies. Namely, given any raw LLM token probability vector $p$, our Bregman decoders effectively perform (a generalization of) top-$k^*$ decoding of $p$ for an \emph{optimal} $k^* = k^*(p)$: the utilized $k^*$ varies depending on the LLM token distribution $p$, and is chosen to minimize the decoder's $\ell_0$-regularized Bregman divergence from $p$. This rigorous sparse-objective-centric foundation of adaptivity in LLM decoding is, to our knowledge, new in the literature. Moreover, perhaps surprisingly, we are able to show in substantial generality that an optimal $k^*$ can be found provably and efficiently without relying on grid search or other heuristics.

\subsection{A roadmap of our contributions}

In Section~\ref{reg-dec}, we introduce our theoretical framework. We view top-$k$ decoding strategies as two-step: (i) select a number of tokens $k$, and (ii) renormalize the selected $k$ tokens' entries to a probability distribution (Section \ref{topksec}).
We introduce two rich classes of decoding strategies 
(Section \ref{reg-sp-breg}): 
\textbf{primal Bregman decoding} and \textbf{dual Bregman decoding}. These correspond to $\ell_0$-regularized minimization of a Bregman divergence to the ``raw'' LLM distribution over tokens, in its first
vs.\ second argument.\footnote{Bregman divergences being asymmetric in general, their distinct behavior in both arguments has been widely studied in optimization and statistical learning 
\citep[see e.g.,][etc]{bregman1967relaxation,amari2000methods,gneiting2007strictly,yin2008bregman}.}

In general, $\ell_0$-regularization leads to combinatorial optimization problems, for which there are no known polynomial-time algorithms \citep{papadimitriou1998combinatorial,candes2005decoding}.  
Our main contribution is to show that, despite this, the sparse Bregman decoding objective can be 
efficiently optimized under mild assumptions, by virtue of having two key structural properties:
(1) \textbf{Greedy selection}: Choosing the $k$ largest probabilities is optimal (Theorems~\ref{thm:greedy_optimal} and~\ref{thm:greedy-dual} in Section \ref{greedy});
(2) \textbf{$k$-convexity}: Searching for the optimal $k^*$ is a (discretely) convex problem in $k$ (Theorem~\ref{thm:convex_cost} in Section \ref{disc-cvx}).
While simple to state and desirable, these properties are non-trivial to establish, and require a range of novel structural insights into the sparse Bregman objective that could be of independent interest. 


In Section \ref{ex}, we illustrate our theory by introducing $\alpha$-Bregman decoding strategies, generated by Tsallis $\alpha$-entropies $x \mapsto x^{\alpha}/[\alpha(\alpha-1)]$. We study how their behavior depends on $\alpha$, and highlight several closed-form cases of interest. One example of the optimization-theoretic elegance of $\alpha$-decoders is their convergence to \emph{water-filling} as $\alpha \to \infty$.
Finally, in Section \ref{sec:experiments}, we study the empirical performance of some of the novel decoding schemes on open-ended text generation and mathematical problem solving tasks with LLMs, and find that they perform competitively with top-$k$ decoding.

\vspace{-1em}
\section{Regularized sparse Bregman decoding}
\label{reg-dec}

\subsection{Top-\texorpdfstring{$k$}{k} decoding preliminaries}
\label{topksec}

{\bf Top-$k$ decoding.}
Given a probability distribution 
$p=(p_1, \ldots, p_V)$ (where $V$ stands for ``vocabulary size''), and some $1\le k \le V$,
{\bf top-$k$ decoding} 
first selects the indices $S_k=(i_1, \ldots, i_k)$ of the largest $k$ probabilities, breaking ties arbitrarily.
Setting all other coordinates to zero in $p$, one obtains the vector $p[1:k]$ of the $k$ largest entries. 
Then, it
renormalizes this vector by dividing it by its sum.
Letting 
$(p_{(1)}, p_{(2)}, \ldots, p_{(k)}) = (p_{i_1}, \ldots, p_{i_k})$
be the largest $k$ entries of $p$, 
\vspace{-0.3em}
\begin{equation}\label{topk}
    \textnormal{top-}k(p) = p[1:k]/
    \bigg(\sum_{j=1}^k p_{(j)}\bigg).
\end{equation}
One then draws a sample from the distribution $\textnormal{top-}k(p)$. 


{\bf Decoding strategies.}
Next, we aim to generalize top-$k$ decoding. 
We will refer to any operator
$\mathrm{Dec}$
on probability distributions as a \emph{decoding strategy}; 
formally 
$\mathrm{Dec}: \Delta_V \to \Delta_V$,
where 
$\Delta_{V} = \{x \in [0,1]^V : \sum_{i=1}^V x_i = 1\}$
is the simplex of $V$-dimensional probability distributions.
Observe that top-$k$ decoding consists of two steps: 
selecting the largest coordinates 
and renormalizing them.
The second step can be viewed as  ``re-distributing'' the probability mass that has been thresholded away by selection among the remaining indices.
This step can be performed in a lot of other meaningful ways besides division by the sum. 
For instance, we may put a larger weight on the larger remaining probabilities, if we consider them more reliable.

{\bf Renormalization.}
Motivated by this, we define the notion of a \emph{renormalization} mapping,
which takes as input a thresholded probability vector with $k$ nonzero entries remaining. 
We consider renormalization maps that are \emph{permutation-equivariant}, i.e., when their input is permuted, their output is permuted accordingly; which clearly holds for the sum-division used in top-$k$.
Therefore, since the sum of probabilities after selection can be less than unity, we can define them as maps from the
\emph{sub-probability simplex} $\Delta_{\mathrm{sub},k} = \{x \in [0,1]^k : \sum_{i=1}^k x_i \le 1\}$
to the simplex $\Delta_{k}$.

\begin{definition}[Renormalization]
For a positive integer $k$, 
we call a permutation-equivariant map $T: \Delta_{\mathrm{sub},k} \to \Delta_k$
a \emph{renormalization map}.
\end{definition}

A renormalization map can be extended to the full simplex $\Delta_V$, by applying it only on the nonzero coordinates.\footnote{
Formally, for any $p \in \mathbb{R}^V$ and $S\subset [V]$, let $p_S$ be the restriction of $p$ to the coordinates in $S$.
Given a vector $p\in \Delta_V$ with $S \subseteq [V]$ indexing its nonzero entries, a renormalization map $T(p)$ can be extended to $\Delta_V$ by embedding it into the original coordinates: $[T(p)]_j = [T(p_S)]_j$ for $j\in S$, and 
$[T(p)]_j = 0$ otherwise.
} 
We can now define generalized top-$k$ decoding as 
renormalizing the top-$k$ entries via a general renormalization map.

\begin{definition}[Generalized top-$k$ decoding]\label{gentopk}
    For a fixed $k$, a generalized top-$k$ decoding strategy $\mathrm{Dec}_{k,T}: \Delta_V \to \Delta_V$, parameterized by the choice of $k$ and renormalization map $T$, takes as input any $V$-class probability vector $p$, thresholds it to the sub-vector $p[1:k]$ consisting of its top-$k$ elements, and renormalizes it to $T(p[1:k]) \in \Delta_V$. 
\end{definition}
{\bf Adaptivity.}
A natural extension is to choose $k$ adaptively based on $p$. For this, we consider a $k$-selector map $\hat{k}:\Delta_{V} \to [V]:= \{1, \ldots, V\}$, and a 
collection of renormalization maps $T_k: \Delta_{\mathrm{sub},k} \to \Delta_k$, $k=1,\ldots, V$.
We define an \emph{adaptive generalized top-$k$ decoding strategy} $\mathrm{Dec}_{T}: \Delta_V \to \Delta_V$ via
$p\mapsto T_{\hat{k}(p)}(p[1:\hat{k}(p)])$. 
Below, we will design specific renormalizers $T$ and ways to choose
$k$.

\subsection{Regularized sparse Bregman decoding} 
\label{reg-sp-breg}

{\bf Decoding via sparse divergence minimization.}
Consider a
divergence $\mathrm{Div}(\cdot, \cdot) : \Delta_V \times \Delta_V \to \R$ between two distributions.
Classical examples include the squared error $\mathrm{Div}(p,q) = \|p-q\|_2^2$ and the KL divergence 
$\mathrm{Div}(p,q) = \sum_{j=1}^V
p_j \ln(p_j/q_j)$.
We define the decoding strategy $\mathrm{Dec}_{\mathrm{Div}}$, 
via sparsity-regularized divergence minimization\footnote{In our examples of interest, we will show that this optimization problem is well-defined. When there are multiple minimizers, we assume that one is selected in an arbitrary measurable way.} under divergence $\mathrm{Div}$,
for any probability vector $p$ as:
\begin{equation}\label{sp-br}
    \mathrm{Dec}_{\mathrm{Div}}(p) \in  \argmin_{\hat{p} \in \Delta_V} 
\Big\{\mathrm{Div}(\hat{p}, p) + \lambda \norm{\hat{p}}_0\Big\} \quad \textbf{(sparsity-regularized decoding)}.
\end{equation}
Here,
the $\ell_0$-pseudonorm
$\norm{\hat{p}}_0$ is the number of nonzero entries of $\hat{p}$, and $\lambda \ge 0$
is a \emph{sparsity cost} hyperparameter. 
As $\lambda$ increases, the optimal solution $\hat{p} = p^*$ gets increasingly more sparse.

{\bf Separable Bregman divergences.}
In this work, we shall instantiate $\mathrm{Div}$ in Problem~\ref{sp-br} with separable Bregman divergences \citep{bregman1967relaxation,amari2000methods}.
We will see that this class is expressive enough to induce top-$k$ decoding and many fruitful generalizations of it.
For a convex domain $\mathrm{Dom} \subseteq \R$ and a convex differentiable function $\phi: \mathrm{Dom} \to \R$,  
the one-dimensional Bregman $\phi$-divergence $\mathrm{d}_\phi$ is defined as:
    \(
    \mathrm{d}_\phi(x, y) = \phi(x) - \phi(y) - \phi'(y) (x-y), \text{for } x, y \in \mathrm{Dom}.
    \)
The separable $V$-dimensional Bregman $\phi$-divergence 
$\mathrm{D}_\phi:\mathrm{Dom}^V\to \R$
     is then defined as:
    \[
    \mathrm{D}_\phi(x, y) = \sum_{i \in [V]}  
    \mathrm{d}_\phi(x_i, y_i), \quad \text{for } x = (x_1,\ldots, x_V), y = (y_1,\ldots, y_V) \in \mathrm{Dom}^V. 
    \]
A well-known property of Bregman divergences is that
$\mathrm{D}_\phi(x, y) \ge 0$ for all $x,y$, with equality if $x=y$; when $\phi$ is strictly convex, $x=y$ in fact becomes the unique minimum.


\begin{figure}[htbp]
    \centering
    \begin{subcaptionbox}{}[0.43\linewidth]
        {\includegraphics[width=\linewidth]{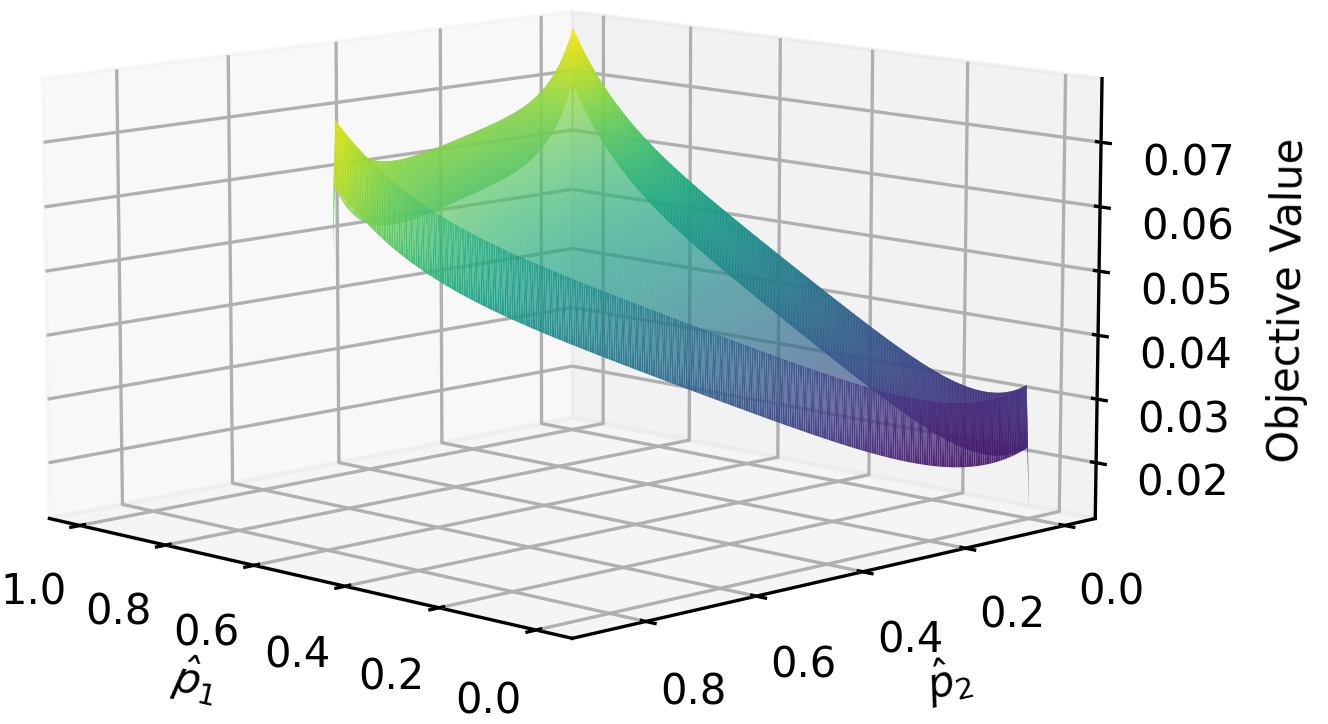}}
    \end{subcaptionbox}
    \begin{subcaptionbox}{}[0.43\linewidth]
        {\includegraphics[width=\linewidth]{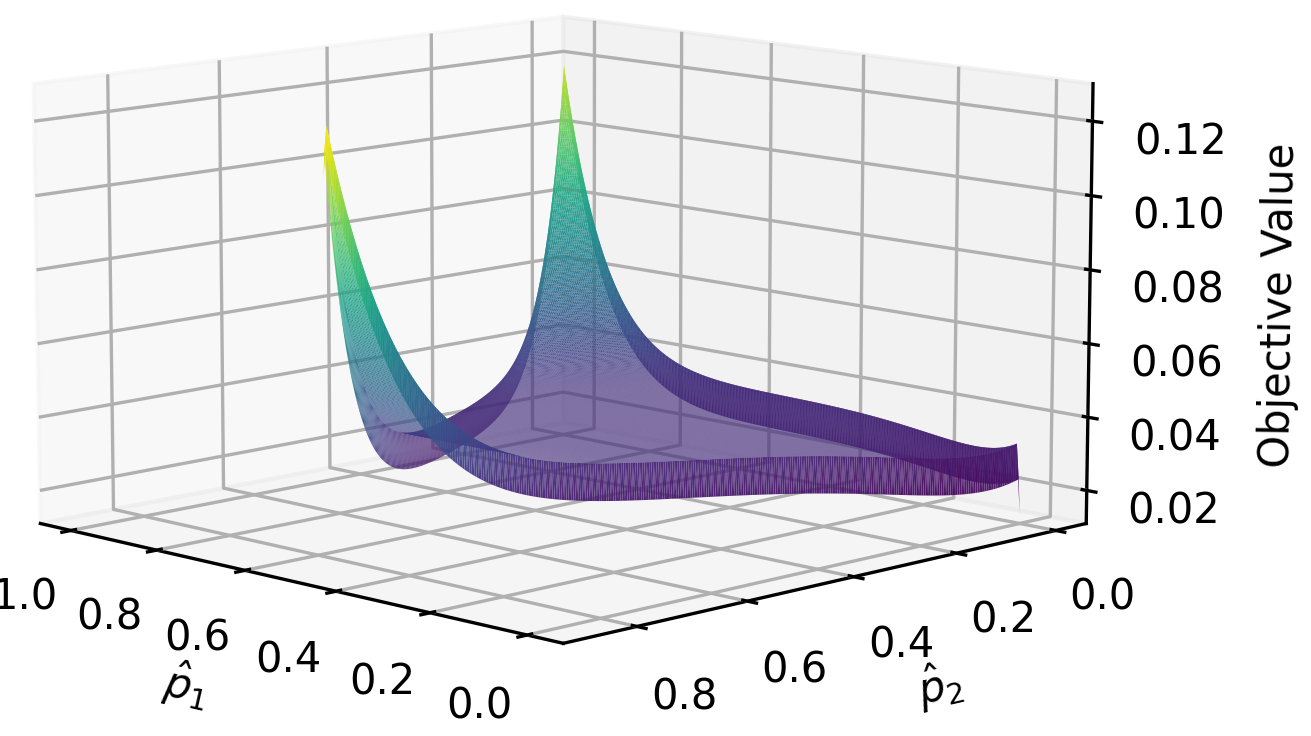}}
    \end{subcaptionbox}
    \vspace{-1em}
    \caption{\small Illustration of the landscape of the sparse Bregman objective for the primal (left) and dual (right) cases. We choose a $V=3$ dimensional example where the target vector is $p = (0.1, 0.01, 0.001)/0.111$.
We show an $\alpha$-Bregman divergence (see Section \ref{ex}) with $\alpha=10$ and $\lambda=0.01$.
    }
    \label{fig:landscape}
\end{figure}

\textbf{Primal and dual Bregman decoding.}
Since Bregman divergences are generally non-symmetric in their arguments, we may instantiate the sparse Bregman decoding Problem~\ref{sp-br} in two substantially distinct ways: by placing the estimand $\hat{p}$ in the first (\emph{primal}) or second (\emph{dual}) argument:
\begin{equation} \label{eq:primal_and_dual_Bregman_decoding}
    \mathrm{Div}(\hat{p}, p) := \mathrm{D}_\phi(\hat{p}, p) \quad \textbf{(primal decoding)}, \quad \quad \mathrm{Div}(\hat{p}, p) := \mathrm{D}_\phi(p, \hat{p}) \quad \textbf{(dual decoding)}.
\end{equation}

Both formulations possess a sound theoretical motivation. \emph{Bregman projections} are commonly defined as minimization in the first argument, 
while Bregman-based \emph{proper scoring rules} for mean elicitation correspond to minimization in the second argument \citep[see e.g.,][etc]{gneiting2007strictly,nielsen2020elementary}. 

The landscapes of primal and dual decoding are illustrated in Figure \ref{fig:landscape}. The dual objective can be non-convex even in the interior of the simplex. However, crucially, the objectives are discontinuous at the edges of the simplex due to the $\ell_0$ penalty.  
While in general these decoding objectives could be combinatorial problems that may be hard to solve, we will show in Section \ref{alg} that for separable Bregman divergences, both the primal and dual problems can be solved efficiently.

In both the primal and the dual Bregman case, when $\lambda=0$, the corresponding sparse decoding Problem~\ref{sp-br} is solved at $\hat{p} = p$ (and uniquely so if $\phi$ is strictly convex), with the intuition that absent sparsity requirements the best guess is to preserve the original distribution $p$. 
Henceforth we focus on the sparse regime $\lambda>0$, which forces some entries of $\hat p$ to be
zero at optimality. Our main results in Section~\ref{alg} show for both primal and dual decoding that, under mild conditions on $\mathrm{D}_\phi$, the
optimal solution keeps exactly the top-$k^\star$ coordinates of $p$, for an objective-chosen
$k^\star=k^\star(p)$. This yields a principled and broad generalization of top-$k$ decoding.






\section{Efficient computation of primal and dual Bregman decoding}
\label{alg}
We now investigate the optimization of the sparse objectives that give rise to primal and dual Bregman decoding. Absent further structure in these objectives, for any fixed $k$ one would have to search over all (combinatorially many) size-$k$ sparsity patterns to decide which $k$ probabilities to keep; and one would have to try all $k \in [V]$ to determine the optimal $k$. Fortunately, we will now show that Bregman decoding objectives admit computationally efficient optimization, which rests on two pillar properties: (1) The \textbf{greedy property}: Given any $k$, it is optimal to select the top $k$ tokens. (2) \textbf{$k$-convexity}: The sparse Bregman objective is (discretely) convex as a function of $k$.

First, in Section~\ref{sec:renorm}, we deal with the innermost optimization layer: the renormalization of the selected token probabilities (which is performed after the optimal $k$ and the optimal sparsity pattern have been identified). Under certain conditions on the Bregman generator, we show that it reduces to scalar root-finding both in the primal and in the dual case (the dual case in fact necessitates \emph{nested} root finding). We then proceed to show the greedy property in Section~\ref{greedy}. Finally, for the outermost layer of our optimization problem, we demonstrate the $k$-convexity property in Section~\ref{disc-cvx}.

\subsection{Renormalization for a fixed sparsity pattern} \label{sec:renorm}

We first investigate the renormalization component of a Bregman decoding strategy. Once the optimal sparsity pattern $S \subseteq [V]$ (of some size $|S| = k$) has been identified, the vector $x$ --- which denotes the sub-vector of $p$ restricted to indices in $S$ --- needs to be projected onto the simplex $\Delta_k$.
Since the $\ell_0$ regularization term becomes fixed to $\lambda k$, Problem~\ref{sp-br} becomes equivalent to:
$
\argmin_{\hat{p} \in \Delta_k} 
\mathrm{Div}(\hat{p}, x)$.
This is a $k$-dimensional Bregman projection problem to the simplex (without sparsity regularization). We will now, for both primal and dual decoding, (i) derive conditions under which this problem is well defined, and (ii) show that it can be efficiently solved by reduction to \emph{scalar root finding}.

\textbf{Primal renormalization.} We impose the following mild condition on the Bregman generator $\phi$; compared to a minimal set of assumptions for a Bregman divergence to be well-defined, it additionally requires first-order smoothness and strict convexity to hold on the entirety of the relevant interval.


\begin{assumption}[Primal validity]
The map $\phi$ is convex and continuously differentiable on $[0,1]$ 
as well as strictly convex on $(0,1)$.
\end{assumption}
Existing results \cite{krichene2015efficient,lim16efficient} then imply that
for a primal valid potential $\phi$, denoting $f = \phi'$ (and extending its inverse $f^{-1}$ so that $f^{-1}(x)=0$ for $x<f(0)$ and $f^{-1}(x)=1$ for $x>f(1)$, making it continuous and non-decreasing on all of $\mathbb{R}$), the \textbf{primal renormalization} map $T_\phi$ is given for $x \in \Delta_{\mathrm{sub},k}$ by:
\begin{equation}\label{ren}
    [T_\phi(x)]_i = f^{-1} (f(x_i) + \nu) \quad \text{ for all } i \in [k], \text{ where } \nu\in \mathbb{R} \text{ is chosen so that } \sum_{i=1}^k [T_\phi(x)]_i = 1.
\end{equation}
Since $\nu \mapsto f^{-1} (f(x_i) + \nu)$ is 
non-decreasing\footnote{It is strictly increasing for $\nu \in [-f(x_i), 1-f(x_i)]$, but the required $\nu$ may lie outside this range.} in $\nu$, the solution can be found efficiently using off-the-shelf root-finding algorithms such as Brent's method.

\textbf{Dual renormalization.} In contrast to the primal case, dual Bregman projections have (to our knowledge) not been directly studied in prior literature. They also offer new challenges: even their uniqueness cannot be taken for granted due to the general nonconvexity of Bregman divergences in the second argument \cite{banerjee2005clustering}. To pave the road towards dual Bregman projections, we will therefore rely on additional structure in $\phi$ and $\mathrm{d}_\phi$, expressed as the following dual validity condition. 
\begin{assumption}[Dual validity]
 \label{ass:convex_in_second}
     The map $\phi$ is thrice differentiable on $(0,1]$
     with $\lim\limits_{x \to 0^+} x \phi''(x) = 0$.
     For $x \!\!\in\!\! ( 0, \! 1], \!
    y \!\mapsto\! \mathrm{d}_\phi \! (x, \! y \!)
    $
    is strictly convex for $y \! \in \!\! [x, \! 1], \!$
    and $
    y \!\mapsto\! \mathrm{d}_\phi(0, \! y)
    $ is strictly convex for $y \!\in\!\! (0, \! 1 ]$. 
\end{assumption}
We establish in Theorem~\ref{d-ren} (see Appendix~\ref{app:dual_uniqueness}) that subject to dual validity, the \textbf{dual renormalization} map $T^*_\phi$ is uniquely defined for any $x \in \Delta_{\mathrm{sub},k}$ with $x \neq 0_k$ by the following implicit equations:
\vspace{-0.5em}
\begin{equation} \label{ren-dual}
    [T^*_\phi(x)]_i = x_i+ \nu^*/f'([T^*_\phi(x)]_i) \text{ for } i \in [k], \text{ with } \nu^*\in \mathbb{R} \text{ chosen so that } \sum_{i=1}^k [T^*_\phi(x)]_i = 1.
\end{equation}
This transformation is interpretable despite its implicit nature: For every index $i \in [k]$, Equation~\ref{ren-dual} has the effect of \emph{increasing} the corresponding probability $x_i$ by a positive additive amount regulated by an auxiliary variable $\nu^*$; the latter is chosen to make the increased top-$k$ probabilities sum to $1$.

Assumption~\ref{ass:convex_in_second}, short of requiring global convexity of $\mathrm{d}_\phi(x, \cdot)$ on $[0, 1]$, only enforces it for $y \in [x, 1]$. To enable this relaxation, the proof of Theorem~\ref{d-ren} carefully excludes optimal solutions belonging to the region $y \leq x$ or to the simplex boundary. Rather than a mere curiosity, this refinement substantially expands the scope of dual decoding. In particular, in our later specialization, it is essential for ensuring that dual $\alpha$-decoding is uniquely defined for all $\alpha > 1$, not just $\alpha \in (1, 2]$: as plots in Appendix~\ref{app:dual_nonconvexity} demonstrate, $\alpha$-Bregman divergences are nonconvex for $y \leq x$ for $\alpha > 2$.


See Section \ref{alg-det} for algorithmic details on computing the dual map, as well as pseudocode for our algorithms.
Figure \ref{fig:renorm} illustrates the primal and dual renormalization maps for $\alpha$-Bregman divergences (introduced in Section \ref{ex}). In this concrete example, $T_\phi$ and $T_\phi^*$ appear similar; however, for different, e.g.\ more ``peaked'', inputs $x \in \Delta_{\mathrm{sub},k}$, they are more distinct, as we illustrate in Appendix~\ref{illus-pd}.

\begin{figure}[t]
    \centering
    \begin{subcaptionbox}{}[0.43\linewidth]
        {\includegraphics[width=\linewidth]{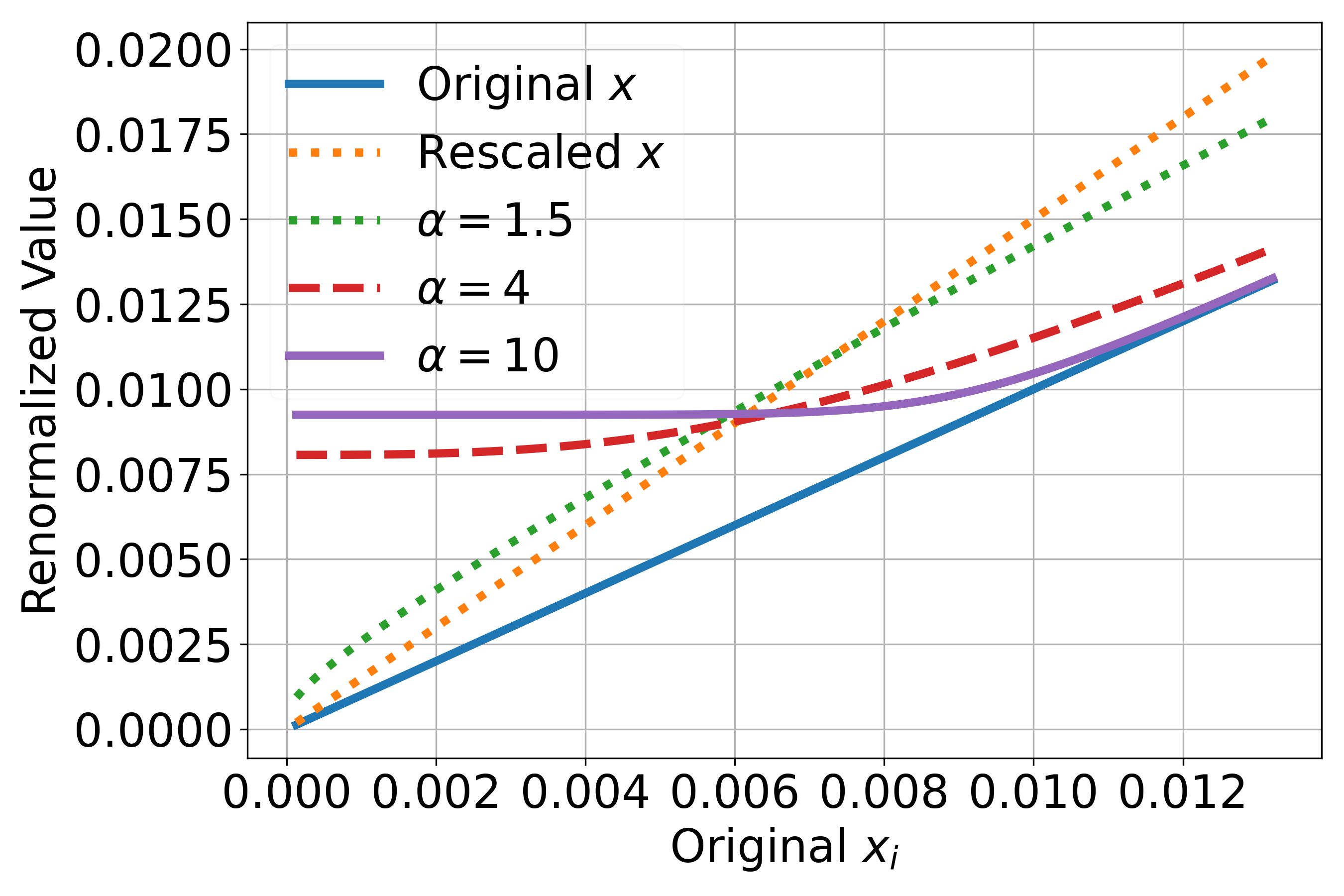}}
    \end{subcaptionbox}
    \begin{subcaptionbox}{}[0.43\linewidth]
        {\includegraphics[width=\linewidth]{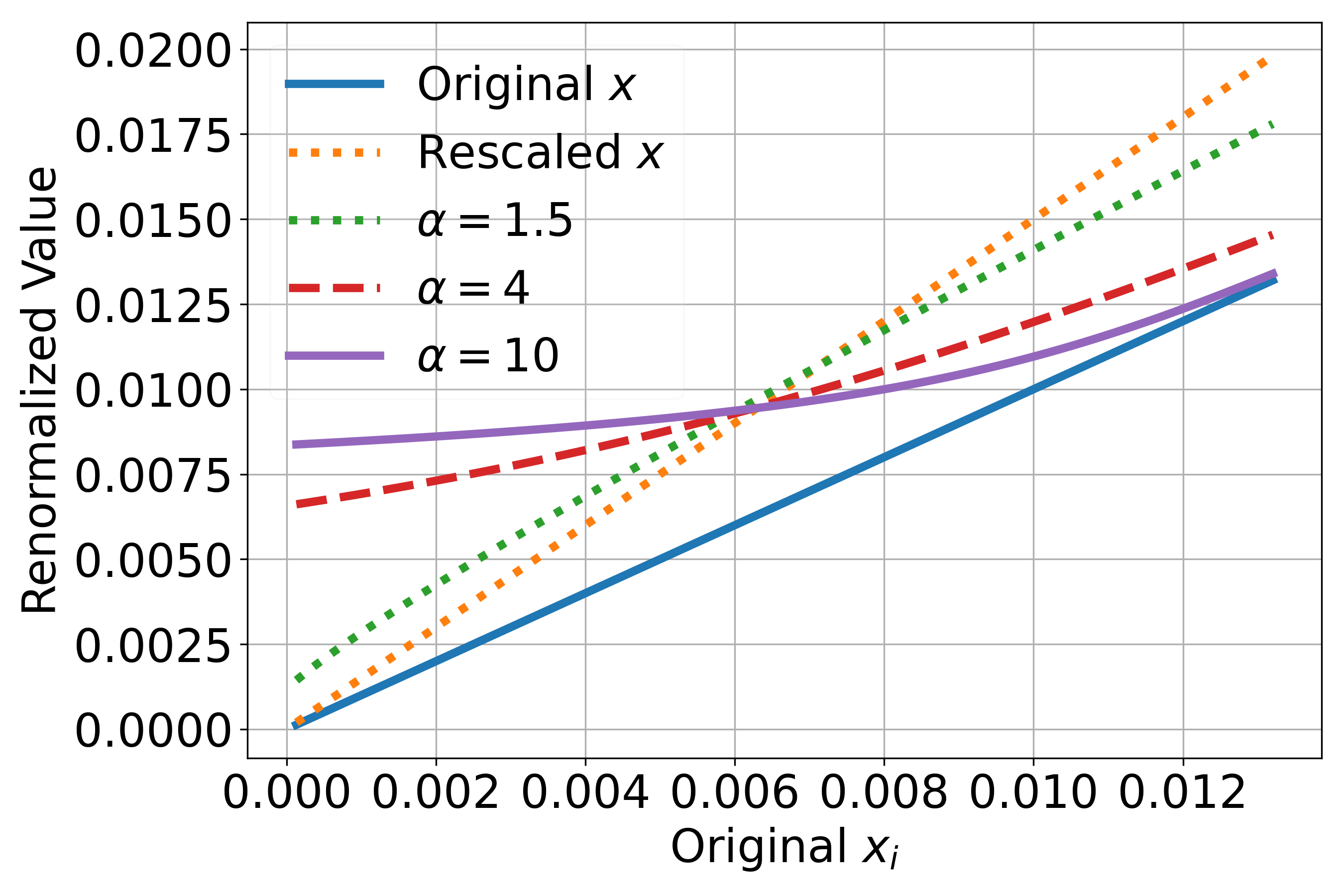}}
    \end{subcaptionbox}
    \vspace{-2em}
    \caption{\small Comparison of primal (left) and dual (right) Bregman $\alpha$-renormalization maps (see Section \ref{ex}) on input vector $x = \frac{0.67}{\sum_{i=1}^k \frac{i}{k}}\left[1, \frac{k-1}{k}, \ldots, \frac{1}{k} \right] \in \Delta_{\mathrm{sub},k}$ with $k=100$.
We plot the renormalized values against the original coordinate values of $x$. 
    }
    \label{fig:renorm}
    \vspace{-1em}
\end{figure}

\subsection{Greedy property: Justifying top-\texorpdfstring{$k$}{k} selection}
\label{greedy}

The viewpoint that lower-probability tokens can be considered as noisy \citep{holtzman2020curious} 
suggests that it would be natural and indeed desirable for a decoding strategy to be ``greedy''---dictating that
it is optimal to renormalize over the top-$k$-probability tokens, for some $k \in [V]$. We formalize this as follows. 
%
\begin{definition}[Greedy decoding]
    A decoding strategy $\mathrm{Dec}: \Delta_V \to \Delta_V$ is called \emph{greedy} if 
    for every $p\in \Delta_V$, 
     the set of nonzero entries of $\mathrm{Dec}(p)$ is a set of top-$\hat k$ entries of $p$, for some $\hat k=\hat k(p)$.
\end{definition}

While many popular decoding methods are greedy \citep{fan2018hierarchical, holtzman2020curious,minh2025turning,chen2025decoding},
some are not \citep{meister2023locally,finlayson2024closing}; justifications for non-greediness, i.e., the ability to occasionally throw out some of the top-$k$ tokens, include that this can e.g.\ help generate more ``typical'' text. 
As such, our assertion that Bregman decoding strategies are greedy is nontrivial and requires proof. First, we state our result for primal Bregman decoding.

\begin{theorem}[Primal Bregman decoding is greedy] \label{thm:greedy_optimal} 
    The primal Bregman
    decoding strategy from \eqref{sp-br} is greedy for any primal valid potential $\phi$.
\end{theorem}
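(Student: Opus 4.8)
The plan is to prove the greedy property directly from the regularized objective in \eqref{sp-br}, by a single-coordinate exchange argument; notably, the explicit renormalization map \eqref{ren} is not needed. I will in fact establish the stronger statement that \emph{every} minimizer $\hat{p}^\ast$ of the primal objective $\mathrm{D}_\phi(\hat{p}, p) + \lambda \norm{\hat{p}}_0$ over $\Delta_V$ is supported on a top-$k$ set of $p$, where $k = \norm{\hat{p}^\ast}_0$; since the decoding strategy returns some such minimizer, greediness follows. (A minimizer exists: $\mathrm{D}_\phi(\cdot, p)$ is continuous on the compact set $\Delta_V$ and $\norm{\cdot}_0$ is lower semicontinuous.) The reduction is: it suffices to rule out a \emph{strict inversion}, i.e.\ a pair $i, j$ with $\hat{p}^\ast_i > 0$, $\hat{p}^\ast_j = 0$, and $p_j > p_i$. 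Indeed, absent strict inversions, if $v := \min_{i \in \mathrm{supp}(\hat{p}^\ast)} p_i$ then $\{j : p_j > v\} \subseteq \mathrm{supp}(\hat{p}^\ast) \subseteq \{j : p_j \ge v\}$, so $\mathrm{supp}(\hat{p}^\ast)$ is precisely a set of the top-$\norm{\hat{p}^\ast}_0$ entries of $p$, with ties at level $v$ broken arbitrarily.

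The exchange step: given a strict inversion $(i,j)$, set $a := \hat{p}^\ast_i > 0$ and let $\hat{p}'$ agree with $\hat{p}^\ast$ except that $\hat{p}'_i = 0$ and $\hat{p}'_j = a$. Then $\hat{p}' \in \Delta_V$ (the total mass is unchanged) and $\norm{\hat{p}'}_0 = \norm{\hat{p}^\ast}_0$ (coordinate $i$ leaves the support, coordinate $j$ enters it), so the $\lambda\norm{\cdot}_0$ term cancels and the change in objective equals the change in the separable term at coordinates $i$ and $j$ only. Using $\mathrm{d}_\phi(x,y) = \phi(x) - \phi(y) - \phi'(y)(x-y)$, a short computation shows that $\mathrm{d}_\phi(a, y) - \mathrm{d}_\phi(0, y) = \phi(a) - \phi(0) - a\,\phi'(y) =: h(y)$, the $-\phi(y)$ and $y\phi'(y)$ terms cancelling, so that $\mathrm{D}_\phi(\hat{p}', p) - \mathrm{D}_\phi(\hat{p}^\ast, p) = h(p_j) - h(p_i)$.

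The monotonicity step closes the argument: by primal validity $\phi$ is continuously differentiable on $[0,1]$ and strictly convex on $(0,1)$, so $\phi'$ is strictly increasing on $(0,1)$, and this upgrades to strict monotonicity on the closed interval $[0,1]$ by continuity of $\phi'$ (so that $\phi'(y_1) < \phi'(y_2)$ for all $0 \le y_1 < y_2 \le 1$, covering coordinates of $p$ equal to $0$ or $1$). Since $a > 0$, $h(y) = \phi(a) - \phi(0) - a\,\phi'(y)$ is strictly decreasing, hence $p_j > p_i$ gives $h(p_j) < h(p_i)$; thus $\hat{p}'$ strictly decreases the objective, contradicting optimality of $\hat{p}^\ast$. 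Therefore no minimizer admits a strict inversion, which proves the theorem.

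I expect the only delicate points to be (i) passing from strict convexity on the open interval $(0,1)$ to strict monotonicity of $\phi'$ on all of $[0,1]$, since $p$ may have coordinates at the endpoints; and (ii) verifying that the exchange preserves both $\Delta_V$-membership and the $\ell_0$ count, so the comparison reduces cleanly to the separable Bregman term — everything else is the elementary identity for $h$. The a priori obstacle of $2^V$ candidate supports is dissolved by this local exchange being \emph{strictly} improving whenever an inversion is present. The same exchange template will underlie the dual result (Theorem~\ref{thm:greedy-dual}), where the perturbed coordinate sits in the second argument of $\mathrm{d}_\phi$ and the possible non-convexity of $\mathrm{d}_\phi(x,\cdot)$ is what makes Assumption~\ref{ass:convex_in_second} necessary there.
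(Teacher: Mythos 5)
Your proof is correct, and it takes a genuinely different and more elementary route than the paper's. The paper establishes the greedy property by decomposing the loss difference $L(S') - L(S)$ between two candidate supports of the same size into three terms (Lemma~\ref{lem:loss-diff-primal}), showing one is a nonnegative Bregman divergence, one vanishes via the explicit renormalization stationarity condition $\phi'([T_Q(p)]_j) = \phi'(p_{i_j}) + \nu_Q$ from~\eqref{ren}, and the third is nonnegative by monotonicity of $\phi'$. Your exchange argument bypasses the renormalization formula entirely: the key observation that $\mathrm{d}_\phi(a, y) - \mathrm{d}_\phi(0, y) = \phi(a) - \phi(0) - a\,\phi'(y)$ is affine in $\phi'(y)$ (the $\phi(y)$ and $y\,\phi'(y)$ terms cancel) lets you compare an \emph{arbitrary} candidate minimizer $\hat{p}^*$ against its coordinate-swapped neighbor $\hat{p}'$, rather than comparing renormalization-optimal solutions over two fixed supports. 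Your treatment of the endpoint monotonicity is correct (sandwich via a pair $z < w$ in the open interval), as is the reduction from absence of strict inversions to the top-$k$ conclusion and the $\ell_0$-count invariance of the swap. The chief advantage of the paper's heavier machinery is reuse: Lemma~\ref{lem:loss-diff-primal} is re-deployed via convex conjugation in the dual greedy proof under~\ref{assump:dual-greedy-1}. One small correction to your closing speculation: applying the same exchange to the dual objective yields $\mathrm{D}_\phi(p,\hat{p}') - \mathrm{D}_\phi(p,\hat{p}^*) = (p_i - p_j)\bigl(\phi'(a) - \phi'(0)\bigr)$, which is strictly negative under $\phi'(0)=0$ and strict monotonicity of $\phi'$ alone; so before attributing the extra hypotheses~\ref{assump:dual-greedy-1}/\ref{assump:dual-greedy-2} of Theorem~\ref{thm:greedy-dual} to a failure of the exchange template, it is worth checking whether those hypotheses are instead needed for well-posedness of the dual projection (Theorem~\ref{d-ren}) or for the discrete convexity argument, rather than for greediness itself.
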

The proof is provided in Appendix~\ref{app:proof_greedy_primal}. It proceeds by decomposing the Bregman objective into several terms, see Lemma~\ref{lem:loss-diff-primal}, and bounding them with the help of the primal renormalization equations~\eqref{ren}.

The dual case, owing i.a.\ to the implicit form of the dual renormalization formulas~\eqref{ren-dual}, is correspondingly more complex to handle. Unlike in Theorem~\ref{thm:greedy_optimal}, our next result requires further conditions, which we state as a menu of two options. The relationship between the extra assumptions is intricate; Assumption~\ref{assump:dual-greedy-2} is implied by, but is strictly weaker than, log-convexity of $\phi'$. 

%



\begin{theorem}[Dual Bregman decoding is greedy]\label{thm:greedy-dual}
    The dual Bregman decoding strategy from \eqref{sp-br} is greedy for any dual-valid $\phi$ with $\phi'(0)=0$
    that further satisfies either of the following conditions:
    \vspace{-0.3em}
    \begin{enumerate}[label=(A\arabic*),nosep,leftmargin=*,topsep=0pt]
        \item \label{assump:dual-greedy-1}
                $\phi'$ is convex;
        \item \label{assump:dual-greedy-2}
                    The maps\footnote{In the economics literature, $u(x) = {x \phi''(x)}/{\phi'(x)}$ is referred to as the \emph{elasticity} of the function $\phi'$.}
                    $u$ defined as
                   $u(x) := {x \phi''(x)}/{\phi'(x)}$ for $x\in(0,1]$ and $\phi$
                   are nondecreasing.
    \end{enumerate}
\end{theorem}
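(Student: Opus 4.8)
\textbf{Proof proposal for Theorem~\ref{thm:greedy-dual} (dual Bregman decoding is greedy).}

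The plan is to show that if the optimal support $S$ of the dual problem is \emph{not} a top-$k$ set, then one can strictly decrease the objective by swapping a ``kept'' low-probability index $j \in S$ for a ``discarded'' high-probability index $i \notin S$ with $p_i > p_j$, while keeping the support size fixed (so the $\ell_0$ penalty $\lambda|S|$ is unchanged). Since the penalty is constant under such swaps, it suffices to work with the pure dual renormalization objective $\hat p \mapsto \mathrm{D}_\phi(p,\hat p)$ restricted to each candidate support, i.e.\ to compare $R(S) := \min_{\hat p \in \Delta_{|S|}} \mathrm{D}_\phi(p_S,\hat p)$ across supports of equal size. By Theorem~\ref{d-ren} the minimizer on each support is unique and characterized by the implicit water-filling equations~\eqref{ren-dual}, so $R(S)$ is well-defined and I can reason about it through its optimality conditions. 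The goal reduces to: for $|S|=|S'|=k$ with $S'$ obtained from $S$ by replacing $j$ with $i$ where $p_i \ge p_j$, show $R(S') \le R(S)$, with strictness when $p_i > p_j$ — and conclude that any non-top-$k$ support is strictly suboptimal.

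The key steps, in order: (i) Fix the support size $k$ and view $R$ as a function of the \emph{multiset} of target values $\{p_\ell : \ell \in S\}$; by permutation-equivariance of $T^*_\phi$ it depends only on this multiset. (ii) Reduce to a two-coordinate exchange: holding all coordinates of $S \cap S'$ fixed at their contribution, compare the optimal two-dimensional dual renormalization of $(p_j, \text{stuff})$ versus $(p_i, \text{stuff})$. Concretely, I expect to prove a monotonicity lemma: if $g(a) := \min\{\mathrm{d}_\phi(a,\hat a) + (\text{rest}) : \hat a + (\text{rest of }\hat p) = 1\}$ denotes the optimal objective as a function of one target entry $a$ (with the other entries free to re-solve), then $g$ is nonincreasing in $a$ on $[0,1]$. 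Intuitively, a larger target probability is ``easier'' to approximate within the simplex because the renormalization can shift mass toward it. (iii) Establish this monotonicity via the envelope theorem: $g'(a) = \partial_a \mathrm{d}_\phi(a, \hat a^*(a)) = \phi'(a) - \phi'(\hat a^*(a))$ evaluated at the optimum (the $\nu^*$-dependence vanishes by first-order stationarity), so $g'(a) \le 0$ iff $\hat a^*(a) \ge a$ at the optimum. The proof of Theorem~\ref{d-ren} already shows the dual optimum lies in the region $\hat p \ge p_S$ coordinatewise (that is precisely the region where $\mathrm{d}_\phi(x,\cdot)$ is convex and where the argument excludes boundary/$y\le x$ solutions); hence $\hat a^*(a) \ge a$ and $g' \le 0$. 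This is where conditions~\ref{assump:dual-greedy-1}/\ref{assump:dual-greedy-2} and $\phi'(0)=0$ enter — they are needed to control $f'([T^*_\phi]_i) = \phi''$ in~\eqref{ren-dual} so that the common multiplier $\nu^*$ has a consistent sign and the ``mass flows toward larger targets'' picture is globally valid, rather than just locally; I would phrase the core inequality as: under either assumption, increasing one target $p_\ell$ causes its renormalized value to increase and weakly decreases the renormalized values of the others, keeping $\hat p \ge p_S$ intact. (iv) Chain the two-coordinate exchanges: any support can be transformed into a top-$k$ support by a finite sequence of swaps each replacing some $j\in S$ by some $i\notin S$ with $p_i > p_j$; each swap weakly decreases $R$ and at least one is strict unless $S$ was already top-$k$, giving greediness.

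The main obstacle will be step (iii): making the envelope/monotonicity argument rigorous in the \emph{dual} setting, because $\mathrm{d}_\phi(x,\cdot)$ is only assumed convex on $[x,1]$ (Assumption~\ref{ass:convex_in_second}), not globally, so I must carry along the structural fact from Theorem~\ref{d-ren}'s proof that the optimum never escapes into $y \le x$ or onto the simplex boundary — and verify this is preserved under the perturbation in $a$. The second delicate point is that the two assumptions~\ref{assump:dual-greedy-1} and~\ref{assump:dual-greedy-2} play genuinely different roles: (A1) convexity of $\phi'$ gives a clean sign for how $\nu^*$ and the $1/\phi''([T^*_\phi]_i)$ weights interact, whereas (A2) (monotone elasticity $u$ together with monotone $\phi$) is the weaker substitute that still forces the renormalized vector to be monotone in the same order as the targets; I would handle them in two short separate lemmas feeding into the same exchange argument, and I expect (A2) to require the more careful comparison-of-solutions estimate, likely via differentiating the implicit system~\eqref{ren-dual} in the target and showing the Jacobian has the requisite sign pattern (an M-matrix / diagonal-dominance structure) under monotonicity of $u$.
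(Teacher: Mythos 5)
Your proposal takes a genuinely different route from the paper's. The paper handles the two conditions by two disjoint techniques: for~\ref{assump:dual-greedy-1} it passes to the Legendre conjugate via the identity $\mathrm{d}_\phi(x,y)=\mathrm{d}_{\phi^*}(\phi'(y),\phi'(x))$ and recycles the three-term decomposition from the primal case; for~\ref{assump:dual-greedy-2} it introduces an auxiliary function $h(x,\nu)=\psi(x,\xi(x,\nu))$ (the per-coordinate loss at a \emph{fixed} multiplier) and shows $\partial_\nu h\le 0$ and $\partial_x h\ge 0$ by implicit differentiation, then combines them via $\nu_{S'}<\nu_S$. Your approach instead applies the envelope theorem directly to the full renormalization value function $g(a)=\min_{\hat p\in\Delta_k}\{\mathrm{d}_\phi(a,\hat p_1)+\sum_{\ell\ge 2}\mathrm{d}_\phi(x_\ell,\hat p_\ell)\}$, obtaining $g'(a)=\phi'(a)-\phi'(\hat p_1^*(a))\le 0$ from first-order stationarity (the $\nu^*$-dependence cancels because $\sum\hat p^*_\ell\equiv 1$) and the fact, established in Theorem~\ref{d-ren}, that the dual optimum satisfies $\hat p_1^*>a$. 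This is a cleaner, unified argument that also absorbs the dropped-index term $\mathrm{d}_\phi(p_i,0)-\mathrm{d}_\phi(p_j,0)\ge 0$ since $\phi'$ increasing makes $p\mapsto\mathrm{d}_\phi(p,0)$ increasing.

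One notable discrepancy worth flagging: you assert that conditions~\ref{assump:dual-greedy-1}/\ref{assump:dual-greedy-2} ``enter'' to control $\phi''$ and keep the ``mass flows toward larger targets'' picture valid, and you plan a comparison-of-solutions estimate tracking how the renormalized coordinates respond to perturbation of a target. But the envelope argument, as you yourself set it up, does not need this: the whole point of the envelope theorem is that the sensitivity of $\hat p^*(a)$ and $\nu^*(a)$ to $a$ does not appear in $g'(a)$. The ingredients you actually use — uniqueness, interiority, and $\hat p^*\ge p_S$ — all follow from dual validity alone (Theorem~\ref{d-ren} is stated without~\ref{assump:dual-greedy-1}/\ref{assump:dual-greedy-2}), so a careful write-up of your approach would not seem to invoke~\ref{assump:dual-greedy-1} or~\ref{assump:dual-greedy-2} at all. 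In contrast, the paper's proof genuinely needs them because it decouples the $\nu$-variation and $x$-variation: the auxiliary function $h(x,\nu)$ holds $\nu$ fixed away from the optimal value, and its $x$-monotonicity fails without~\ref{assump:dual-greedy-2}. Your value-function argument evaluates everything at the coupled optimum and therefore sidesteps exactly the terms the paper must control. The gaps that remain in your proposal are routine: you should verify differentiability of $g$ via the implicit function theorem applied to the KKT system (the bordered Hessian has determinant $-\sum_\ell v_\ell^{-1}\prod_\ell v_\ell\ne 0$ with $v_\ell>0$ by dual validity on $\hat p_\ell>x_\ell$), and handle the boundary cases where the sub-probability constraint becomes tight along the path from $p_j$ to $p_i$. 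If these are filled in cleanly, you would have proven a slightly stronger statement than Theorem~\ref{thm:greedy-dual}, and you should say so explicitly rather than retrofitting a role for~\ref{assump:dual-greedy-1}/\ref{assump:dual-greedy-2}.
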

The proof is provided in Appendix~\ref{app:proof_greedy_property_dual}. In it, we use two different proof techniques for both conditions: For Condition~\ref{assump:dual-greedy-1}, our proof in Appendix~\ref{app:A1_dual_greedy} leverages the decomposition from the primal case along with the change of variables $\mathrm{d}_\phi(x, y) = \mathrm{d}_{\phi^*}(\phi'(y), \phi'(x))$, where $\phi^*$ is the convex conjugate of $\phi$. For Condition~\ref{assump:dual-greedy-2}, we develop a saddle-point proof approach in Appendix~\ref{app:A2_dual_greedy}. For that, we perform a sensitivity analysis of both the renormalized values $[T^*_\phi(p)]_i$ and of the per-coordinate Bregman loss terms, relative to hypothetical changes in the dual Lagrange multiplier $\nu^*$ and in the entries $p_i$ of $p$; we carry this out via implicit differentiation of the defining equations~\eqref{ren-dual}.



\subsection{\texorpdfstring{$k$}{k}-convexity: Speeding up the search for optimal adaptive \texorpdfstring{$k$}{k}}
\label{disc-cvx}

We have seen that for fixed $k$, greedily selecting the top $k$ tokens is optimal. However, without further structure, we would still have to search over all $k \in [V]$ to determine the optimal $k^*$, which would be cost-prohibitive for large token vocabularies. 
Fortunately, as we will see, only \emph{logarithmically} many values of $k$ will need to be tried, as under greedy selection, the primal and dual Bregman decoding objectives both enjoy \emph{discrete convexity} with respect to $k$.

To formally state our result, fix a divergence $\mathrm{Div}$, probability vector $p \in \Delta_V$, and hyperparameter $\lambda$. We denote the regularized cost of selecting the top-$k$ entries of $p$, as a function of $k \in [V]$, by:
\begin{equation}
\label{eq:regularised-cost}
\mathrm{cost}(k)
:=
\min_{\hat{p} \in \Delta_k} 
\left\{\mathrm{Div}\left((\hat{p}, 0_{V-k}), p\right) + \lambda k\right\}.
\end{equation}
Recall that a function
$h: [V] \to \mathbb{R}$
is \emph{discretely convex}
if for 
all $k \in \{2, \ldots, V-1\}$, 
its discrete second derivative $\Delta^2 h(k) := \Delta h(k+1) - \Delta h(k) := \{h(k+1) - h(k)\} - \{h(k) - h(k-1) \} \ge 0$.

\begin{theorem}[Discrete primal and dual cost convexity] \label{thm:convex_cost}
$\mathrm{cost}(\cdot)$ is discretely convex in $k \in [V]$ for:
\[
    1. \: \text{$\mathrm{Div}(\hat{p}, p) = \mathrm{D}_\phi(\hat{p}, p)$, if $\phi$ is primal valid;} 
    \quad \quad
    2. \: \text{$\mathrm{Div}(\hat{p}, p) = \mathrm{D}_\phi(p,\hat{p})$, if $\phi$ is dual valid.}
\]
\end{theorem}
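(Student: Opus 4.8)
The plan is to analyze $\mathrm{cost}(k)$ via the renormalization formulas~\eqref{ren} and~\eqref{ren-dual}. Write $\mathrm{cost}(k) = \lambda k + c(k)$, where $c(k)$ is the optimal (unregularized) Bregman projection value over the top-$k$ sub-vector. Since $\lambda k$ is affine in $k$ it contributes nothing to the discrete second derivative, so it suffices to show $c(\cdot)$ is discretely convex, i.e.\ $c(k+1) - c(k)$ is nondecreasing in $k$. Intuitively, $c(k+1) - c(k)$ is the marginal ``projection cost'' of being forced to also retain the $(k+1)$-st largest coordinate $p_{(k+1)}$; I want to show this marginal cost grows as $k$ grows (equivalently, as we are forced to include ever-smaller coordinates, each additional inclusion is more expensive). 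The key structural fact I will use repeatedly is that at the optimum for sparsity $k$, every retained coordinate is raised by a common ``level parameter'' — $\nu = \nu_k$ in the primal case, $\nu^* = \nu^*_k$ in the dual case — and as $k$ increases this level parameter moves monotonically (the larger the support, the less each coordinate needs to be inflated to reach total mass $1$, so $\nu_k$ is nonincreasing in $k$).

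First, for the \textbf{primal case}, I would proceed as follows. By~\eqref{ren}, $[T_\phi(x)]_i = f^{-1}(f(x_i)+\nu_k)$ on the top-$k$ support, with $\nu_k$ the unique root of $\sum_{i=1}^k f^{-1}(f(p_{(i)})+\nu) = 1$. A first lemma: $\nu_k$ is nonincreasing in $k$ (add a coordinate, the sum at fixed $\nu$ weakly increases since $f^{-1}\ge 0$, forcing $\nu$ down). Next, write $c(k) = \sum_{i=1}^k \mathrm{d}_\phi([T_\phi]_i, p_{(i)})$. I want a clean formula for the per-coordinate loss as a function of $\nu_k$ alone: using $f = \phi'$ and the substitution $y_i = f^{-1}(f(p_{(i)})+\nu)$, one gets $\mathrm{d}_\phi(y_i, p_{(i)}) = \phi(y_i) - \phi(p_{(i)}) - f(p_{(i)})(y_i - p_{(i)})$; differentiating in $\nu$ and using $\tfrac{d y_i}{d\nu} = 1/f'(y_i)$ shows $\tfrac{d}{d\nu}\mathrm{d}_\phi(y_i, p_{(i)}) = (f(y_i) - f(p_{(i)}))/f'(y_i) = \nu/f'(y_i) \ge 0$ when $\nu \ge 0$. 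This, together with the constraint $\sum_i \tfrac{d y_i}{d\nu} = \sum_i 1/f'(y_i)$ used to compute $\tfrac{d\nu_k}{d(\text{anything})}$, lets me express $c(k)$ and its differences as integrals/sums of manifestly monotone quantities in the level parameter. I would then compare $c(k+1)-c(k)$ at consecutive $k$ by a telescoping/coupling argument: the extra term from coordinate $k+1$ plus the change in the shared terms caused by $\nu$ dropping, both controlled by monotonicity of $\nu_k$ and convexity of $\phi$. Concretely, I expect the cleanest route is to establish that $\Delta c(k) = c(k+1)-c(k)$ admits a representation as a function that is increasing in $p_{(k+1)}^{-1}$ (smaller new coordinate $\Rightarrow$ larger marginal cost) and then use $p_{(k+1)} \le p_{(k)}$ to chain the inequality.

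For the \textbf{dual case}, the same skeleton applies but the renormalization is implicit: $[T^*_\phi(x)]_i = x_i + \nu^*_k/f'([T^*_\phi(x)]_i)$ with $\nu^*_k$ chosen so the retained entries sum to $1$; Theorem~\ref{d-ren} (dual validity) guarantees this is well-defined and that the retained values exceed the originals, so $\nu^*_k \ge 0$. I would again show $\nu^*_k$ is nonincreasing in $k$, via implicit differentiation of the mass-$1$ constraint (the needed sign facts come from dual validity: $\lim_{x\to 0^+} x\phi''(x)=0$ and the strict convexity of $y\mapsto \mathrm{d}_\phi(x,y)$ on $[x,1]$ ensure the relevant Jacobian terms have the right sign). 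Then, as in the primal case, I compute $\tfrac{d}{d\nu^*}\mathrm{d}_\phi(p_{(i)}, y_i)$ along the solution curve $y_i(\nu^*)$, obtaining again a nonnegative expression, and assemble $\Delta c(k)$ from these pieces. The change of variables $\mathrm{d}_\phi(x,y) = \mathrm{d}_{\phi^*}(\phi'(y), \phi'(x))$ with $\phi^*$ the convex conjugate (already used in the paper for the greedy proof) may let me borrow the primal argument wholesale in the regime where $\phi'$ is convex; otherwise I handle it directly via the implicit-differentiation bookkeeping.

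\textbf{Main obstacle.} The hard part will be the dual case: because $\mathrm{d}_\phi(x,\cdot)$ need not be globally convex (only on $[x,1]$, per Assumption~\ref{ass:convex_in_second}), I cannot simply invoke convexity to get sign control on the second-order sensitivities. I expect the crux is a careful implicit-differentiation computation of $\tfrac{d^2}{dk^2}$-type quantities — really the finite difference $\Delta^2 c(k)$ re-expressed through the continuous sensitivities of $(y_i(\nu^*), \nu^*_k)$ — where I must show a certain combination of $\phi''$, $\phi'''$, and the elasticity $u(x)=x\phi''(x)/\phi'(x)$ terms stays nonnegative on the relevant region $y\ge x$. This is exactly the kind of monotonicity bookkeeping that the paper flags as ``non-trivial structural insight,'' and it is where all the dual-validity hypotheses (thrice differentiability, the limit condition, and the restricted convexity) get consumed. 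The primal case, by contrast, I expect to be a relatively short computation once the $\nu_k$-monotonicity lemma is in place.
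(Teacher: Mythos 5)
Your plan for the primal case relies on two monotonicity facts — that $\nu_k$ is nonincreasing in $k$, and that the per-coordinate loss along the solution curve has derivative $\nu/f'(y_i) \ge 0$ in $\nu$ — plus a still-unspecified ``telescoping/coupling'' step to turn these into $\Delta^2 c(k) \ge 0$. The facts you cite are true, but they are not sufficient, and you never show how to complete the argument. The difficulty is that $\Delta c(k) = c(k+1) - c(k)$ mixes the new coordinate $p_{(k+1)}$'s contribution with a correction of \emph{every} shared coordinate through the change $\nu_k \to \nu_{k+1}$, and you cannot simply bound this correction by the monotonicity of $\nu_k$: you would need to exactly cancel it against the analogous correction in $\Delta c(k-1)$, and nothing in your outline does that. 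Your heuristic ``$\Delta c(k)$ should be increasing in $p_{(k+1)}^{-1}$'' does not help either, because $\Delta c(k)$ depends on the whole profile $p_{(1)}, \ldots, p_{(k+1)}$ through $\nu_k$ and $\nu_{k+1}$, so it is not a function of $p_{(k+1)}$ alone and the inequalities do not chain across $k$.

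The paper's primal proof (Appendix~\ref{app:proof_discrete_convexity_primal}) avoids all of this via a Legendre-dual representation that your plan misses entirely: it writes $L(k) = \max_{\nu} W(k,\nu) + \mathrm{const}$ with $W(k,\nu) := \nu - \sum_{j=1}^k \phi^*(f(p_j) + \nu)$, observes that $\nu_k$ is the unique maximizer (so $W(k\pm 1, \nu_{k\pm 1}) \ge W(k\pm 1, \nu_k)$), and then evaluates the entire second difference at the \emph{fixed} dual variable $\nu_k$, where it telescopes exactly to $-\phi^*(f(p_{k+1})+\nu_k) + \phi^*(f(p_k)+\nu_k) \ge 0$. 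The power of this argument is that the shared-coordinate corrections cancel \emph{identically} because the dual variable is held fixed, which is precisely the coupling your outline cannot manufacture from $\nu_k$-monotonicity alone. (Notably, the paper's proof never needs $\nu_k$ to be monotone in $k$.)

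For the dual case you explicitly flag the main obstacle and leave it unresolved; you also suggest invoking the change of variables $\mathrm{d}_\phi(x,y)=\mathrm{d}_{\phi^*}(\phi'(y),\phi'(x))$ to ``borrow the primal argument wholesale in the regime where $\phi'$ is convex,'' but the dual-validity hypothesis of Theorem~\ref{thm:convex_cost} does not assume $\phi'$ is convex, so that route does not cover the stated result. The paper's actual dual proof (Appendix~\ref{app:proof_discrete_convexity_dual}) instead decomposes $\Delta^{*,2}(k)$ over three index ranges, uses the restricted second-argument convexity from Assumption~\ref{ass:convex_in_second} only on $[x,1]$ to obtain first-order lower bounds, and reduces to the monotonicity of a specific auxiliary function $\psi_k(x) = \nu_k x - \mathrm{d}_\phi(p_k, x)$ on $[0, [T^*_k(p)]_k]$. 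None of this is in your outline, and the implicit-differentiation bookkeeping you propose has no analogue of the crucial regrouping that makes the cancellations exact rather than approximate.

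A minor additional issue: your decomposition $\mathrm{cost}(k) = \lambda k + c(k)$ with ``$c(k)$ the projection value over the top-$k$ sub-vector'' omits the tail $\sum_{i > k} \mathrm{d}_\phi(0, p_{(i)})$ that is present in~\eqref{eq:regularised-cost}; this tail contributes a $k$-dependent (not affine) term to the cost, so it must be tracked explicitly.
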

Figure~\ref{fig:primal_dual_discrete_convex} in Appendix~\ref{ill:disc-cvx}) illustrates the result of Theorem~\ref{thm:convex_cost} by displaying the $\mathrm{cost}(\cdot)$ functions for primal and dual Bregman $\alpha$-decoding (defined in Section~\ref{ex} below) for assorted $\alpha$.

\textbf{Implications for efficient computation.}
As a corollary of Theorem~\ref{thm:convex_cost}, an 
optimal $k^*$ is provably identifiable by searching for $k$ for which $\Delta \mathrm{cost}(k)\le 0$ and 
$\Delta \mathrm{cost}(k+1)\ge 0$, for which repeated bisection (binary search) over $1 \leq k \leq V$ suffices --- and thus, only $O(\log V)$ tries of $k$ are necessary. 
However, even less computation can suffice if one leverages that the optimal $k$ is typically small. First, if one heuristically sets a hard limit $k_\mathrm{u}$ on $k$ (e.g.\ $k_\mathrm{u} = 50$), then identifying an optimal $k \in [k_\mathrm{u}]$ requires $O(k_\mathrm{u})$ tries. Secondly, one may use exponential search\footnote{First, identify a \emph{true} upper bound $k^*_\mathrm{u}$ on $k^*$ by sequentially trying $k_\mathrm{u} = 1, 2, 4, \ldots$, and then perform binary search in $O(\log k^*_\mathrm{u})$ rounds.} instead of binary search over $k$: this requires only $O(\log k^*)$ tries --- very small for typical values of $k^*$ --- and has the added benefit that only renormalizations over at most $O(k^*)$ tokens are performed at each step.

\textbf{Proving Theorem~\ref{thm:convex_cost}.} Our proof uses two distinct approaches for the primal and the dual cases:

\emph{Primal $k$-convexity.} The proof is developed in Appendix~\ref{app:proof_discrete_convexity_primal}. As its cornerstone, we use the Legendre dual mapping $\phi^*$ of the generator $\phi$ to establish and leverage the following cost structure: for any $k$, cost($k$) can up to additional terms be represented as $\max_{\nu \geq 0} \left[ \nu - \sum_{i=1}^k \phi^*(\phi'(p_i) + \nu) \right]$. This expression is concave in $\nu$, and its unique optimizer is $\nu_k$, the optimal Lagrange multiplier for renormalizing the top $k$ probabilities of $p$ from \eqref{ren}. 
Using this, we then establish $\Delta^2 \mathrm{cost}(k) \geq 0$.

\emph{Dual $k$-convexity.} The proof is in Appendix~\ref{app:proof_discrete_convexity_dual}. 
The above dualization strategy does not directly apply.
Instead, we lower bound $\Delta^2 \mathrm{cost}^*(k)$ by regrouping the loss contributions of the indices $i \in [k+1]$, and ---via intricate term rearrangement and bounding---reduce to proving the local concavity of a special transformation (Equation~\ref{d2lb}) that turns out to hold by our dual-validity assumption.

\section{Example: Bregman \texorpdfstring{$\alpha$}{alpha}-decoding}
\label{ex}

We now consider, as an illustration, a single-parameter family of Bregman decoding strategies, which 
arises via the generators of the Havrda--Charv\'at--Tsallis $\alpha$-entropies \citep{havrda1967quantification,tsallis1988possible,tsallis2009introduction,reem2019re,blondel2020learning}:
\[
\phi_\alpha(x) = x^\alpha/[\alpha(\alpha-1)], x\in[0,1], \quad \text{for } \alpha \in J:=(-\infty,0) \cup (0,1) \cup (1,\infty).
\]
When $\alpha <0$ and $x=0$, we set $x^\alpha:=+\infty$ so that $\phi_\alpha(0)=\infty$.
For $\alpha=1$, one defines $\phi_1(x)=x \log (x)$, which corresponds to the Shannon entropy, arising in the limit\footnote{One conventionally defines the entropies via $(x^\alpha-x)/[\alpha(\alpha-1)]$, in which case the Shannon entropy is obtained in the limit as $\alpha\to 1$. In our case, we use the definition $\phi_\alpha(x) = x^\alpha/[\alpha(\alpha-1)]$ so that some technical conditions (such as $\phi_\alpha'(0)=0$) hold in the proofs. Both definitions lead to the same decoding strategies in \eqref{ren}.} as $\alpha\to 1$.
Observe that $\phi_\alpha$ is \emph{primal valid} for all $\alpha \neq 0$, as
$\phi''_\alpha(x) = x^{\alpha-2}$. This yields the following primal family of renormalizations, which we will index by $\alpha$ rather than $\phi$:
\begin{definition}[Primal Bregman $\alpha$-decoding]\label{alpha-ren-def}
    Fix $\alpha \!\in\! J, k \!\in\! [V]$. 
    The renormalization map 
    $T_\alpha$
    is given for $p \!\in\! \Delta_{\mathrm{sub},k}$ as: $\smash{[T_\alpha (p)]_i \!\!=\! (p_i^{\alpha -\! 1} \!\!+ \nu)^{\frac{1}{\alpha \!-\! 1}}}$ for $i \!\in\!\! [k]$, with $\nu \!\in\! \mathbb{R}$ chosen so that $\smash{\sum\limits_{i\in [k]} \! [T_\alpha \! (p)]_i \!=\! 1.}$
\end{definition}
Note that for $\alpha = 1$, we have $\phi_1'(x) = \log x+1$. 
Hence, \eqref{ren} implies $e^\nu \sum_{i=1}^k p_i = 1$, and we obtain the ``standard'' renormalization: 
$\smash{[T_{1}(p)]_i  =  p_i/(\sum_{j=1}^k p_j)}$, 
for $i \in [k]$.
Therefore, 
\emph{primal Bregman 1-decoding is top-$k$ decoding}, showing how one recovers top-$k$ in our framework.
It turns out that some further values of $\alpha$ also lead to renormalization maps of special interest. 
For any fixed $p$, we let $T_{-\infty}(p) = \liminf\limits_{\alpha\to-\infty} T_\alpha(p)$ and $T_{\infty}(p) = \liminf\limits_{\alpha\to\infty} T_\alpha(p)$, where the limits are entrywise.

\begin{proposition}[Special primal $\alpha$-renormalization maps] \label{prop:alpha-special-cases}

We have the following special instances\footnote{In particular, $T_{-\infty}(p), T_{1.5}(p), T_{2}(p)$ 
do not require solving for $\nu$ in Definition~\ref{alpha-ren-def}, enabling a fast implementation just like in the case of the canonical top-$k$ renormalization.
} of the primal Bregman $\alpha$-renormalization map, defined for all $i \in [k]$ as follows:

    $\smash{[T_{-\infty}(p)]_i= p_i + \mathbbm{1}[i = i^*] \cdot \left(1-\sum_{j=1}^k p_j \right)}$, assuming that $\argmax_i p_i = \{i^*\}$.

    $\smash{[T_{1.5}(p)]_i  = \left( \sqrt{p_i} + \left[\sqrt{ r^2 + k \, \big(1 - s \big)} - r\right]/k \right)^2}$, where $r = \sum_{j=1}^k \sqrt{p_j}$ and $s= \sum_{j=1}^k p_j$.

    $[T_{2}(p)]_i  = p_i + (1 - \sum_{j=1}^k p_j)/k$.

    $[T_{\infty}(p)]_i  = \max \{p_i, \nu\}$, 
     where $\nu \in \R$  is the ``water level'' for which $\sum_{i=1}^k [T_{\infty}(p)]_i = 1$.
\end{proposition}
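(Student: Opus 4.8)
The plan is to verify each of the four special cases by specializing the defining equation $[T_\alpha(p)]_i = (p_i^{\alpha-1} + \nu)^{1/(\alpha-1)}$ from Definition~\ref{alpha-ren-def} at the relevant value of $\alpha$ (or taking the relevant limit) and then solving the scalar normalization constraint $\sum_{i\in[k]}[T_\alpha(p)]_i = 1$ explicitly. I will treat the four claims in increasing order of difficulty: $\alpha=2$ first (which is essentially immediate), then $\alpha=1.5$, then the two limiting cases $\alpha\to\infty$ and $\alpha\to-\infty$.

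For $\alpha=2$: here $\alpha-1=1$, so the renormalization reads $[T_2(p)]_i = p_i + \nu$. Imposing $\sum_i (p_i+\nu)=1$ gives $\nu = (1-\sum_{j=1}^k p_j)/k$, which is exactly the stated formula; I should also note $\nu\ge 0$ since $\sum_j p_j\le 1$ on $\Delta_{\mathrm{sub},k}$, so all entries are nonnegative and the output lies in $\Delta_k$. For $\alpha=1.5$: now $\alpha-1=1/2$, so $[T_{1.5}(p)]_i = (\sqrt{p_i}+\nu)^2$. The constraint $\sum_i(\sqrt{p_i}+\nu)^2=1$ expands to $\sum_i p_i + 2\nu\sum_i\sqrt{p_i} + k\nu^2 = 1$, i.e. $k\nu^2 + 2r\nu + (s-1)=0$ with $r=\sum_j\sqrt{p_j}$ and $s=\sum_j p_j$. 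Solving this quadratic and taking the root that makes $\sqrt{p_i}+\nu\ge 0$ (the larger root, $\nu = (-r+\sqrt{r^2+k(1-s)})/k$, using $1-s\ge 0$) yields precisely $[T_{1.5}(p)]_i=(\sqrt{p_i}+[\sqrt{r^2+k(1-s)}-r]/k)^2$; I will check that this root indeed gives $\sqrt{p_i}+\nu\ge 0$ for every $i$, and that the other root would violate nonnegativity unless $s=1$.

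For the limits, the idea is that as $|\alpha|\to\infty$ the exponent $1/(\alpha-1)\to 0$, so the map concentrates mass in a way governed by which $p_i^{\alpha-1}$ dominate. For $\alpha\to\infty$: write $[T_\alpha(p)]_i = (p_i^{\alpha-1}+\nu_\alpha)^{1/(\alpha-1)}$ where $\nu_\alpha$ normalizes; I would show $\nu_\alpha$ converges and that the limit has the water-filling form $\max\{p_i,\nu\}$ — intuitively, for $p_i\ge\nu$ the term $p_i^{\alpha-1}$ dominates $\nu$ and the $(\cdot)^{1/(\alpha-1)}$ root recovers $p_i$, while for $p_i<\nu$ the constant $\nu^{\alpha-1}$ dominates and the root recovers $\nu$ (here $\nu$ must be read as the limiting normalized level, i.e. $\nu_\alpha^{1/(\alpha-1)}\to\nu$). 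This is the argument already flagged in the text as "converges to water-filling as $\alpha\to\infty$". For $\alpha\to-\infty$: now $\alpha-1<0$, so $p_i^{\alpha-1}$ is \emph{largest} for the \emph{smallest} $p_i$, and $1/(\alpha-1)\to 0^-$; a careful limit analysis (splitting on whether $i$ is the $\argmax$ index $i^*$) shows all the excess mass $1-\sum_j p_j$ piles onto the single largest coordinate $i^*$, giving $[T_{-\infty}(p)]_i = p_i + \mathbbm 1[i=i^*](1-\sum_j p_j)$; the assumption that the argmax is unique is exactly what makes this limit well-defined.

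The main obstacle is the $\alpha\to-\infty$ limit: unlike the other three cases it is not a one-line substitution, the asymptotics of $\nu_\alpha$ as $\alpha\to-\infty$ are delicate (one must track how $\nu_\alpha\to 0$ at the right rate so that $(p_i^{\alpha-1}+\nu_\alpha)^{1/(\alpha-1)}$ has a nontrivial limit coordinate-by-coordinate), and the non-smooth "all mass to one coordinate" behavior requires a genuinely quantitative argument — e.g. showing $(p_{i^*}^{\alpha-1}+\nu_\alpha)^{1/(\alpha-1)}\to p_{i^*} + (1-\sum_j p_j)$ while every other coordinate collapses back to its original value $p_i$. I expect to handle this by reparametrizing $\nu_\alpha = c_\alpha \cdot (\min_j p_j)^{\alpha-1}$ or similar and extracting the leading-order balance in the normalization equation, being careful that $\liminf$ (rather than $\lim$) in the definitions of $T_{\pm\infty}$ gives us a little slack if convergence along the full sequence needed extra care. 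The $\alpha\to\infty$ water-filling limit is the second-hardest but is more standard (it is the classical fact that $\ell_{p}$-type projections tend to the $\ell_\infty$/water-filling projection), so I would either cite this or give the short dominated-term argument above.
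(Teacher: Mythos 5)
Your proposal follows essentially the same route as the paper: explicit substitution and scalar root-solving for $\alpha\in\{1.5,2\}$, and limit analysis for $\alpha\to\pm\infty$ using the normalization constraint. One small calibration worth correcting in the $\alpha\to-\infty$ case: the natural scale for $\nu_\alpha$ is $(\max_j p_j)^{\alpha-1}=p_{i^*}^{\beta}$ (the \emph{smallest} of the $p_j^\beta$ terms for $\beta=\alpha-1<0$, which is the binding lower endpoint of the feasible range for $\nu$), not $(\min_j p_j)^{\alpha-1}$; reparametrizing by the latter would force the rescaled constant to collapse to zero rather than to a nontrivial limit. The paper instead writes $\nu_\beta=(p_{i^*}+\delta+R_\beta)^\beta-p_{i^*}^\beta$ with $R_\beta\in[0,\delta]$, deduces $|\nu_\beta|=O(p_{i^*}^\beta)$, and then uses $(p_{i^*}/p_i)^\beta\to 0$ for $i\neq i^*$ to conclude $\nu_\beta=o(p_i^\beta)$ coordinate-wise, from which $[T_\alpha(p)]_i\to p_i$ for $i\neq i^*$ and $[T_\alpha(p)]_{i^*}\to p_{i^*}+\delta$ by normalization; this is exactly the quantitative balance you anticipated, and it is also where the argmax-uniqueness hypothesis enters, as you correctly flagged. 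For $\alpha\to+\infty$, the paper's substitution $\nu=c^\beta$ and the identity $\tfrac1\beta\log(a^\beta+b^\beta)\to\log\max\{a,b\}$ is the same dominated-term argument you sketch.
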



Along with the primal family, the dual $\alpha$-decoding family can also be defined based on $\phi_\alpha$. Unlike $\alpha$-decoding, the dual Bregman sparse decoding Problem~\ref{sp-br} can be non-convex, as displayed in Figure~\ref{fig:landscape} above. Figure~\ref{fig:dual_bregman_nonconvex} in Appendix~\ref{app:dual_nonconvexity} further demonstrates the nonconvexity of $\mathrm{D}_{\phi_\alpha}$ on the unit square for some $\alpha$. Yet, we can still show that any dual $\alpha$-decoding with $\alpha > 1$ is valid, greedy and $k$-convex:
\begin{restatable}{lemma}{primalsatisfiesass}
    \label{lem:dual-alpha}
    All generator functions $\phi_\alpha$, $\alpha > 1$, are dual-valid and satisfy Assumption \ref{assump:dual-greedy-2}.
\end{restatable}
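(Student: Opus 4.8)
\textbf{Proof proposal for Lemma~\ref{lem:dual-alpha}.}

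The plan is to verify the two requirements separately: first that each $\phi_\alpha$ with $\alpha > 1$ satisfies the dual validity condition (Assumption~\ref{ass:convex_in_second}), and second that it satisfies the monotone-elasticity condition (Assumption~\ref{assump:dual-greedy-2}). Throughout I would use the explicit formulas $\phi_\alpha'(x) = x^{\alpha-1}/(\alpha-1)$, $\phi_\alpha''(x) = x^{\alpha - 2}$, and $\phi_\alpha'''(x) = (\alpha - 2) x^{\alpha - 3}$, all valid on $(0,1]$ for $\alpha > 1$.

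For Assumption~\ref{ass:convex_in_second}, thrice differentiability on $(0,1]$ is immediate from the power-function form, and $\lim_{x\to 0^+} x\phi_\alpha''(x) = \lim_{x\to 0^+} x^{\alpha-1} = 0$ since $\alpha - 1 > 0$. The substantive part is the convexity of $y \mapsto \mathrm{d}_{\phi_\alpha}(x,y)$ on $[x,1]$ for fixed $x \in (0,1]$, and of $y \mapsto \mathrm{d}_{\phi_\alpha}(0,y)$ on $(0,1]$. I would compute $\partial_{yy}\mathrm{d}_\phi(x,y)$ directly: differentiating $\mathrm{d}_\phi(x,y) = \phi(x) - \phi(y) - \phi'(y)(x-y)$ twice in $y$ gives $\partial_{yy}\mathrm{d}_\phi(x,y) = -\phi''(y) - \phi'''(y)(x-y) + \phi''(y) = -\phi'''(y)(x - y) = \phi'''(y)(y - x)$. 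With $\phi_\alpha'''(y) = (\alpha-2)y^{\alpha-3}$, this equals $(\alpha - 2) y^{\alpha - 3}(y - x)$. For $y \ge x$ the factor $(y-x) \ge 0$, so the sign is that of $\alpha - 2$; this is nonnegative when $\alpha \ge 2$ but \emph{negative} when $1 < \alpha < 2$. This is the crux and the main obstacle: a naive second-derivative check fails for $\alpha \in (1,2)$ on part of the region, so I would instead handle $\alpha \in (1,2)$ by a more careful argument. The natural route is the change of variables $\mathrm{d}_{\phi_\alpha}(x,y) = \mathrm{d}_{\phi_\alpha^*}(\phi_\alpha'(y), \phi_\alpha'(x))$ referenced in the paper's discussion of Theorem~\ref{thm:greedy-dual}: since $\phi_\alpha'$ is increasing, convexity of $y \mapsto \mathrm{d}_{\phi_\alpha}(x,y)$ on $[x,1]$ should follow from convexity (and monotonicity behavior) of the conjugate-side divergence composed with the increasing reparametrization $z = \phi_\alpha'(y)$; one checks that $\phi_\alpha^*$ is itself a power function (with exponent $\alpha/(\alpha-1)$, which is negative precisely when $\alpha \in (0,1)$ and exceeds $1$ when $\alpha > 1$), and the composition-with-an-increasing-convex-map argument closes the case. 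The boundary case $\mathrm{d}_{\phi_\alpha}(0,y)$: here $\partial_{yy}\mathrm{d}_\phi(0,y) = \phi_\alpha'''(y)(y - 0) = (\alpha-2)y^{\alpha-2}$, again with a sign issue for $\alpha \in (1,2)$, so I would treat it by noting $\mathrm{d}_{\phi_\alpha}(0,y) = \phi_\alpha(0) - \phi_\alpha(y) - \phi_\alpha'(y)(0 - y) = -\phi_\alpha(y) + y\phi_\alpha'(y) = y^\alpha\bigl(\tfrac{1}{\alpha-1} - \tfrac{1}{\alpha(\alpha-1)}\bigr) = y^\alpha/\alpha$, which is manifestly strictly convex on $(0,1]$ for every $\alpha > 1$ since $\alpha > 1$. (The same clean closed form $\mathrm{d}_{\phi_\alpha}(0,y) = y^\alpha/\alpha$ may also simplify the general-$x$ check.)

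For Assumption~\ref{assump:dual-greedy-2}, I must show that $u(x) = x\phi_\alpha''(x)/\phi_\alpha'(x)$ is nondecreasing on $(0,1]$ and that $\phi_\alpha$ is nondecreasing. The elasticity computes to $u(x) = x \cdot x^{\alpha-2} / \bigl(x^{\alpha-1}/(\alpha-1)\bigr) = (\alpha - 1)$, a constant, hence trivially nondecreasing. That $\phi_\alpha(x) = x^\alpha/[\alpha(\alpha-1)]$ is nondecreasing on $[0,1]$ follows since $\alpha(\alpha-1) > 0$ for $\alpha > 1$ and $x \mapsto x^\alpha$ is increasing. I would also note $\phi_\alpha'(0) = 0$ (as $\alpha - 1 > 0$), which is the hypothesis of Theorem~\ref{thm:greedy-dual} that lets us invoke it. Finally I would assemble: $\phi_\alpha$ is dual valid and satisfies~\ref{assump:dual-greedy-2}, so by Theorem~\ref{thm:greedy-dual} dual $\alpha$-decoding is greedy, and by Theorem~\ref{thm:convex_cost}(2) it is $k$-convex, and by Theorem~\ref{d-ren} the dual renormalization is uniquely defined — giving the full conclusion. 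I would keep the write-up terse, front-loading the closed-form identity $\mathrm{d}_{\phi_\alpha}(0,y) = y^\alpha/\alpha$ and the constancy of the elasticity, and devoting the real work to the $\alpha \in (1,2)$ convexity of $\mathrm{d}_{\phi_\alpha}(x,\cdot)$ on $[x,1]$ via the conjugate reparametrization.
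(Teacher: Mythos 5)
There is a concrete arithmetic error that derails your treatment of dual validity. When you compute $\partial_{yy}\mathrm{d}_\phi(x,y)$, you get $-\phi'''(y)(x-y) = \phi'''(y)(y-x)$, but you have dropped a term. The first derivative is $\partial_y \mathrm{d}_\phi(x,y) = -\phi''(y)(x-y)$, and differentiating this \emph{again} gives
\[
\partial_{yy}\mathrm{d}_\phi(x,y) = -\phi'''(y)(x-y) + \phi''(y) = \phi'''(y)(y-x) + \phi''(y),
\]
not $\phi'''(y)(y-x)$. (Your displayed line repeats the intermediate step for the first derivative rather than differentiating once more.) For $\phi_\alpha$ this is $(\alpha-2)y^{\alpha-3}(y-x) + y^{\alpha-2} = y^{\alpha-3}\bigl(y(\alpha-1)+x(2-\alpha)\bigr)$, which the paper lower-bounds for $y\ge x$ and $\alpha>1$ by $y^{\alpha-3}\bigl(x(\alpha-1)+x(2-\alpha)\bigr)=y^{\alpha-3}x\ge 0$. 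The sign problem you thought you found for $\alpha\in(1,2)$ is an artifact of the missing $\phi''(y)$ term; the direct second-derivative check already works uniformly for all $\alpha>1$.

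As a consequence, the entire conjugate-reparametrization detour you propose for $\alpha\in(1,2)$ is unnecessary, and as written it would not close the gap even if it were needed: for $\alpha\in(1,2)$ the map $y\mapsto\phi_\alpha'(y)=y^{\alpha-1}/(\alpha-1)$ is \emph{concave}, and composing a convex function with an increasing concave reparametrization does not in general yield a convex function, so ``increasing'' alone is not enough and you never actually verify the needed property. Your closed form $\mathrm{d}_{\phi_\alpha}(0,y)=y^\alpha/\alpha$ is correct and is a pleasant shortcut for the $x=0$ case (the paper handles it via the same general formula), and your computation that the elasticity $u(x)=x\phi_\alpha''(x)/\phi_\alpha'(x)\equiv\alpha-1$ is constant (hence nondecreasing) is correct and a slightly more direct route than the paper's, which instead shows $\phi'\phi''+x(\phi'\phi'''-(\phi'')^2)=0$, i.e.\ $u'(x)\phi'(x)^2=0$. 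But you should correct the second-derivative computation and delete the $\alpha\in(1,2)$ special-casing, which is where the proposal currently fails.
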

\vspace{-0.5em}
We give an illustration contrasting primal and dual $\alpha$-decoding for various $\alpha > 1$ in Appendix~\ref{illus-pd}.

\vspace{-0.5em}
\section{Experiments} \label{sec:experiments}
\vspace{-0.5em}

We now illustrate some of the decoding schemes described in our paper in the context of LLMs.
Since our goal is to develop the theoretical foundations of top-$k$ decoding, our aim in this section is simply to illustrate that the performance of our novel decoding schemes can be competitive with standard top-$k$ decoding. In particular, we do not aim to compare or compete with other popular and established decoding methods, which is beyond the scope of our theory-focused paper.

\vspace{-1em}
\subsection{Experimental Setup}
\vspace{-0.5em}
\textbf{Method.} In addition to standard top-$k$ decoding, 
which coincides with the $\alpha = 1$ case of our primal $\alpha$-decoding family described in Section \ref{ex},
we illustrate primal $\alpha$-decoding strategies
for $\alpha = 1.5$ and  $\alpha = 2$. 
These have closed-form renormalization maps that are as fast 
as standard renormalization. 

{\bf Full and partial evaluation.}
Further, we perform two types of experiments: 
(1) For the evaluation of our \emph{full} decoding strategy, we decode by adaptively selecting the optimal sparsity parameter $k^*$
by optimizing our sparse Bregman objective. 
In this approach, we aim to observe the behavior when adaptively choosing $k^*$. Since practical choices of $k^*$ are always upper bounded, we set a maximum $k^* \le k_{\mathrm{max}}:=50$.
(2) In the \emph{partial} evaluation approach, we instead directly evaluate---for each fixed choice of $k$ in the grid  $k \in \{ 5, 10, \ldots, 50\}$---our proposed renormalization strategies along with standard top-$k$ renormalization. 

{\bf Models and benchmarks.} We conduct experiments using the GPT-2 Large \citep{radford2019language} and Llama 3.1 8B \citep{grattafiori2024llama} models.
We evaluate on two benchmarks: (1) open-ended text generation using the WebText test set from the GPT-2 output dataset \citep{openai2019gpt2output},
and (2) grade school math reasoning using the GSM8K Chain-of-Thought benchmark \citep{cobbe2021training}. Additional experiments with larger models, Qwen2.5-14B-Instruct and Phi-3-Medium-4K-Instruct, as well as evaluations on the TriviaQA benchmark, are presented in Appendix~\ref{app:supp_experiments}.

\textbf{Evaluation metrics.} For open-ended text generation, following \citet{chen2025decoding}, we use the first 35 tokens of each WebText test sample as a prompt and generate up to 256 tokens. We evaluate the following standard metrics \citep[see e.g.,][etc]{holtzman2020curious,minh2025turning,chen2025decoding}:

(1) \emph{Perplexity difference}, which measures the perplexity (according to base model $p_{\text {base }}$) of human text compared to that obtained from a decoding strategy $p_{\text {decoding }}$ derived from the base model, where lower is better. This equals 
$
\smash{\mathbb{E}_{X\sim \mathcal{D}}[\mathbb{E}_{Y\sim \mathcal{D(\cdot|X)}} ( p_{\text{base}}(Y \mid X)^{-{1}/{|Y|}})- \mathbb{E}_{Y\sim p_{\text{decoding}}(\cdot|X)} (p_{\text{base}}(Y \mid X)^{-{1}/{|Y|}})]},
$
where $X\sim\mathcal{D}$ is a prompt drawn from the dataset, 
$Y \sim \mathcal{D(\cdot|X)}$ denotes a human-written continuation drawn from the dataset, 
and $Y \sim p_{\text {decoding }}(\cdot \mid X)$ denotes a model-generated continuation using a specific decoding strategy. Here, 
$|Y|$ is the length of the continuation.

(2) \emph{Repetition difference}:
$
\mathbb{E}_{X \sim \mathcal{D}}\left[\mathbb{P}_{Y \sim p_{\text {decoding }}(\cdot \mid X)}\left(\operatorname{rep}(Y)\right)-\mathbb{P}_{Y \sim \mathcal{D}(\cdot \mid X)}(\operatorname{rep}(Y))\right],
$
where $\operatorname{rep}(Y)$ is the event that $Y$ contains two contiguous and identical token spans of length $\ge 2$; lower is better. 


\vspace{-1em}
\subsection{Results}
\vspace{-0.5em}
\textbf{Open-ended text generation.}
Using the \emph{partial} evaluation setup with temperature fixed at 1.0,
Figure~\ref{fig:text_generation} reports the differences in perplexity and repetition frequency between model-generated and human-written text across a range of $k$ values. 
Primal decoding strategies are competitive with top-$k$ in terms of both metrics. 
In particular, $\alpha = 2.0$ has the smallest gaps in perplexity and repetition frequency. 
The marked decrease in repetitiveness for $\alpha\in\{1.5,2\}$ relative to standard top-$k$
($\alpha=1$) is consistent with the theoretical behavior of $\alpha$-Bregman renormalization that we have derived.
For fixed $k$, increasing $\alpha$ shifts the renormalization from upweighting the largest
probabilities to comparatively upweighting the smaller probabilities within the selected top-$k$ set,
thereby increasing sampling diversity.


\textbf{GSM8K dataset.} 
Using the \emph{full} decoding strategy, 
we evaluate the LLaMA 3.1 8B model
using 8-shot CoT prompting.
We test various temperatures, regularization strengths $\lambda \in \{0.01, 0.0001\}$ and primal decoding parameters $\alpha \in \{1.5, 2.0\}$. Results for other settings are in Appendix \ref{app:supp_experiments}. 
To ensure a matched comparison, we run top-$k$ with $k=k^*$ for the Bregman decoding run with the same temperature, $\lambda$, and $\alpha$, rounded to the nearest integer,
see Table \ref{tab:averaged_k*_llama_gsm8k} in Appendix \ref{app:supp_experiments}. 
As seen in Table~\ref{tab:gsm8k_lambda_compare}, 
across all temperature settings, primal decoding with adaptive $k^*$ achieves accuracy comparable to top-$k$. 
At higher temperatures (such as $1.5$), the performance of top-$k$ decoding degrades more rapidly than that of primal decoding.

\begin{figure}[H]
    \centering
    \includegraphics[width=1.0\linewidth]{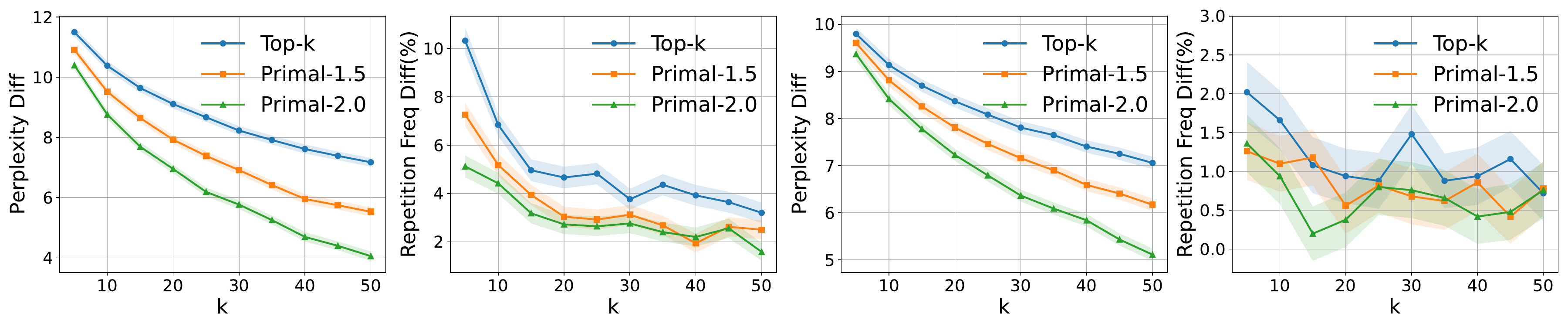}
    \caption{\small Perplexity and repetition frequency differences between generated and human-written text for GPT2-large (left two panels) and LLaMA 3.1 8B (right two panels), for various $k$ values. We show top-$k$ decoding and primal decoding with $\alpha \in \{1.5, 2.0\}$. Standard deviations are estimated using 1000 bootstrap resamples.}
    \label{fig:text_generation}
    \vspace{-1em}
\end{figure}

\begin{table}
  \centering
  {\small
  \setlength{\tabcolsep}{1.5pt}
  \caption{\small
Accuracy on GSM8K for LLaMA 3.1 8B using Bregman primal decoding ($\lambda \in \{0.01, 0.0001\}$, $\alpha \in \{1.5, 2.0\}$) and top-$k$ decoding, for various temperatures. For top-$k$, $k$ equals the averaged $k^*$ from primal decoding with matching temperature, $\lambda$, and $\alpha$. Standard deviations are over 1000 bootstrap resamples.
}
  \label{tab:gsm8k_lambda_compare}
  \resizebox{0.9\textwidth}{!}{%
  \begin{tabular}{c|cc|cc|cc|cc}
    \toprule
    \multirow{2}{*}{Temp}
      & \multicolumn{2}{c|}{$\lambda=0.01$}
      & \multicolumn{2}{c|}{\multirow{2}{*}{Top-$k$ ($\lambda=0.01$)}}
      & \multicolumn{2}{c|}{$\lambda=0.0001$}
      & \multicolumn{2}{c}{\multirow{2}{*}{Top-$k$ ($\lambda=0.0001$)}} \\
      & $\alpha=1.5$ & $\alpha=2.0$ 
      & \multicolumn{2}{c|}{}    
      & $\alpha=1.5$ & $\alpha=2.0$ 
      & \multicolumn{2}{c}{}   \\
 \midrule
0.3 
  & 85.14{\tiny$\pm$0.80} & 84.38{\tiny$\pm$1.00}
  & 83.62{\tiny$\pm$1.02} & 84.69{\tiny$\pm$0.99} 
  & 84.69{\tiny$\pm$0.99} & 84.46{\tiny$\pm$1.00}
  & 85.14{\tiny$\pm$0.98} & 83.62{\tiny$\pm$1.02} 
  \\
0.7 
  & 83.24{\tiny$\pm$1.02} & 81.73{\tiny$\pm$1.06} 
  & 83.78{\tiny$\pm$1.02} & 84.69{\tiny$\pm$0.99} 
  & 82.03{\tiny$\pm$1.06} & 82.03{\tiny$\pm$1.06} 
  & 82.11{\tiny$\pm$1.06} & 83.78{\tiny$\pm$1.02} \\
1.0 
  & 81.20{\tiny$\pm$1.08} & 80.97{\tiny$\pm$1.08} 
  & 81.20{\tiny$\pm$1.08} & 81.20{\tiny$\pm$1.08} 
  & 77.41{\tiny$\pm$1.15} & 77.26{\tiny$\pm$1.15} 
  & 79.23{\tiny$\pm$1.12} & 78.54{\tiny$\pm$1.13} \\
1.5 
  & 79.00{\tiny$\pm$1.12} & 80.06{\tiny$\pm$1.10} 
  & 75.97{\tiny$\pm$1.18} & 75.97{\tiny$\pm$1.18} 
  & 57.24{\tiny$\pm$1.36} & 64.97{\tiny$\pm$1.31} 
  & 43.21{\tiny$\pm$1.36} & 58.53{\tiny$\pm$1.36} \\

    \bottomrule
  \end{tabular}
  }
  }
  \vspace{-1em}
\end{table}

\section{Related work}
\label{rel}
\vspace{-1em}

{\bf Bregman projection.}
\citet{michelot1986finite} considered the Brier score projection problem and derived an efficient algorithm. Later, \citet{shalev2006efficient} revisited the properties of optimal Brier projection, and \citet{duchi2008efficient} gave and analyzed the explicit algorithm that we discuss in what follows. \citet{wang2013projection} simplified and distilled the proof.
\cite{martins2016softmax} further studied the projection
as a method
for generating sparse 
probability predictions in multiclass prediction problems.
\cite{krichene2015efficient,lim16efficient} developed methods for
efficient Bregman projections to the simplex; for a fixed support, these results characterize our primal decoding.
\cite{sander2023fast,ragazzi2024you} developed differentiable variants of top-$k$ decoding.
In contrast to these works, we: (1) consider Bregman projections under $\ell_0$ regularization, and (2) offer, to the best of our knowledge, novel analyses of \emph{dual} Bregman projections.

{\bf $\ell_0$ regularization.}
Regularization via the $\ell_0$-pseudonorm has been studied widely, with various approximate algorithms (based on surrogates, integer programming, branch-and-bound methods, etc.)
 developed 
 for problems ranging from linear regression to more general learning tasks
 \citep[see e.g.,][etc]{blumensath2009iterative,soubies2015continuous,bertsimas2016best,bahmani2013greedy,zhang2019learning,she2021analysis,zhu2023best,JMLR:v22:19-1049,hazimeh2023l0learn,pan2025efficient,essafri2024on,essafri2024exact,essafri2025box}.
In contrast, the algorithms we propose
are exact within numerical precision for the specific class of problems we consider.

{\bf Bregman divergences.}
The properties of Bregman divergences
\citep{bregman1967relaxation}
have been widely studied; see, e.g., \cite{savage1971elicitation,amari2000methods,bauschke2003iterating,grunwald2004game,banerjee2005clustering,pmlr-v80-zhang18l,williamson2016composite,blondel2020learning,nielsen2020elementary}, etc. 
In particular, there are a number of relations between Bregman divergences and their versions with reversed arguments, 
motivated by the fact that convexity in the first parameter allows for minimization, 
making it useful to switch the order of the variables, see e.g.,
\cite{amari2000methods,gruber2023uncertainty} etc.
We both leverage some of these results in our work, and contribute some, to the best of our knowledge, novel proof techniques and insights into the (primal and dual) Bregman geometry.

{\bf LLM decoding.}
There is a vast range of work on LLM sampling (or decoding), see e.g., \cite{welleck2024from} and references therein. 
Classical methods include greedy sampling and beam search.
Sparse sampling methods such as top-$k$ sampling \citep{fan2018hierarchical}
are motivated by intuition that the ``unreliable tail'' of low-probability tokens is mis-estimated \citep{holtzman2020curious}.
In particular, \cite{holtzman2020curious} propose top-$p$ sampling, and  \cite{minh2025turning} propose min-$p$ sampling.
Other sampling methods were proposed in 
\cite{basu2021mirostat, hewitt2022truncation, finlayson2024closing, meister2023locally}.
\cite{chen2025decoding} propose the decoding game,
a two-player game between an LLM and an adversary that distorts the true distribution. They show that certain sparse truncated sampling methods are approximately minimax optimal.
For other approaches to make LLM output probabilities sparse, see e.g., \cite{correia2019adaptively, zhao2019explicit,zhao2023softmax}.
In contrast, our goal is to develop a deep theoretical understanding of top-$k$ decoding, placing it into a broader framework.

{\bf General motivation.}
The motivation for our general approach is two-fold: 
(1) Without sparsity considerations, Bregman divergences closely correspond to proper scoring rules, and are minimized at the true probability distribution, see e.g., \cite{bregman1967relaxation,gneiting2007strictly}. 
This property is highly desirable in probabilistic forecasting and prediction, incentivizing a forecaster to predict the true distribution in order to minimize their loss.
(2) The $\ell_0$-pseudonorm
has been widely argued to both be a 
reasonable measure of sparsity, and to have good properties as a regularizer in certain sparse estimation problems such as sparse regression \citep[see e.g.,][etc]{donoho1994ideal,foster1994risk,birge2001gaussian,hastie2015statistical}.
Combining these two lines of thought
provides the motivation for studying 
$\ell_0$-regularized Bregman divergence minimization.

\vspace{-1em}
\section{Discussion}
\vspace{-1em}

This paper develops a theoretical foundation for top-$k$ decoding. 
We hope that our framework, which rests on the structural pillars of (i) greedy selection and (ii) $k$-convexity, will spur the development of novel theoretically motivated adaptive sparse decoding methods for LLMs and beyond.

\vspace{-0.1em}


\emph{LLM decoding beyond top-$k$.}
Our analysis treats per-step decoding as jointly selecting a support size~$k$ and applying a
renormalization via Bregman projection.
A natural extension that future work may address is to develop analogous optimization-theoretic foundations for other
popular truncation rules, such as top-$p$ and min-$p$, which also follow the template
``truncate a tail, then renormalize''.

\vspace{-0.1em}


\emph{Quantifying and controlling adaptivity.}
A central feature of our framework is that the parameter $k^\star(p)$ is chosen adaptively with $p$.
A useful next step is to characterize how $k^\star(p)$ varies with the hyperparameter $\lambda$ and with properties of the token distribution $p$ (e.g., concentration or tail decay).
For $\alpha$-decoding, our preliminary calculations in stylized settings suggest that there may be natural scaling laws
relating $k^\star$ to $(\alpha,\lambda)$; an interesting open problem is to rigorously derive such laws
under distributional models that reflect LLM next-token probabilities (e.g., peaked or heavy-tailed
regimes), and to translate them into practical guidance for targeting a desired average sparsity level.

\vspace{-0.1em}


\emph{Primal vs.\ dual decoding.}
While we established greedy selection and $k$-convexity for both primal and dual formulations, their relationship is not yet well understood. As we illustrate via several examples, primal and dual $\alpha$-decoders are similar yet subtly different for corresponding~$\alpha$. Thus, more generally quantifying the primal-dual Bregman decoding gap would be of interest.




\vspace{-0.1em}

\emph{Broader evaluation.} Our initial experiments hint that some performance gains over vanilla top-$k$ might be found even amongst $\alpha$-decoders. It is promising and essential to further understand how Bregman decoders could be optimized towards downstream quality/factuality measures in practice.

\vspace{-0.1em}

\emph{Beyond LLM decoding.}
Finally, at a technical level, our main results identify a broad class of nonconvex sparse optimization problems---
$\ell_0$-regularized separable Bregman projection onto the simplex---that nevertheless admit exact,
efficient solutions via greedy support selection and discrete convexity in the support size.
We conjecture that the resulting general-purpose efficient optimization primitive for sparsifying probability vectors could be applicable well beyond LLM decoding.



\section*{Acknowledgments}

This work was partially supported by the NSF, ARO, AFOSR, ONR, and the Sloan Foundation.

\newpage

\bibliography{references}
\newpage


\appendix
\addcontentsline{toc}{section}{Appendices}


\section{Existence and uniqueness of dual Bregman decoding}
\appendixtocentry{Existence and uniqueness of dual Bregman decoding} \label{app:dual_uniqueness}

\begin{theorem}[Uniqueness and formula for dual Bregman renormalization] \label{d-ren}
Fix a dual valid potential $\phi$.
Then,
for any $x \in \Delta_{\mathrm{sub},k}$ with $\sum_i x_i>0$,
the renormalization map $T^*_\phi$ is uniquely defined by:
    \begin{equation*}
        [T^*_\phi(x)]_i = x_i+ \nu^*/f'([T^*_\phi(x)]_i) \quad \text{ for all } i \in [k], \text{ where } \nu^*\in \mathbb{R} \text{ is chosen so that } \sum_{i=1}^k [T^*_\phi(x)]_i = 1.
    \end{equation*}
\end{theorem}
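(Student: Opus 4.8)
The plan is to identify $T^*_\phi(x)$ with the minimizer of $\Phi(\hat p) := \mathrm{D}_\phi(x,\hat p) = \sum_{i=1}^k \mathrm{d}_\phi(x_i,\hat p_i)$ over the simplex $\Delta_k$, show this minimizer exists and is unique, and then read off the implicit equations from its first-order conditions (permutation-equivariance, hence validity as a renormalization map, is then immediate from uniqueness and the separable structure). I would begin by recording the one-dimensional structure of $y \mapsto \mathrm{d}_\phi(x_i,y)$: a direct computation gives $\partial_y \mathrm{d}_\phi(x_i,y) = \phi''(y)(y-x_i)$, and dual validity forces $\phi'' > 0$ on $(0,1]$ --- indeed strict convexity of $y \mapsto \mathrm{d}_\phi(0,y)$ says its derivative $y\phi''(y)$ is strictly increasing on $(0,1]$, which together with $\lim_{y\to 0^+} y\phi''(y) = 0$ makes it positive there. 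Hence $\mathrm{d}_\phi(x_i,\cdot)$ is strictly decreasing on $(0,x_i)$, strictly increasing on $[x_i,1]$ (so it has a unique minimum at $x_i$), and strictly convex on $[x_i,1]$ by Assumption~\ref{ass:convex_in_second}.

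The heart of the proof is a localization step: every minimizer $\hat p^*$ of $\Phi$ over $\Delta_k$ must lie in the monotone set $M := \{\hat p \in \Delta_k : \hat p_i \ge x_i \text{ for all } i\}$ and in the relative interior of $\Delta_k$. This is exactly what allows us to work under the relaxed form of Assumption~\ref{ass:convex_in_second}, which does not assert convexity of $\mathrm{d}_\phi(x_i,\cdot)$ on $[0,x_i]$ --- a region where, e.g., the $\alpha$-Bregman generators are nonconvex for $\alpha > 2$. The argument is an exchange argument: if $\hat p_i < x_i$ for some $i$, then since $\sum_j x_j \le 1 = \sum_j \hat p_j$ the total surplus $\sum_j (\hat p_j - x_j)_+$ dominates the total deficit, so some coordinate $j$ has $\hat p_j > x_j$; moving a mass $\delta = \min(x_i - \hat p_i,\, \hat p_j - x_j)$ from $j$ to $i$ strictly decreases $\mathrm{d}_\phi(x_i,\cdot)$ (climbing its strictly decreasing branch toward the minimizer $x_i$) and strictly decreases $\mathrm{d}_\phi(x_j,\cdot)$ (descending its strictly increasing branch toward $x_j$), hence strictly decreases $\Phi$. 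Since such a step turns a deficit or surplus coordinate into an exactly-matched one without creating new deficits, and a surplus coordinate persists as long as any deficit remains, iterating at most $k$ times produces a point of $M$ with strictly smaller objective; thus no minimizer violates $M$. Excluding the simplex boundary is then quick: if $\hat p_i^* = 0$ for some $i$, monotonicity forces $x_i = 0$, the KKT stationarity for the constraint $\hat p_i \ge 0$ forces the multiplier $\nu^* \le 0$ (using $\partial_y \mathrm{d}_\phi(0,0^+) = \lim_{y\to 0^+} y\phi''(y) = 0$), while any surplus coordinate $j$ --- present whenever $\sum_j x_j < 1$ --- forces $\nu^* = \phi''(\hat p_j^*)(\hat p_j^* - x_j) > 0$, a contradiction; and if $\sum_j x_j = 1$ then $\hat p^* = x$ trivially and the claimed equations hold with $\nu^* = 0$.

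With the localization in hand, existence and uniqueness follow routinely: $M$ is nonempty (take $\hat p_i = x_i + (1 - \sum_j x_j)/k$), convex and compact; $\Phi$ is continuous on $M$ by the smoothness of $\phi$ on $(0,1]$ together with $\lim_{x\to 0^+} x\phi''(x) = 0$; and on $M$ it is a sum of functions each strictly convex on $[x_i,1]$, hence strictly convex, so it has a unique minimizer $\hat p^* = T^*_\phi(x)$, which by the previous step is the unique global minimizer over $\Delta_k$. Since $\hat p^*$ lies in the relative interior of $\Delta_k$, the only binding constraint is $\sum_i \hat p_i = 1$, and stationarity yields a common multiplier $\nu^*$ with $\partial_y \mathrm{d}_\phi(x_i,\hat p_i^*) = \phi''(\hat p_i^*)(\hat p_i^* - x_i) = \nu^*$ for each $i$; writing $f = \phi'$ so that $f' = \phi''$, this is precisely $[T^*_\phi(x)]_i = x_i + \nu^*/f'([T^*_\phi(x)]_i)$, with $\nu^*$ pinned down by $\sum_i [T^*_\phi(x)]_i = 1$. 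Strict convexity guarantees this stationary tuple is the unique solution of the system, which completes the proof.

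The main obstacle is the localization step --- making the exchange argument fully rigorous (strictness of each decrease, correct accounting so that the process terminates inside $M$, and the handling of coordinates with $x_i = 0$ as well as the degenerate case $\sum_j x_j = 1$). Once the optimum is confined to $M \cap \mathrm{relint}(\Delta_k)$, the problem becomes a smooth strictly convex program and the remainder is standard.
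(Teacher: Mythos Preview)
Your proof is correct and shares the paper's skeleton (localize to the region $M = \{\hat p_i \ge x_i\}$, then exploit strict convexity there) but implements both steps differently. The paper starts from KKT: it shows $\nu^* > 0$ by exhibiting a surplus coordinate, infers $\hat p_i > x_i$ from $\phi''(\hat p_i)(\hat p_i - x_i) = \nu^* > 0$, excludes boundary points via a case split on whether $\phi''(0)$ is finite, and then \emph{constructs} the solution by showing that for each fixed $\nu \in (0,M]$ the root $y_i(\nu)$ of $\phi''(y)(y-x_i)=\nu$ is unique on $[x_i,1]$ and that $G(\nu):=\sum_i y_i(\nu)$ is continuous and strictly increasing with $G(0^+)<1\le G(M)$, pinning down $\nu^*$. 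You instead localize via an elementary mass-exchange (sidestepping the $\phi''(0)$ case split), obtain existence and uniqueness in one stroke from compactness and strict convexity on $M$, and read off the implicit equations from interior KKT only at the end.

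Your route is shorter for the bare statement; the paper's constructive route has the side benefit that the monotonicity of $y_i(\nu)$ and $G(\nu)$ it establishes is reused downstream (Lemma~\ref{lem:sensitivity_dual} in the dual greedy proof, and the nested binary-search algorithm). One point to tighten: for indices with $x_i = 0$, Assumption~\ref{ass:convex_in_second} gives strict convexity of $\mathrm{d}_\phi(0,\cdot)$ only on $(0,1]$, so to claim $\Phi$ is strictly convex on all of $M$ you should note that its derivative $y\phi''(y)$ extends continuously to $0$ by the dual-validity limit and is strictly increasing from there.
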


\begin{proof}

First, assume without loss of generality that $0 < \sum_{i\in [k]} x_i < 1$. 
Otherwise, 
if $\sum_{i\in [k]} x_i = 1$ then $x \in \Delta_k$, so the unique unconstrained optimum, which is at $x$ by the standard property of Bregman divergences, is also the unique optimum of our constrained projection problem.

Note that Slater's condition is satisfied for this projection problem as we are optimizing over the simplex (whose relative interior is nonempty). Therefore, in this differentiable problem, its optimal solutions can be characterized via its KKT conditions.

Introduce a Lagrange multiplier $\nu\in\R$ for the simplex constraint, and Lagrange multipliers $(\lambda_i)_{i \in [k]}$ for the nonnegativity constraints. Then, the Lagrangian is as follows:
\[
\mathcal L(\hat p,\nu)
   =
   \sum_{i=1}^k\bigl[\phi(x_i)-\phi(\hat p_i)-\phi'(\hat p_i)\,(x_i-\hat p_i)\bigr]
   -\nu\Bigl(\sum_{i=1}^k\hat p_i-1\Bigr) - \sum_{i=1}^k \lambda_i \hat{p}_i.
\]
Here, $\lambda_i \ge 0$ for all $i$, and by complementary slackness, at optimality $\lambda_i=0$ whenever $\hat{p}_i > 0$.

For each $i \in [k]$, the stationarity condition reads (except possibly when $\hat{p}_i = 0$, where the second derivative could be infinite):
\[
0
=\frac{\partial \mathcal L}{\partial \hat p_i}
=-\phi''(\hat p_i)\,(x_i-\hat p_i)-\nu - \lambda_i \iff \phi''(\hat{p}_i)(\hat{p}_i - x_i) = \nu + \lambda_i.
\]

In particular, for each coordinate $i$ for which the optimal $\hat p_i\in(0,1)$, the stationarity condition is:
\begin{equation}\label{eq:implicit}
\phi''(\hat{p}_i) (\hat{p}_i - x_i) = \nu \implies 
\hat p_i
   =x_i+\frac{\nu}{\phi''(\hat p_i)}
   =x_i+\frac{\nu}{f'(\hat p_i)} .
\end{equation}

Now, we show that $\nu > 0$. Indeed, observe that there must be at least one index $i$ for which $\hat{p}_i > x_i$. If that was not the case, we would get $\sum_{i\in [k]} \hat{p}_i \le \sum_{i\in [k]} x_i < 1$ by our assumption, contradicting that $\hat{p} \in \Delta_k$.
In particular, then, $\hat{p}_i > x_i \ge 0$, and therefore we have $\phi''(\hat{p}_i)(\hat{p}_i-x_i) = \nu$. Since $\phi''(\hat{p}_i) > 0$ and $\hat{p}_i-x_i > 0$, we thus conclude that $\nu > 0$.

Having shown that $\nu > 0$, we now proceed to show that all $\hat{p}_i > 0$ at optimality. 
Note that $\frac{\partial }{\partial  y} \mathrm{d}_\phi(x, y)=\phi''(y)\,(y-x)$ for $y>0$.
We will now consider two cases: 
\begin{enumerate}[nosep]
    \item $\phi''(0)$ is finite;
    \item  $\lim_{y \to 0}\phi''(y) = +\infty$.
\end{enumerate}

If $\phi''(0)$  is finite, $\hat{p}_i > 0$ for all $i$. Indeed, suppose that was not the case, and $\hat{p}_i = 0$ for some $i$. Then we would have: $\phi''(0) (0 - x_i) = \nu + \lambda_i$, or equivalently, $\phi''(0) \cdot x_i + \nu + \lambda_i = 0$. Each of the three terms is nonnegative, and $\nu > 0$, so we arrive at a contradiction.

Next, consider the case in which  $\lim_{y \to 0}\phi''(y) = +\infty$.
Then, 
  $\lim_{y \to 0} \frac{\partial }{\partial  y} \mathrm{d}_\phi(x, y) = -\infty$ for all $x \in (0, 1]$.
Then, 
since 
  $\lim_{y \to 0} \frac{\partial }{\partial  y} \mathrm{d}_\phi(x, y) = -\infty$ for all $x \in (0, 1]$,
for any $i$ such that $x_i > 0$, setting $\hat{p}_i = 0$ would lead to 
$\nu = -\infty$,
hence necessarily $\hat{p}_i > 0$.
On the other hand, for any $i$ for which $x_i = 0$, 
since $\lim_{y \to 0} y \phi''(y) =0$,
setting $\hat{p}_i = 0$ 
would lead to 
$\nu = 0$, which is a contradiction.

In all cases, the optimal $\hat{p}$ is in the strict interior of the simplex, so it suffices to solve~\eqref{eq:implicit}
over this range. 
To show that the solution exists and is unique, we collect together the following information about 
$\Psi$ from \eqref{psi} with 
$\Psi(x, y, \nu) := \phi''(y)(y-x) - \nu$ for all $x,y,\nu$. 
Then, for a fixed $\nu$, \eqref{eq:implicit}
is equivalent to solving
$\Psi(x_i, \hat p_i, \nu) = 0$.
First, consider $x>0$. Then, we have the following:
\begin{enumerate}
    \item Since the map $y\mapsto \mathrm{d}_\phi(x, y)$ is strictly convex for $y \in[x,1]$,
    it follows that  $\frac{\partial }{\partial  y} \mathrm{d}_\phi(x, y)=\Psi(x, y, 0)$ is strictly increasing for $y \in[x,1]$, and so is $\Psi(x, y, \nu)$.
    \item We have $\Psi(x, x, \nu) = -\nu \le 0$. Further, 
    $\Psi(x, 1, \nu) = \phi''(1)(1-x) -\nu \ge 0$, whenever $\nu\le \phi''(1)(1-x)$.
\end{enumerate}
Hence, the map $y\mapsto \Psi(x, y, \nu)$ has a unique zero on the interval $[x,1]$, as long as $0<\nu\le \phi''(1)(1-x)$.

Next, consider $x=0$, in which case we need to solve the equation 
$\phi''(y)y = \nu$.
Then, we have the following:
\begin{enumerate}
    \item Since the map $y\mapsto \mathrm{d}_\phi(0, y)$ is strictly convex for $y \in(0,1]$,
    it follows that  $\frac{\partial }{\partial  y} \mathrm{d}_\phi(0, y)=\Psi(0, y, 0) = \phi''(y)y$ is strictly increasing for $y \in(0,1]$, and so is $\Psi(0, y, \nu)$. 
    \item 
     By assumption,
    $\lim_{y \to 0^+} y \phi''(y) = 0$, hence 
    we have $\lim_{y \to 0^+} \Psi(x, x, \nu) = -\nu \le 0$. Further, 
    $\Psi(0, 1, \nu) = \phi''(1)(1-x) -\nu \ge 0$, whenever $\nu\le \phi''(1)$.
\end{enumerate}
Hence, the map $y\mapsto \Psi(0, y, \nu)$ has a unique zero on the interval $(0,1]$, as long as $0<\nu\le \phi''(1)$.

Now define 
$M:= \min_i \phi''(1)(1-x_i ) = \phi''(1)(1-\max_i x_i)$.
Since by assumption $\sum_i x_i <1$, it follows that $M>0$.
From the above analysis, it follows that, 
as long as $\nu \in (0,M]$,
for each $i$,
the equation 
\(\phi''(y_i)(y_i-x_i)=\nu\).
has a unique solution
$y_i(\nu) \in (x_i,1]$. 

Furthermore, as we establish in Lemma~\ref{lem:sensitivity_dual}, the map
$\nu\mapsto y_i(\nu)$ is strictly increasing for $\nu>0$, also owing to the assumed second-argument convexity of $\mathrm{d}_\phi$.
In particular, define \(G(\nu)=\sum_{i=1}^k y_i(\nu)\) for $\nu>0$; then $G$ is continuous and strictly increasing, and satisfies $\lim_{\nu\to 0}G(\nu)=\sum_i x_i < 1$ and
$G(M) \ge y_{i^*}(M) = 1$, where $i^*$ is any index achieving the maximum among the coordinates of $x$.  
Hence there is a unique $\nu^*\in (0,M]$ with $G(\nu^*)=1$.
Setting $\hat p_i=y_i(\nu^*)$ yields a vector in $\Delta_k$ that
satisfies the KKT stationarity.

Finally, note that the solution $\hat{p}$ that we just identified is unique. Indeed, we have earlier excluded boundary solutions from consideration, and then further excluded any solutions in which $\hat{p}_i < x_i$ for any $i \in [k]$; thus, it suffices to recall that the Bregman objective is assumed to be strictly convex in the interior of the region of the simplex given by $\{\hat{p} \in \Delta_k : \hat{p}_i \ge x_i \text{ for all } i \in [k]\}$, thus concluding the proof.
\end{proof}

\section{Proof of the primal greedy property in Theorem \ref{thm:greedy_optimal}}
\appendixtocentry{Proof of the primal greedy property in Theorem \ref{thm:greedy_optimal}}

\label{app:proof_greedy_primal}

\label{sec:proof-primal-greedy}
We will first fix some notations. Henceforth, we will assume that the vector $p$ has been sorted, i.e., $p_1 \ge p_2\ge \ldots\ge p_V$. For any subset $Q=\{i_1,\ldots,i_k\}\subseteq [V]$ of size $k$, let $Q^\mathsf{c} = [V]\backslash Q$. 
Let $p_Q$ denote the sub-probability vector with the entries of $p$ whose indices are in $Q$. 
We define the loss $L(Q)$ as
\begin{align}
\label{eq:loss-primal-form}
    L(Q)  = \min_{\hat{p}\in \Delta_k}  \mathrm{D}_\phi((\hat{p}, 0_{V-k}), (p_{Q}, p_{Q^\mathsf{c}})) = \min_{\hat{p}\in \Delta_k} \sum_{j=1}^k \mathrm{d}_\phi(\hat{p}_j, p_{i_j}) + S_{Q^{\mathsf{c}}}.
\end{align}
Here, $S_{Q^{\mathsf{c}}} = \sum_{j\notin Q} \mathrm{d}_\phi(0, p_{j})$. To prove Theorem \ref{thm:greedy_optimal}, we will show that $L(S')\ge L(S)$ for any $S'\subseteq [V]$ of size $k$, where $S=[k]$ consists of the top-$k$ indices. We will further show that strict inequality always holds if $p_{S'}\neq p_{S}$. To do this, we proceed in three steps: (1) We first simplify the form of the loss function $L(Q)$ in Lemma \ref{lem:simplified-cost-primal}, (2) For any two subsets $S, S'$, we decompose the loss difference $L(S') - L(S)$ into three terms in Lemma \ref{lem:loss-diff-primal}, (3) We individually analyze each of the terms in this decomposition and prove they are non-negative.

\subsection{Decomposing the Bregman cost function on subsets}

\begin{lemma}
    \label{lem:simplified-cost-primal}
    For any $Q=\{i_1,i_2,\ldots i_k\}\subseteq[V]$ of size $k$, the loss function as defined in \eqref{eq:loss-primal-form} simplifies to:
    \begin{align}
    L(Q) = \sum_{j=1}^k [\phi([T_{Q}(p)]_j) - \phi'(p_{i_j})[T_{Q}(p)]_j] + S_{[V]}- |Q| \phi(0).
\end{align}
\end{lemma}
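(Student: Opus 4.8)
\textbf{Proof plan for Lemma~\ref{lem:simplified-cost-primal}.}

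The plan is to start from the definition~\eqref{eq:loss-primal-form}, namely
$L(Q) = \min_{\hat p \in \Delta_k} \sum_{j=1}^k \mathrm{d}_\phi(\hat p_j, p_{i_j}) + S_{Q^{\mathsf c}}$,
and simply substitute the known minimizer. By the primal renormalization formula~\eqref{ren}, the unconstrained-in-value but simplex-constrained minimizer of $\sum_j \mathrm{d}_\phi(\hat p_j, p_{i_j})$ over $\Delta_k$ is exactly $\hat p_j = [T_Q(p)]_j$, where $T_Q$ denotes the primal renormalization applied to the subvector $p_Q$ (this is well-defined and optimal by Assumption~1 together with the cited results of \cite{krichene2015efficient,lim16efficient}). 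So the first step is to plug $\hat p_j = [T_Q(p)]_j$ into $\mathrm{d}_\phi(\hat p_j, p_{i_j}) = \phi(\hat p_j) - \phi(p_{i_j}) - \phi'(p_{i_j})(\hat p_j - p_{i_j})$ and expand.

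The second step is bookkeeping of the terms that do not involve $T_Q(p)$. Expanding the Bregman term gives
$\sum_{j=1}^k \big[\phi([T_Q(p)]_j) - \phi'(p_{i_j})[T_Q(p)]_j\big] + \sum_{j=1}^k \big[\phi'(p_{i_j}) p_{i_j} - \phi(p_{i_j})\big]$.
Then I would rewrite the leftover sum $\sum_{j=1}^k [\phi'(p_{i_j})p_{i_j} - \phi(p_{i_j})]$ by noting that $\phi'(p_{i_j})p_{i_j} - \phi(p_{i_j}) = -\mathrm{d}_\phi(0, p_{i_j}) + \phi(0)$, since $\mathrm{d}_\phi(0, p_{i_j}) = \phi(0) - \phi(p_{i_j}) - \phi'(p_{i_j})(0 - p_{i_j}) = \phi(0) - \phi(p_{i_j}) + \phi'(p_{i_j}) p_{i_j}$. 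Hence the leftover sum equals $k\phi(0) - \sum_{j=1}^k \mathrm{d}_\phi(0, p_{i_j})$. Adding this to $S_{Q^{\mathsf c}} = \sum_{j \notin Q} \mathrm{d}_\phi(0, p_j)$ and recognizing that $\sum_{j=1}^k \mathrm{d}_\phi(0, p_{i_j}) + \sum_{j \notin Q}\mathrm{d}_\phi(0, p_j) = \sum_{j=1}^V \mathrm{d}_\phi(0, p_j) =: S_{[V]}$, the whole thing collapses to
$L(Q) = \sum_{j=1}^k [\phi([T_Q(p)]_j) - \phi'(p_{i_j})[T_Q(p)]_j] + S_{[V]} - |Q|\phi(0)$,
which is the claimed identity (using $|Q| = k$).

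There is essentially no obstacle here: this is a direct computation, and the only thing to be slightly careful about is that $\phi(0)$ and $\phi'(p_{i_j})p_{i_j}$ are finite under primal validity ($\phi$ is continuously differentiable on $[0,1]$), so the rearrangement of the sums into $S_{[V]}$ is legitimate. The one ``content'' input is that the minimizer in~\eqref{eq:loss-primal-form} is indeed $T_Q(p)$ given by~\eqref{ren}; I would cite this from the discussion following Assumption~1 rather than reprove it. Everything after that is algebraic regrouping of Bregman terms into the support-independent constant $S_{[V]}$ plus the support-dependent part $\sum_j [\phi([T_Q(p)]_j) - \phi'(p_{i_j})[T_Q(p)]_j]$, the latter being exactly the quantity that subsequent lemmas (Lemma~\ref{lem:loss-diff-primal}) will compare across different supports $S, S'$.
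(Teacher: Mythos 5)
Your approach matches the paper's proof exactly: substitute the minimizer $T_Q(p)$ given by \eqref{ren}, expand the Bregman term, and regroup the leftover pieces $\phi'(p_{i_j})p_{i_j} - \phi(p_{i_j})$ into $S_Q$ plus a $\phi(0)$ correction so they merge with $S_{Q^{\mathsf c}}$ to give $S_{[V]}$. However, there is a sign slip in your intermediate identity: from your own (correct) expansion $\mathrm{d}_\phi(0, p_{i_j}) = \phi(0) - \phi(p_{i_j}) + \phi'(p_{i_j}) p_{i_j}$ it follows that $\phi'(p_{i_j})p_{i_j} - \phi(p_{i_j}) = \mathrm{d}_\phi(0, p_{i_j}) - \phi(0)$, not $-\mathrm{d}_\phi(0, p_{i_j}) + \phi(0)$ as you wrote, and consequently the leftover sum is $\sum_{j=1}^k \mathrm{d}_\phi(0, p_{i_j}) - k\phi(0) = S_Q - |Q|\phi(0)$, not $k\phi(0) - \sum_j \mathrm{d}_\phi(0,p_{i_j})$. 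With the sign fixed, adding $S_{Q^{\mathsf c}}$ gives $S_Q + S_{Q^{\mathsf c}} - |Q|\phi(0) = S_{[V]} - |Q|\phi(0)$ as claimed; with the sign as written, the final collapse does not actually follow, so you should correct that one line.
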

\begin{proof}
Observe that:
\begin{align*}
    &L(Q) =\mathrm{D}_\phi((\hat{p}_Q, 0_{V-k}), (p_{Q}, p_{Q^\mathsf{c}})) = \sum_{j=1}^k \mathrm{d}([T_{Q}(p)]_j, p_{i_j}) + S_{Q^{\mathsf{c}}}\\
    &= \sum_{j=1}^k [\phi([T_{Q}(p)]_j) - \phi(p_{i_j}) - \phi'(p_{i_j})([T_{Q}(p)]_j- p_{i_j})] + S_{Q^{\mathsf{c}}}\\
    &= \sum_{j=1}^k [\phi([T_{Q}(p)]_j) - \phi'(p_{i_j})[T_{Q}(p)]_j] + \sum_{j=1}^k [ -\phi(p_{i_j}) + f(p_{i_j})p_{i_j}] + S_{Q^{\mathsf{c}}}.
\end{align*}
This further equals
\begin{align*}
    & \sum_{j=1}^k [\phi([T_{Q}(p)]_j) - \phi'(p_{i_j})[T_{Q}(p)]_j] + \sum_{j\in Q} \mathrm{d}_\phi(0, p_j) + S_{Q^{\mathsf{c}}}- |Q| \phi(0)\\
     &= \sum_{j=1}^k [\phi([T_{Q}(p)]_j) - \phi'(p_{i_j})[T_{Q}(p)]_j] + S_{Q} + S_{Q^{\mathsf{c}}} - |Q| \phi(0) \\
     & = \sum_{j=1}^k [\phi([T_{Q}(p)]_j) - \phi'(p_{i_j})[T_{Q}(p)]_j] + S_{[V]}- |Q| \phi(0).
\end{align*}
This finishes the proof. 
\end{proof}

Let $T_Q(p)$ denote a minimizer of the above loss $L(Q)$, i.e.,
\begin{align*}
   T_Q(p) \in \arg\min_{\hat{p}\in \Delta_k}  \mathrm{D}_\phi((\hat{p}, 0_{V-k}), (p_{Q}, p_{Q^{\mathsf{c}}})) \overset{(a)}{=} \arg\min_{\hat{p}\in \Delta_k} \sum_{j=1}^k \mathrm{d}_\phi(\hat{p}_j, p_{i_j}).
\end{align*}
Note that $(a)$ holds above as the term $S_{Q^{\mathsf{c}}}$ does not play any role in the location of the minimizer. 
However, it does contribute to the final loss $L(Q)$. 
Also, as the divergence is separable, once we have selected a subset $Q$, the ordering of its elements does not matter for the calculation of the above loss and minimizer. 
Thus, without loss of generality, we may assume $i_1<i_2<\ldots<i_k$ for $k\in [V]$.
By forming the Lagrangian and differentiating it, we obtain the primal thresholding from \eqref{ren}:
\begin{align}
    \label{eq:solution-primal}
    \phi'([T_{Q}(p)]_j) = \phi'(p_{i_j})+\nu_Q~\forall~j\in[k].
\end{align}
Here, $\nu_Q$ is chosen such that $\sum_{j=1}^k [T_{Q}(p)]_j=1$.

\begin{lemma}
\label{lem:loss-diff-primal}
    Let $S=\{i_1,\ldots,i_k\},S'=\{i_1',\ldots,i_k'\}\subseteq [V]$ and $T_S(p)$ and $T_{S'}(p)$ be the corresponding minimizers. Then, the following decomposition holds:
    \begin{align}
    \label{eq:loss-diff-primal}
    L(S') - L(S) = \mathrm{D}_\phi(T_{S'}(p), T_{S}(p)) &+ \sum_{j=1}^k \left([T_{S'}(p)]_j - [T_{S}(p)]_j\right)\left(\phi'([T_{S}(p)]_j) - \phi'(p_{i_j})\right) \nonumber\\
    & + \sum_{j=1}^k [T_{S'}(p)]_j\left(\phi'(p_{i_j}) - \phi'(p_{i_j'})\right).
\end{align}
\end{lemma}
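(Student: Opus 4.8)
\textbf{Proof plan for Lemma~\ref{lem:loss-diff-primal}.}

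The plan is to use the simplified cost formula from Lemma~\ref{lem:simplified-cost-primal} and then perform a direct algebraic telescoping that introduces the Bregman divergence between the two renormalized vectors. First I would write, using Lemma~\ref{lem:simplified-cost-primal} (and noting $|S| = |S'| = k$, so the $-k\phi(0)$ and $S_{[V]}$ terms cancel in the difference),
\begin{equation*}
L(S') - L(S) = \sum_{j=1}^k\Bigl[\phi([T_{S'}(p)]_j) - \phi'(p_{i_j'})[T_{S'}(p)]_j\Bigr] - \sum_{j=1}^k\Bigl[\phi([T_{S}(p)]_j) - \phi'(p_{i_j})[T_{S}(p)]_j\Bigr].
\end{equation*}
The goal is to massage the right-hand side into the three claimed terms. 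The natural move is to add and subtract the ``cross'' quantities $\phi'([T_S(p)]_j)[T_{S'}(p)]_j$ and $\phi([T_S(p)]_j)$ coordinatewise, so that the definition $\mathrm{d}_\phi(x,y) = \phi(x) - \phi(y) - \phi'(y)(x-y)$ can be recognized with $x = [T_{S'}(p)]_j$ and $y = [T_S(p)]_j$.

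Concretely, I would rewrite $\phi([T_{S'}(p)]_j) - \phi([T_{S}(p)]_j)$ as $\mathrm{d}_\phi([T_{S'}(p)]_j,[T_S(p)]_j) + \phi'([T_S(p)]_j)\bigl([T_{S'}(p)]_j - [T_S(p)]_j\bigr)$, and substitute this into the summand. Summing the $\mathrm{d}_\phi$ pieces over $j$ gives exactly $\mathrm{D}_\phi(T_{S'}(p), T_S(p))$ by separability. The leftover linear-in-$\phi'$ terms, after collecting, are
\begin{equation*}
\sum_{j=1}^k \Bigl[ \phi'([T_S(p)]_j)\bigl([T_{S'}(p)]_j - [T_S(p)]_j\bigr) - \phi'(p_{i_j'})[T_{S'}(p)]_j + \phi'(p_{i_j})[T_S(p)]_j \Bigr].
\end{equation*}
Now I would split the term $\phi'(p_{i_j'})[T_{S'}(p)]_j$ by adding and subtracting $\phi'(p_{i_j})[T_{S'}(p)]_j$: the part $-\phi'(p_{i_j})[T_{S'}(p)]_j + \phi'(p_{i_j})[T_S(p)]_j = -\phi'(p_{i_j})([T_{S'}(p)]_j - [T_S(p)]_j)$ combines with the first term to produce $\sum_j ([T_{S'}(p)]_j - [T_S(p)]_j)(\phi'([T_S(p)]_j) - \phi'(p_{i_j}))$, the second claimed term; and the remaining part $-\phi'(p_{i_j'})[T_{S'}(p)]_j + \phi'(p_{i_j})[T_{S'}(p)]_j = [T_{S'}(p)]_j(\phi'(p_{i_j}) - \phi'(p_{i_j'}))$ gives the third claimed term. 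This establishes the identity.

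This lemma is purely an algebraic rearrangement, so I do not anticipate a genuine obstacle here; the only mild care needed is bookkeeping of which index ($i_j$ versus $i_j'$) each $\phi'$ evaluation is attached to, and confirming that the constant terms ($S_{[V]}$, $|S|\phi(0)$) from Lemma~\ref{lem:simplified-cost-primal} indeed cancel because $|S| = |S'|$. The real work — showing each of the three terms is nonnegative, using $\mathrm{D}_\phi \geq 0$, the primal stationarity equations~\eqref{eq:solution-primal} (which make the second term's factor $\phi'([T_S(p)]_j) - \phi'(p_{i_j}) = \nu_S$ constant, so that term reduces to $\nu_S \sum_j([T_{S'}(p)]_j - [T_S(p)]_j) = \nu_S(1-1) = 0$), and the monotone rearrangement structure of $S = [k]$ being the top-$k$ indices for the third term — is deferred to the subsequent steps of the proof of Theorem~\ref{thm:greedy_optimal}, not to this lemma.
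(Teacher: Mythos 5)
Your proof is correct and follows essentially the same approach as the paper: starting from the simplified cost formula of Lemma~\ref{lem:simplified-cost-primal}, noting the constants cancel since $|S|=|S'|$, extracting the Bregman divergence $\mathrm{D}_\phi(T_{S'}(p),T_S(p))$ by writing $\phi([T_{S'}(p)]_j) - \phi([T_S(p)]_j)$ via the defining identity of $\mathrm{d}_\phi$, and then regrouping the leftover linear terms to obtain the second and third sums. The paper's intermediate regrouping is phrased slightly differently but is algebraically identical to your add-and-subtract of $\phi'(p_{i_j})[T_{S'}(p)]_j$.
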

\begin{proof}
 We have from Lemma \ref{lem:simplified-cost-primal} that
\begin{align*}
  &L(S') - L(S) 
  =
  \sum_{j=1}^k [\phi([T_{S'}(p)]_j) - \phi'(p_{i_j'})[T_{S'}(p)]_j] - \sum_{j=1}^k [\phi([T_{S}(p)]_j) - \phi'(p_{i_j})[T_{S}(p)]_j]\\
  =&\sum_{j=1}^k [\phi([T_{S'}(p)]_j) - \phi([T_{S}(p)]_j)] + \phi'(p_{i_j})[T_{S}(p)]_j - \phi'(p_{i_j'})[T_{S'}(p)]_j.
\end{align*}
This further equals
\begin{align*}
  & \sum_{j=1}^k [\phi([T_{S'}(p)]_j) - \phi([T_{S}(p)]_j) - \phi'([T_{S}(p)]_j)([T_{S'}(p)]_j - [T_{S}(p)]_j)] \\
  &\hspace{2cm}+ \sum_{j=1}^k \left([T_{S'}(p)]_j\left[\phi'([T_{S}(p)]_j) - \phi'(p_{i_j'})\right] - [T_{S}(p)]_j\left[\phi'([T_{S}(p)]_j) - \phi'(p_{i_j})\right]\right)\\
  =&~\mathrm{D}_\phi(T_{S'}(p), T_{S}(p)) + \sum_{j=1}^k \left([T_{S'}(p)]_j - [T_{S}(p)]_j\right)\left(\phi'([T_{S}(p)]_j) - \phi'(p_{i_j})\right)\\ 
  &\hspace{2cm}+ \sum_{j=1}^k [T_{S'}(p)]_j\left(\phi'(p_{i_j}) - \phi'(p_{i_j'})\right).
  \end{align*}
\end{proof}


Now, returning to our proof, suppose $S=[k]$ and $S'=\{i_1',\ldots i_k'\}$. We know from Lemma \ref{lem:loss-diff-primal} that
\begin{align*}
  L(S') - L(S) 
  =~\underbrace{\mathrm{D}_\phi(T_{S'}(p), T_{S}(p))}_{\textbf{I}} &+ \underbrace{\sum_{j=1}^k \left([T_{S'}(p)]_j - [T_{S}(p)]_j\right)\left(\phi'([T_{S}(p)]_j) - \phi'(p_{i_j})\right)}_{\textbf{II}} \\
  &+\underbrace{\sum_{j=1}^k [T_{S'}(p)]_j\left(\phi'(p_{i_j}) - \phi'(p_{i_j'})\right)}_{\textbf{III}}.
  \end{align*}

Now, consider the term \textbf{II}. Using \eqref{eq:solution-primal}, we can simplify this further as follows: 
  \begin{align*}
  \textbf{II}=&  \sum_{j=1}^k \left([T_{S'}(p)]_j - [T_{S}(p)]_j\right)\nu_S = \nu_S\left(\sum_{j=1}^k [T_{S'}(p)]_j - \sum_{j=1}^k [T_{S}(p)]_j\right) \overset{(a)}{=} 0,
\end{align*}
where $(a)$ follows as $\sum_{j=1}^k [T_{S'}(p)]_j = \sum_{j=1}^k [T_{S}(p)]_j = 1$. 
Also, $\textbf{I}\ge 0$ as $\mathrm{D}_\phi$ is a divergence measure. 

Finally, to conclude our proof, we show that $\textbf{III}\ge 0$.
Since the entries of $p$ are sorted in a non-decreasing order
and as the indices in $S=[k]$ and $S'$ are sorted in ascending order, we have
\begin{align*}
    \forall~j\in[k],~j = i_j \le i_j'
    \Rightarrow &\forall~j\in[k], ~p(i_j) \ge p(i_j')\\
    \Rightarrow &\sum_{j=1}^k [T_{S'}(p)]_j\left(\phi'(p_{i_j}) - \phi'(p_{i_j'})\right) = \textbf{III}\ge 0.
\end{align*}
Strict inequality holds as long as some $p_{i_j'}$ is not among the top-$k$ indices of $p$.

\section{Proof of the dual greedy property in Theorem \ref{thm:greedy-dual}}
\appendixtocentry{Proof of the dual greedy property in Theorem \ref{thm:greedy-dual}}

\label{app:proof_greedy_property_dual}

To prove the greedy property for the two alternate conditions in Theorem \ref{thm:greedy-dual}, 
we will provide two distinct proof techniques
for the two cases (A1) and (A2). 
The first one uses duality and the second one uses a saddle point argument. 
We will now recall the definition of the Legendre dual of a convex function---in this case, of the generator function $\phi$---and its defining property that will help us. 
Below, $f([0,1])$ denotes the image of $[0,1]$ under $f$.
\begin{lemma}[Classical] \label{lem:dual_breg}
For a valid $\phi$,
let $\phi^*(x) = \sup_{p \ge 0} \{p x - \phi(p)\}$ be the Legendre dual of $\phi$, defined for all $x \in f([0,1])$. Then, we have for every $x  \in f([0,1])$ the identity:
\(
\phi(f^{-1}(x)) = x f^{-1}(x) - \phi^*(x).
\)
Moreover $(\phi^*)' = f^{-1}$, and $\phi^*$ is strictly increasing.
\end{lemma}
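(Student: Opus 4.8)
The statement to prove is Lemma~\ref{lem:dual_breg}, the classical Legendre duality identity for the Bregman generator. The plan is to proceed from the definition of the convex conjugate and exploit the first-order optimality condition of the supremum.

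\textbf{Plan of proof.} First I would note that $\phi$ is convex and differentiable on $[0,1]$ with $f = \phi'$ continuous and non-decreasing (and strictly increasing on $(0,1)$ by primal validity), so for $x \in f([0,1])$ the function $p \mapsto p x - \phi(p)$ is concave in $p$ and its supremum over $p \in [0,1]$ is attained. Setting the derivative to zero gives $x - f(p) = 0$, i.e.\ $p = f^{-1}(x)$ is the maximizer (with the extension of $f^{-1}$ described before~\eqref{ren} handling the boundary cases where $x = f(0)$ or $x = f(1)$ and the optimum sits at an endpoint). Substituting $p = f^{-1}(x)$ back into the objective yields
\begin{equation*}
\phi^*(x) = x f^{-1}(x) - \phi(f^{-1}(x)),
\end{equation*}
which rearranges to the claimed identity $\phi(f^{-1}(x)) = x f^{-1}(x) - \phi^*(x)$.

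\textbf{Remaining two claims.} For $(\phi^*)' = f^{-1}$, I would invoke the standard envelope/Danskin argument: since for each $x$ the maximizer $p(x) = f^{-1}(x)$ is unique on the relevant range, $\phi^*$ is differentiable and $(\phi^*)'(x) = p(x) = f^{-1}(x)$; alternatively one differentiates the identity above directly, using $x - f(f^{-1}(x)) = 0$ wherever $f^{-1}$ is differentiable, and appeals to continuity/monotonicity elsewhere. Finally, $\phi^*$ is strictly increasing because $(\phi^*)' = f^{-1} \ge 0$ and $f^{-1}$ is not identically zero on any nondegenerate subinterval of $f([0,1])$ (indeed $f^{-1}$ is strictly positive on $f((0,1])$ since $f$ is strictly increasing on $(0,1)$).

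\textbf{Main obstacle.} The only subtlety — rather than a genuine difficulty — is careful bookkeeping at the boundary of $[0,1]$, where the supremum defining $\phi^*$ may be attained at $p = 0$ or $p = 1$ rather than at an interior stationary point, and where $f^{-1}$ has been extended to be constant. One must check that the extended $f^{-1}$ still returns the correct maximizer there and that $\phi^*$ remains differentiable (in the one-sided sense at endpoints) with derivative $f^{-1}$; this follows from convexity of $\phi^*$ together with continuity of $f^{-1}$, but it is the step that requires a moment's care rather than being purely mechanical.
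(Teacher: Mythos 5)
Your proposal is correct and follows essentially the same route as the paper's proof: both identify the maximizer $p_{\max}=f^{-1}(x)$ via the first-order condition $x-f(p)=0$, both check that the boundary cases $x=f(0)$ and $x=f(1)$ are covered, and both obtain $(\phi^*)'=f^{-1}$ from direct differentiation (the paper leaves this to "direct calculation" while you invoke the envelope theorem, but these are the same computation). Your extra remark that $\phi^*$ is strictly increasing because $f^{-1}>0$ on $f((0,1])$ is a correct filling-in of a detail the paper states without proof.
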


\begin{proof}
Since the map $p\mapsto R(p):= p x - \phi(p)$ is continuous, it achieves a maximum on $[0,1]$.
    From the first order condition of the defining equation for $\phi^*$,  if  the maximum is achieved in $(0,1)$, we have:
    \[
    \frac{\partial R}{\partial  p} = x - \phi'(p) = x - f(p) = 0, 
    \]
    so for the maximizer $p_\mathrm{max}$ we have $f(p_\mathrm{max}) = x \Rightarrow p_\mathrm{max} = f^{-1}(x)$. 
    Now, since $f$ is increasing and $x \in f([0,1])$, we have 
        $R'(0) = x - f(0) \ge 0$, with equality if $x = f(0)$. 
    Similarly, $R'(1) = x - f(1) \le 0$, with equality if $x = f(1)$.
    Hence, it follows that the above characterization 
    for the maximizer $p_\mathrm{max}$ 
    also applies on the boundaries of $[0,1]$. 
    To conclude the proof of the identity, it suffices to observe that $\phi^*(x) = p_\mathrm{max} x - \phi(p_\mathrm{max}) = x f^{-1}(x) - \phi(f^{-1}(x))$.
    The expression for $(\phi^*)'$ follows by direct calculation.
\end{proof}

\subsection{Proof under Assumption \ref{assump:dual-greedy-1}} 
\label{app:A1_dual_greedy}

With the dual convex conjugate $\phi^*$ as per Lemma \ref{lem:dual_breg},
the divergence measure satisfies:
\begin{align}
    \label{eq:duality-loss}
    \mathrm{d}_\phi(p, q) = \mathrm{d}_{\phi^*}(\phi'(q), \phi'(p)).
\end{align}
Let the loss for the dual problem be denoted as $L^*$, (the divergence measure with the arguments swapped), 
and let
$T^*_Q$ be the dual renormalization map from Lemma \ref{d-ren} applied to $p_Q$,
i.e.,
\begin{align*}
   L^*(Q) = \min_{\hat{p}\in \Delta_k}  \mathrm{D}_\phi((p_{Q}, p_{Q^\mathsf{c}}), (\hat{p}, 0_{V-k})) &= \min_{\hat{p}\in \Delta_k} \sum_{j=1}^k \mathrm{d}_\phi(p_{i_j}, \hat{p}_j) + S^*_{Q^{\mathsf{c}}},~\text{where}~S^*_{Q^{\mathsf{c}}} = \sum_{j\notin Q} \mathrm{d}_\phi(p_j, 0)\\
   &=\sum_{j=1}^k \mathrm{d}_\phi(p_{i_j}, [T^*_Q(p)]_j) + S^*_{Q^{\mathsf{c}}}.
\end{align*} 

\subsubsection{Decomposition of the loss difference}
Using the form of the loss difference in Lemma \eqref{lem:loss-diff-primal} and \eqref{eq:duality-loss}, we can compute the loss difference for the dual problem as follows:
\begin{align*}
   L^*(S') - L^*(S) 
   &= \sum_{j=1}^V \mathrm{d}_\phi(p_{i_j'}, [T^*_{S'}(p)]_j) -  \sum_{j=1}^V \mathrm{d}_\phi(p_{i_j}, [T^*_{S}(p)]_j)\\
   &\overset{\text{(due to \eqref{eq:duality-loss})}}{=} \sum_{i=1}^V \mathrm{d}_{\phi^*}(\phi'([T^*_{S'}(p)]_j), \phi'(p_{i_j'})) - \sum_{i=1}^V \mathrm{d}_{\phi^*}(\phi'([T^*_{S}(p)]_j), \phi'(p_{i_j}))
\end{align*}
Indeed, changing the potential $\phi$ to $\phi^*$, and changing all the arguments $p_{i_j}, p_{i_j'}, T^*_S, T^*_{S'}$ to $\phi'(p_{i_j}), \phi'(p_{i_j'}), \phi'(T^*_S), \phi'(T^*_{S'})$ respectively in Lemma \eqref{lem:loss-diff-primal} suffices. Thus, under the same setup of the two subsets $S=[k]$ and $S'$ and denoting $\phi'=f$, we obtain:
\begin{align*}
    L^*(S') - L^*(S) &= \mathrm{D}_{\phi^*}\left(f(T^*_{S'}(p)), f(T^*_{S}(p))\right) \\
    &+ \sum_{j=1}^k \left(f([T^*_{S'}(p)]_j) - f([T^*_{S}(p)]_j)\right)\left((\phi^*)'\left(f([T^*_{S}(p)]_j)\right) - (\phi^*)'\left(f(p_{i_j})\right)\right) \\
    &+ \sum_{j=1}^k f\left([T^*_{S'}(p)]_j\right)\left((\phi^*)'(f(p_{i_j})) - (\phi^*)'(f(p_{i_j'}))\right).
\end{align*}
Since $(\phi^*)' = f^{-1}$, this further equals
\begin{align*}
    &\underbrace{\mathrm{Div}_{\phi^*}\left(f(T^*_{S'}(p)), f(T^*_{S}(p))\right)}_{\textbf{I}'} \\
    &+ \underbrace{\sum_{j=1}^k \left(f([T^*_{S'}(p)]_j) - f([T^*_{S}(p)]_j)\right)\left([T^*_{S}(p)]_j - p_{i_j}\right)}_{\textbf{II}'} 
    + \underbrace{\sum_{j=1}^k f\left([T^*_{S'}(p)]_j\right)\left(p_{i_j} - p_{i_j'}\right)}_{\textbf{III}'}.
\end{align*}

\subsubsection{Analysis of terms based on the dual solution}
Similar to the  proof for the primal case, the term $\textbf{I}'\ge 0$,
as $\mathrm{D}_{\phi^*}$ is a divergence,
and $\textbf{III}'\ge 0$ as $\phi' = f\ge 0$, as $f(0)=0$ and $f$ is increasing. Moreover, as $f$ is strictly increasing, if any of the $p_{i_j'}$ are not among the top-$k$ entries, then strict inequality holds. 

To analyze \textbf{II}, we have
\begin{align*}
    \textbf{II} &= \sum_{j=1}^k \left(f([T^*_{S'}(p)]_j) - f([T^*_{S}(p)]_j)\right)\left([T^*_{S}(p)]_j - p_{i_j}\right)\\
    &\overset{\text{from Lemma \ref{d-ren}}}{=}\sum_{j=1}^k \left(f([T^*_{S'}(p)]_j) - f([T^*_{S}(p)]_j)\right)\frac{\nu^*_S}{f'\left([T^*_S(p)]_j\right)}.
\end{align*}
Since $f$ is convex,
\begin{align*}
    &\left(f([T^*_{S'}(p)]_j) - f([T^*_{S}(p)]_j)\right)\ge f'\left([T^*_S(p)]_j\right)\left([T^*_{S'}(p)]_j - [T^*_{S}(p)]_j\right)\\
    &\overset{(a)}{\Rightarrow} \frac{1}{f'\left([T^*_S(p)]_j\right)}\cdot\left(f([T^*_{S'}(p)]_j) - f([T^*_{S}(p)]_j)\right) \ge [T^*_{S'}(p)]_j - [T^*_{S}(p)]_j\\
    &\overset{(b)}{\Rightarrow} \sum_{j=1}^k \frac{1}{f'\left([T^*_S(p)]_j\right)}\cdot\left(f([T^*_{S'}(p)]_j) - f([T^*_{S}(p)]_j)\right) \ge \sum_{j=1}^k\left([T^*_{S'}(p)]_j - [T^*_{S}(p)]_j\right) = 0.
\end{align*}
In the above steps, $(a)$ follows as $f'> 0$ as $f$ is strictly increasing and $(b)$ follows as $\sum_{j=1}^k [T^*_{S'}(p)]_j = \sum_{j=1}^k [T^*_{S}(p)]_j = 1$.
This implies $\textbf{II}' \ge 0$, finishing the proof.

\subsection{Proof under Assumption \ref{assump:dual-greedy-2}}
\label{app:A2_dual_greedy}

\subsubsection{Extra notation}
\label{not+}

Since $\frac{\partial }{\partial  y} \mathrm{d}_\phi(x, y)=\phi''(y)\,(y-x)$ for $y>0$, we define
for $(x,y,\nu) \in D:=[0,1] \times (0,1] \times (0,\infty)$,
\begin{equation}\label{psi}
\Psi(x, y, \nu) := \phi''(y)(y-x) - \nu.    
\end{equation}
Define the mapping 
derived from solving $\Psi(x, y, \nu) = 0$
over $y$
by:
\[
\xi(x, \nu) : [0,1] \times (0,
\infty) \to (0, 1], 
\text{ such that } [T(p)]_i = \xi(p_i, \nu) \text{ for all } i, \text{ and for optimal } \nu.
\]
It follows from the proof of Lemma \ref{d-ren} that the solution $\xi$ is well-defined.
Define two auxiliary functions  $\psi,h$ that will be used in the computation of the Bregman costs below, such that for all $(x,y,\nu) \in D$:
\[
\psi(x, y) := \phi(y) - \phi'(y) (y-x), \quad \text{ and } h(x, \nu) := \psi(x, \xi(x, \nu)).
\]

\subsubsection{Properties of the auxiliary functions}

\begin{lemma}[Derivatives $\frac{\partial \xi}{\partial  x}$, $\frac{\partial \xi}{\partial  \nu}$] \label{lem:sensitivity_dual}
Define $v:  [0,1] \times (0, 1] \to [0,\infty)$ 
as
$v(x,y) = \phi''(y) + \phi'''(y)(y-x)$.
    We have for all $(x,\nu) \in  [0,1] \times (0,
\infty) $:
    \begin{equation}\label{partial-xi}
        \frac{\partial  \xi}{\partial  \nu}(x, \nu) = \frac{1}{v(x, \xi(x, \nu))}, \quad \text{ and }\, \frac{\partial  \xi}{\partial   x}(x, \nu) = \frac{\phi''(\xi(x, \nu))}{v(x, \xi(x, \nu))}.
    \end{equation}
\end{lemma}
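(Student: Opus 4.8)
The goal is to compute the two partial derivatives of $\xi$ by implicit differentiation of the defining identity $\Psi(x, \xi(x,\nu), \nu) = 0$, i.e.\ of $\phi''(\xi(x,\nu))\,(\xi(x,\nu) - x) - \nu = 0$. First I would recall from the proof of Lemma~\ref{d-ren} that for $(x,\nu)$ in the relevant range the solution $\xi(x,\nu) \in (x,1]$ is well-defined and unique, and that the strict convexity of $y \mapsto \mathrm{d}_\phi(x,y)$ on $[x,1]$ (from dual validity) ensures that the $y$-derivative of $\Psi$ at the solution is strictly positive; this is exactly the quantity I will call $v(x, \xi(x,\nu))$. Concretely, differentiating $\Psi(x,y,\nu) = \phi''(y)(y-x) - \nu$ with respect to $y$ gives $\partial_y \Psi(x,y,\nu) = \phi'''(y)(y-x) + \phi''(y) = v(x,y)$, which matches the definition in the statement, and by the monotonicity established in Lemma~\ref{d-ren} we have $v(x,\xi(x,\nu)) > 0$, so the implicit function theorem applies.

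Next I would carry out the implicit differentiation. Writing $y = \xi(x,\nu)$ and differentiating the identity $\phi''(y)(y-x) - \nu \equiv 0$ with respect to $\nu$ (holding $x$ fixed) yields $v(x,y)\,\frac{\partial \xi}{\partial \nu} - 1 = 0$, hence $\frac{\partial \xi}{\partial \nu}(x,\nu) = 1/v(x,\xi(x,\nu))$. Differentiating the same identity with respect to $x$ (holding $\nu$ fixed) gives $\phi'''(y)\frac{\partial \xi}{\partial x}(y-x) + \phi''(y)\bigl(\frac{\partial \xi}{\partial x} - 1\bigr) = 0$, i.e.\ $v(x,y)\frac{\partial \xi}{\partial x} = \phi''(y)$, hence $\frac{\partial \xi}{\partial x}(x,\nu) = \phi''(\xi(x,\nu))/v(x,\xi(x,\nu))$. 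These are precisely the two claimed formulas.

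The only genuine obstacle is justifying the use of the implicit function theorem, i.e.\ checking that $\Psi$ is $C^1$ in a neighborhood of the solution and that the partial derivative in $y$ does not vanish there. Smoothness follows from Assumption~\ref{ass:convex_in_second}, which posits that $\phi$ is thrice differentiable on $(0,1]$, so $\Psi$ is $C^1$ on $D = [0,1]\times(0,1]\times(0,\infty)$; and non-vanishing of $v$ at the solution is exactly the strict-monotonicity fact extracted from Lemma~\ref{d-ren}'s proof (equivalently, the strict convexity of $y\mapsto \mathrm{d}_\phi(x,y)$ for $y\in[x,1]$). Once these two points are in place, the derivative formulas are a one-line computation; I would also remark that since $\xi(x,\nu)>x\ge 0$ lies in the interior region where everything is smooth, no boundary subtleties arise, so the argument is clean.

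\begin{proof}
Fix $(x,\nu) \in [0,1] \times (0,\infty)$, and abbreviate $y = \xi(x,\nu) \in (x, 1] \subseteq (0,1]$, which is well-defined by the proof of Lemma~\ref{d-ren}. By definition $\xi$ solves $\Psi(x, y, \nu) = 0$, i.e.\ $\phi''(y)(y - x) = \nu$. Under Assumption~\ref{ass:convex_in_second}, $\phi$ is thrice differentiable on $(0,1]$, so $\Psi$ is continuously differentiable in a neighborhood of $(x,y,\nu)$ in $D$, with
\[
\frac{\partial \Psi}{\partial y}(x,y,\nu) = \phi'''(y)(y-x) + \phi''(y) = v(x,y).
\]
Since $y \in [x,1]$ and $y \mapsto \mathrm{d}_\phi(x,y)$ is strictly convex on $[x,1]$, the map $y \mapsto \Psi(x,y,\nu) = \tfrac{\partial}{\partial y}\mathrm{d}_\phi(x,y) - \nu$ is strictly increasing on $[x,1]$, as already used in the proof of Lemma~\ref{d-ren}; hence $v(x,y) = \frac{\partial \Psi}{\partial y}(x,y,\nu) > 0$. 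The implicit function theorem therefore applies, and $\xi$ is continuously differentiable near $(x,\nu)$.

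Differentiating the identity $\phi''(\xi(x,\nu))\bigl(\xi(x,\nu) - x\bigr) - \nu \equiv 0$ with respect to $\nu$ gives
\[
\Bigl[\phi'''(y)(y-x) + \phi''(y)\Bigr]\frac{\partial \xi}{\partial \nu}(x,\nu) - 1 = 0,
\]
that is, $v(x,y)\,\frac{\partial \xi}{\partial \nu}(x,\nu) = 1$, so $\frac{\partial \xi}{\partial \nu}(x,\nu) = 1/v(x, \xi(x,\nu))$. Differentiating the same identity with respect to $x$ gives
\[
\phi'''(y)\frac{\partial \xi}{\partial x}(x,\nu)\,(y-x) + \phi''(y)\Bigl(\frac{\partial \xi}{\partial x}(x,\nu) - 1\Bigr) = 0,
\]
i.e.\ $v(x,y)\,\frac{\partial \xi}{\partial x}(x,\nu) = \phi''(y)$, so $\frac{\partial \xi}{\partial x}(x,\nu) = \phi''(\xi(x,\nu))/v(x, \xi(x,\nu))$. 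This establishes~\eqref{partial-xi}.
\end{proof}
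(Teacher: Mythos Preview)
Your proof is correct and follows essentially the same approach as the paper: both verify the hypotheses of the implicit function theorem (smoothness of $\Psi$ from Assumption~\ref{ass:convex_in_second} and nonvanishing of $\partial_y \Psi = v(x,y)$ via the strict second-argument convexity from dual validity) and then differentiate the defining identity $\Psi(x,\xi,\nu)=0$ in $\nu$ and $x$ separately. The paper organizes the two derivatives as two distinct applications of the implicit function theorem and adds a brief remark on the $x=0$ boundary, but your observation that $y=\xi(x,\nu)\in(x,1]\subseteq(0,1]$ already handles that case.
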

\begin{proof}
    The proof of either identity follows by applying implicit differentiation to the function $\Psi$.
       Fix $x\!\in\![0,1]$ and consider
    \[
        F(y,\nu)\;=\;\Psi(x,y,\nu)\;=\;\phi''(y)(y-x)-\nu
        \quad\text{for }(y,\nu)\in(0,1]\times(0,\infty).
    \]
    Because $\phi$ is ${\mathcal C}^3$ on $(0,1]$, $F$ is continuously differentiable, and
    \[
        \frac{\partial F}{\partial y}(y,\nu)\;=\;\phi'''(y)(y-x)+\phi''(y)
        \;=\;v(x,y)\;>\;0
    \]
    by Assumption \ref{ass:convex_in_second}.  
    Hence, by the implicit function theorem,
    the map $\nu\mapsto\xi(x,\nu)$ is ${\mathcal C}^1$ with
    \[
        \frac{\partial\xi}{\partial\nu}(x,\nu)
        =-\frac{\partial F/\partial \nu}{\partial F/\partial y}
        =\frac{1}{v(x,\xi(x,\nu))}\;.
    \]



For the latter identity,
fix $\nu>0$ and define
\[
    G(x,y):=\Psi(x,y,\nu)=\phi''(y)(y-x)-\nu ,
    \qquad (x,y)\in[0,1]\times(0,1] .
\]
For each $x_0\in(0,1]$ let $y_0:=\xi(x_0,\nu) \in (0,1]$ satisfy $G(x_0,y_0)=0$.
We have
\(
    \frac{\partial G}{\partial y}(x,y)=v(x,y).
\)
Assumption~\ref{ass:convex_in_second} gives \(v(x,y)>0\) for all \(0<y\le1\) and \(0\le x\le y\).  
Hence \(\partial G/\partial y(x_0,y_0)\neq 0\).

Since \(G\) is continuously differentiable and  
\(\partial G/\partial y\neq 0\) at \((x_0,y_0)\), the implicit-function theorem
guarantees a \(C^1\) map \(x\mapsto\xi(x,\nu)\) in a neighborhood of \(x_0\)
with \(G\bigl(x,\xi(x,\nu)\bigr)=0\).  

Differentiating \(G\bigl(x,\xi(x,\nu)\bigr)\equiv0\) with respect to \(x\) and using \(\partial G/\partial x=-\phi''(y)\) gives
\[
      0=\frac{\partial G}{\partial x}
        +\frac{\partial G}{\partial y}\,\frac{\partial\xi}{\partial x}
      =-\phi''\!\bigl(\xi(x,\nu)\bigr)
        +v\bigl(x,\xi(x,\nu)\bigr)\frac{\partial\xi}{\partial x},
\]
so
\[
      \frac{\partial\xi}{\partial x}(x,\nu)
      =\frac{\phi''\!\bigl(\xi(x,\nu)\bigr)}
             {v\bigl(x,\xi(x,\nu)\bigr)}.
\]

When \(x=0\), 
the same argument applies, because
\(
      \frac{\partial G}{\partial y}(0,y)=v(0,y)>0
\)
and \(\partial G/\partial x|_{(0,y)}=-\phi''(y)\) is finite
(the solution \(y=\xi(0,\nu)\) is strictly positive, so \(\phi''(y)\) is finite even if \(\phi''(y)\!\to\!\infty\) as \(y\downarrow0\)).
Thus \(\partial\xi/\partial x|_{(0,\nu)}\) exists and the same formula holds.
This completes the proof.
\end{proof}


\begin{lemma}[Derivative $\frac{\partial h}{\partial  \nu}$] \label{lem:d_nu}
    Under the condition that $x\mapsto u(x) := {x \phi''(x)}/{\phi'(x)}$ is non-decreasing from Assumption \ref{assump:dual-greedy-2},
    we have 
    \(        \frac{\partial h}{\partial  \nu}(x, \nu) \le 0
    \)
    $\text{ for all } x \in [0, 1] \text{ and } \nu>0$.
\end{lemma}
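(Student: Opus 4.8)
The plan is to differentiate $h(x,\nu)=\psi\bigl(x,\xi(x,\nu)\bigr)$ directly in $\nu$ via the chain rule, using the sensitivity formula from Lemma~\ref{lem:sensitivity_dual}. Since $\phi$ is $C^3$ on $(0,1]$ and $\xi(x,\nu)\in(0,1]$, the map $\nu\mapsto\xi(x,\nu)$ is $C^1$, so
\[
\frac{\partial h}{\partial\nu}(x,\nu)=\frac{\partial\psi}{\partial y}\bigl(x,\xi(x,\nu)\bigr)\cdot\frac{\partial\xi}{\partial\nu}(x,\nu).
\]
First I would compute the first factor: from $\psi(x,y)=\phi(y)-\phi'(y)(y-x)$ and the product rule, $\partial_y\psi(x,y)=\phi'(y)-\phi''(y)(y-x)-\phi'(y)=-\phi''(y)(y-x)$. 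Evaluating at $y=\xi(x,\nu)$ and invoking the defining equation $\Psi\bigl(x,\xi(x,\nu),\nu\bigr)=0$, i.e.\ $\phi''(\xi(x,\nu))(\xi(x,\nu)-x)=\nu$, this first factor collapses to exactly $-\nu$.

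For the second factor, Lemma~\ref{lem:sensitivity_dual} gives $\partial_\nu\xi(x,\nu)=1/v\bigl(x,\xi(x,\nu)\bigr)$ with $v(x,y)=\phi''(y)+\phi'''(y)(y-x)$, and $v\bigl(x,\xi(x,\nu)\bigr)>0$ because $v(x,\cdot)$ is the second derivative of $y\mapsto\mathrm{d}_\phi(x,y)$, which is strictly convex on $[x,1]$ by dual validity (Assumption~\ref{ass:convex_in_second}), while $\xi(x,\nu)\in[x,1]$. Multiplying the two factors gives
\[
\frac{\partial h}{\partial\nu}(x,\nu)=-\frac{\nu}{v\bigl(x,\xi(x,\nu)\bigr)}\le 0,
\]
since $\nu>0$. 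To cover the full range claimed, I would also treat the clamped regime: when $\nu>\phi''(1)(1-x)$ (in particular whenever $x=1$), the extended solution is $\xi(x,\nu)=1$, so $h(x,\nu)=\psi(x,1)=\phi(1)$ is constant in $\nu$ and the derivative is $0\le 0$ there too; the endpoint $x=0$ is handled identically once one knows $\xi(0,\nu)>0$, which follows from strict convexity of $y\mapsto\mathrm{d}_\phi(0,y)$ on $(0,1]$ together with $\lim_{y\to 0^+}y\phi''(y)=0$ (both from Assumption~\ref{ass:convex_in_second}).

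The calculation is short once Lemma~\ref{lem:sensitivity_dual} is available, so I expect no genuine obstacle here; the only point needing care is verifying $v\bigl(x,\xi(x,\nu)\bigr)>0$ (equivalently, that the sensitivity formula applies and the denominator is strictly positive) uniformly over $x\in[0,1]$, $\nu>0$, including the clamping boundary and $x=0$. Notably, the monotonicity of $u(x)=x\phi''(x)/\phi'(x)$ from Assumption~\ref{assump:dual-greedy-2} does not, as far as I can see, enter \emph{this} step at all — it is simply the standing hypothesis of the subsection, and I anticipate it becomes essential only for the higher-order estimates (how $\partial h/\partial\nu$ itself varies with $\nu$ or $x$, or cross-derivatives of $h$) used later in the dual $k$-convexity argument. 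If the intended proof instead avoids the explicit sensitivity formula and argues by a direct comparison $h(x,\nu_1)\ge h(x,\nu_2)$ for $\nu_1<\nu_2$, then the main work would be to combine convexity of $\mathrm{d}_\phi(x,\cdot)$ on $[x,1]$ with the $u$-condition to control the nonlinear dependence of $\xi$ on $\nu$.
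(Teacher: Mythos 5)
Your proposal matches the paper's proof essentially verbatim: both compute $\partial_y\psi(x,y)=-\phi''(y)(y-x)$, apply the chain rule with $\partial_\nu\xi=1/v$ from Lemma~\ref{lem:sensitivity_dual}, collapse the first factor to $-\nu$ via the defining equation $\phi''(\xi)(\xi-x)=\nu$, and conclude $\partial_\nu h=-\nu/v\le0$ from $\nu>0$ and $v>0$. Your side-observation is also correct: the monotonicity of $u$ is indeed not used in this lemma (it only enters Lemma~\ref{lem:d_x}); the stated hypothesis is carried along from the theorem's standing assumptions rather than being needed here.
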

\begin{proof}
For the derivative with respect to $\nu$, observe first that
\[
    \frac{\partial \psi}{\partial y}(x,y)
    =\phi'(y)-\bigl[\phi''(y)\,y+\phi'(y)\bigr]+x\,\phi''(y)
    =\phi''(y)\,(x-y).
\]
Hence, by the chain rule,
\[
    \frac{\partial}{\partial \nu}\psi\bigl(x,\xi(x,\nu)\bigr)
    =\frac{\partial \psi}{\partial y}\bigl(x,\xi(x,\nu)\bigr)
      \,\frac{\partial \xi}{\partial \nu}(x,\nu)
    =\phi''\!\bigl(\xi(x,\nu)\bigr)\,[x-\xi(x,\nu)]\,
      \frac{\partial \xi}{\partial \nu}(x,\nu).
\]
Due to the defining equation $\phi''(\xi)\,(\xi-x)=\nu$, this simplifies to
\[
   \frac{\partial h}{\partial  \nu}(x, \nu)
   =  \frac{\partial}{\partial \nu}\psi\bigl(x,\xi(x,\nu)\bigr)
    =-\nu\,\frac{\partial \xi}{\partial \nu}(x,\nu)
    =-\frac{\nu}{v\!\bigl(x,\xi(x,\nu)\bigr)}
    \;\;\le\;0,
\]
where the last equality uses
$\displaystyle \frac{\partial \xi}{\partial \nu}(x,\nu)=\frac{1}{v\bigl(x,\xi(x,\nu)\bigr)}$ and  $\nu > 0$.
\end{proof}


\begin{lemma}[Derivative $\frac{\partial h}{\partial  x}$] \label{lem:d_x}
    Assumption \ref{assump:dual-greedy-2} implies
    \(        \frac{\partial h}{\partial  x}(x, \nu) \ge 0
    \)
    $\text{ for all } x \in [0, 1] \text{ and } \nu>0$.
\end{lemma}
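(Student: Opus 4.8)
The plan is to differentiate $h$ explicitly via the chain rule, reduce the desired inequality $\partial h/\partial x \ge 0$ to a scalar inequality evaluated at $\xi := \xi(x,\nu) \in (0,1]$, and then recognize that this scalar inequality is precisely what Assumption~\ref{assump:dual-greedy-2} supplies. Concretely, I would first record the partial derivatives of $\psi(x,y) = \phi(y) - \phi'(y)(y-x)$: a direct computation gives $\partial_x\psi(x,y) = \phi'(y)$ and $\partial_y\psi(x,y) = \phi''(y)(x-y)$ (the latter already appears in the proof of Lemma~\ref{lem:d_nu}). Combining these with $\partial_x\xi(x,\nu) = \phi''(\xi)/v(x,\xi)$ from Lemma~\ref{lem:sensitivity_dual} and then using the defining relation $\phi''(\xi)(\xi-x) = \nu$ yields
\[
\frac{\partial h}{\partial x}(x,\nu) = \phi'(\xi) + \phi''(\xi)(x-\xi)\,\frac{\phi''(\xi)}{v(x,\xi)} = \phi'(\xi) - \frac{\nu\,\phi''(\xi)}{v(x,\xi)}.
\]
Since $\xi > x \ge 0$ and $\xi \le 1$, Assumption~\ref{ass:convex_in_second} gives $v(x,\xi) > 0$, so multiplying through by $v(x,\xi)$ and then by $\phi''(\xi) > 0$, and using $\phi''(\xi)\,v(x,\xi) = (\phi''(\xi))^2 + \nu\phi'''(\xi)$ (which follows from $\xi - x = \nu/\phi''(\xi)$), I reduce the claim to
\[
\phi'(\xi)\,(\phi''(\xi))^2 \;\ge\; \nu\,B, \qquad \text{where } B := (\phi''(\xi))^2 - \phi'(\xi)\phi'''(\xi).
\]

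The key observation is that $B$ is, up to the positive factor $\phi''(\xi)$, the numerator of $u'$: differentiating $u(x) = x\phi''(x)/\phi'(x)$ (legitimate since $\phi \in C^3$ on $(0,1]$ by Assumption~\ref{ass:convex_in_second}), one finds that $u'(\xi) \ge 0$ is equivalent to $\phi'(\xi)\phi''(\xi) \ge \xi\,B$; multiplying by $\phi''(\xi) > 0$ gives $\phi'(\xi)(\phi''(\xi))^2 \ge \xi\,\phi''(\xi)\,B$. On the other hand, $x \ge 0$ forces $\nu = \phi''(\xi)(\xi-x) \le \xi\,\phi''(\xi)$. I would then conclude with a case split on the sign of $B$: if $B \ge 0$, then $\nu B \le \xi\phi''(\xi)\,B \le \phi'(\xi)(\phi''(\xi))^2$; if $B < 0$, then $\nu B \le 0 \le \phi'(\xi)(\phi''(\xi))^2$ using $\phi'(\xi) \ge 0$ (from $\phi$ nondecreasing, part of Assumption~\ref{assump:dual-greedy-2}). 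In both cases the reduced inequality holds. The edge case $x = 0$ causes no trouble: $\xi(0,\nu) > 0$, so $\phi''(\xi)$ and $\phi'''(\xi)$ are finite and Lemma~\ref{lem:sensitivity_dual} provides $\partial_x\xi$ there as well.

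I expect the main obstacle to be the second step — pushing the chain-rule expression through to the clean form $\phi'(\xi)(\phi''(\xi))^2 \ge \nu B$ and, crucially, noticing that $B$ coincides (up to $\phi''(\xi) > 0$) with the numerator of $u'(\xi)$. Once that correspondence is spotted, the remaining argument is just the bound $\nu \le \xi\phi''(\xi)$ together with a two-line sign case-split, so the proof is short.
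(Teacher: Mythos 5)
Your proposal is correct and takes essentially the same route as the paper's proof: both compute $\partial h/\partial x = \phi'(\xi) - \nu\phi''(\xi)/v(x,\xi)$ via the chain rule and the defining relation $\phi''(\xi)(\xi-x)=\nu$, reduce to a scalar inequality at $\xi$, exploit $0\le x\le\xi$ to bound $\nu$ (equivalently $\xi - x$) by $\xi\phi''(\xi)$ (equivalently $\xi$), and case-split on the sign of $\phi'\phi''' - (\phi'')^2$ with the monotonicity of $u$ supplying the needed inequality. The only difference is cosmetic: the paper works with $N(x,\nu)=\phi'(\xi)v(x,\xi)-\nu\phi''(\xi)$ directly, whereas you multiply through by the extra factor $\phi''(\xi)>0$ to phrase the target as $\phi'(\xi)\phi''(\xi)^2\ge\nu B$ with $B=-A$.
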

\begin{proof}
First recall that
\[
    \psi(x,y)=\phi(y)-\phi'(y)\,(y-x)
    \quad\Longrightarrow\quad
    \frac{\partial\psi}{\partial x}(x,y)=\phi'(y),\qquad
    \frac{\partial\psi}{\partial y}(x,y)=\phi''(y)\,(x-y).
\]

Hence, with \(y=\xi(x,\nu)\),
\[
    \frac{\partial h}{\partial x}(x,\nu)
      =\frac{\partial\psi}{\partial x}\bigl(x,\xi\bigr)
       +\frac{\partial\psi}{\partial y}\bigl(x,\xi\bigr)
        \,\frac{\partial\xi}{\partial x}(x,\nu)
      =\phi'(\xi)
       +\phi''(\xi)\,[x-\xi]\,
        \frac{\partial\xi}{\partial x}(x,\nu).
\]

Because \(\xi=\xi(x, \nu)\) satisfies \(\phi''(\xi)\,(\xi-x)=\nu\), we have
\[
    \frac{\partial h}{\partial x}(x,\nu)
      =\phi'(\xi)-\nu\,\frac{\partial\xi}{\partial x}(x,\nu)
      =\phi'(\xi)
        -\nu\,\frac{\phi''(\xi)}{v\bigl(x,\xi\bigr)}.
\]
Write
\[
  N(x,\nu)
  \;=\;\phi'(\xi)\,\phi''(\xi)
       \;+\;(\xi-x)\,
          \bigl[\,
            \phi'(\xi)\,\phi'''(\xi)-\phi''(\xi)^2
          \bigr]
  \;=\;\phi'(\xi)\,\phi''(\xi)
       +(\xi-x)\,A(\xi),
\]
where \(A(t):=\phi'(t)\phi'''(t)-\phi''(t)^2\).

Case 1: \(A(\xi)\ge0\).
Because \(\xi\ge x\) from Lemma \ref{d-ren}, the second term is non-negative; with  
\(\phi',\phi''\ge0\) the first term is also non-negative, so \(N\ge0\).

Case 2: \(A(\xi)<0\).
Since \(\xi\ge x\), we have
\[
  N(x,\nu)
  \;\ge\;\phi'(\xi)\phi''(\xi)+\xi\,A(\xi)
  \;=\;\phi'(\xi)^2\,u'(\xi),
\]
where \(u(t):=t\,\phi''(t)/\phi'(t)\).  Indeed,
\[
  u'(t)\,\phi'(t)^2
      =\phi'(t)\bigl[\phi''(t)+t\,\phi'''(t)\bigr]
       -t\,\phi''(t)^2
      =\phi'(t)\phi''(t)+t\bigl[\phi'(t)\phi'''(t)-\phi''(t)^2\bigr].
\]
 By Assumption \ref{assump:dual-greedy-2}, \(u\) is non-decreasing, so \(u'(\xi)\ge0\); hence \(N(x,\nu)\ge0\) in this case as well.

Because \(v(x,\xi)>0\) and \(N(x,\nu)\ge0\) in both cases, we conclude
\(
      \partial h(x,\nu)/\partial x\ge0
\)
for all \(x\in[0,1]\) and \(\nu>0\),
thereby proving the lemma.
\end{proof}

\subsubsection{Proving the dual greedy property}

Denote an arbitrary subset of the indices by: $S \subseteq [J]$. Let $\nu_S$ be the corresponding Lagrange multiplier.
Below, for a vector $x \in \R^V$ and a set $S\subset [V]$,
we denote by $x[S]$ the sub-vector of $x$ restricted to the coordinates in $S$.
Since $\phi'(0) = 0$ by the assumptions of Theorem \ref{thm:greedy-dual}, 
denoting $\Gamma = \sum_{m=1}^J \mathrm{d}_\phi(p_m, 0) + \phi(0) |S|$
we can write for every $S$:
\begin{align*}
    \mathrm{D}_\phi(p, \hat{p}[S]) &= \sum_{m \in S} \phi(p_m)  - \phi([T(p)]_m) - \phi'([T(p)]_m) \cdot (p_m - [T(p)]_m )  + \sum_{m \in [J] \setminus S} \mathrm{d}_\phi(p_m, 0) \\
    &= \sum_{m \in S} - \left( \phi([T(p)]_m) - \phi'([T(p)]_m) \cdot ([T(p)]_m- p_m) \right)  + \Gamma \\
    &= \sum_{m \in S} - \psi(p_m, [T(p)]_m) + \Gamma  
    = \sum_{m \in S} - h(p_m, \nu_S) + \Gamma.
\end{align*}

Now, let us prove that the greedy property holds. Suppose $S$ is optimal among all subsets of indices of size $k$ but does not consist of some of the top $k$ probability tokens. Then there exist some $i\neq j$ such that $i \in S$, $j \not\in S$, and $p_j > p_i$. Denote $S' = S \setminus \{i\} \cup \{j\}$. 

Let $\nu_S, \nu_{S'}$ denote the choice of $\nu$ that makes the projected probabilities sum to unity. 
Now since $S'$ only differs from $S$ in that it includes the larger $p_j > p_i$, we can conclude that $\nu_S > \nu_{S'}$. 

Then, using the above formula for the value of the objective function on an arbitrary subset, we have:
\begin{align*}
    \mathrm{D}_\phi(p, \hat{p}[S]) - \mathrm{D}_\phi(p, \hat{p}[S']) &= h(p_j, \nu_{S'}) - h(p_i, \nu_{S}) + \sum_{m \in S\setminus\{i\}} \left( h(p_m, \nu_{S'}) - h(p_m, \nu_{S}) \right).
\end{align*}

Now, since $h$ decreases in $\nu$ by Lemma~\ref{lem:d_nu}, we have that the sum is nonnegative since $\nu_{S'} < \nu_S$. As for the remaining term, we have:
\[
h(p_j, \nu_{S'}) \ge h(p_j, \nu_{S}) \ge h(p_i, \nu_{S}),
\]
where the first inequality is by the fact that $\nu_{S'} < \nu_S$ and Lemma~\ref{lem:d_nu}, and the second inequality is by the fact that $p_j > p_i$ and Lemma~\ref{lem:d_x}. This concludes the proof of the dual greedy property under Assumption \ref{assump:dual-greedy-2}.

\section{Proof of discrete convexity for primal Bregman projection}
\appendixtocentry{Proof of discrete convexity for primal Bregman projection} \label{app:proof_discrete_convexity_primal}

We follow the notations that were introduced in the beginning of the proof in Section \ref{sec:proof-primal-greedy}. To show that the cost function is discretely convex in $k$ for the primal, it suffices to show that 
\[L([k]):=\min_{\hat{p} \in \Delta_k} \mathrm{D}_\phi((\hat{p}, 0_{V-k}), p) = \mathrm{D}_\phi((T_{[k]}(p), 0_{V-k}), p)\] 
is discretely convex in $k$.
Indeed, the difference $\mathrm{cost}(k)-L([k]) = \lambda k$ is linear in $k$.

To simplify notation, let us denote $L([k])$ by $L(k)$ 
and $T_{[k]}$ by $T_k$. 
From Lemma \eqref{lem:simplified-cost-primal} we know that with $\tilde S_V:=S_{[V]}- k \phi(0)$
\[L(k) = 
    \sum_{j=1}^k \left\{\phi([T_k(p)]_j) - \phi'(p_j) [T_k(p)]_j \right\} + \tilde S_V.\]

Using \eqref{eq:solution-primal}, we know that $f([T_k(p)]_j) = f(p_j) + \nu_{[k]}~\forall~j\in[k]$. Again, we simply denote $\nu_{[k]}$ as $\nu_k$. For $j \in [k]$, letting $x = f(p_j) + \nu_k$ in Lemma \ref{lem:dual_breg}, we have:
\begin{align*}
    &\phi([T_k(p)]_j) - \phi'(p_j) [T_k(p)]_j = \phi(f^{-1}(f(p_j) + \nu_k)) - f(p_j) f^{-1}(f(p_j) + \nu_k) \\
    &= \phi(f^{-1}(x)) - f(p_j) f^{-1}(x) 
    = x f^{-1}(x) - \phi^*(x) - f(p_j) f^{-1}(x) \\
    &= (x - f(p_j)) f^{-1}(x) - \phi^*(x) 
    = \nu_k [T_k(p)]_j - \phi^*(f(p_j) + \nu_k).
\end{align*}
But now, using that the nonzero entries of $T_k(p)$ must sum to unity, we find the following simplification:
\begin{align}\label{L-eq}
    L(k) &= \sum_{j=1}^k 
    \left\{\nu_k [T_k(p)]_j - \phi^*(f(p_j) + \nu_k)\right\} + \tilde S_V \nonumber\\
    &= \nu_k 
    \sum_{j=1}^k [T_k(p)]_j - \sum_{j=1}^k \phi^*(f(p_j) + \nu_k)  + \tilde S_V
    = \nu_k - \sum_{j=1}^k \phi^*(f(p_j) + \nu_k) + \tilde S_V.
\end{align}
Now, define the auxiliary function $W$ for all $j, \nu$ for which the expression below is well defined: 
\begin{equation}\label{W}
 W(k, \nu) := \nu - \sum_{j=1}^k \phi^*(f(p_j) + \nu),
\end{equation}
where $p$ is implicitly kept fixed. From the above calculation, we thus obtain after canceling out terms:
\[
L(k+1) - 2L(k) + L(k-1) = W(k+1, \nu_{k+1}) - 2 W(k, \nu_k) + W(k-1, \nu_{k-1}).
\]

To prove that this is nonnegative, 
we leverage that 
$W(k, \cdot)$ is strictly concave in $\nu$ for each $k$, 
which follows as the Legendre dual mapping $\phi^*$ is strictly convex since so is $\phi$. 
Then, observe that
for every $j$,
\begin{equation}\label{nu-eq}
    \frac{\partial }{\partial   \nu} W(k, \nu) = 1 - \sum_{j=1}^k (\phi^*)'(f(p_i) + \nu) = 1 - \sum_{j=1}^k f^{-1}(f(p_j) + \nu).
\end{equation}
Thus,
\[\frac{\partial }{\partial   \nu} W(k, \nu)\mid_{\nu=\nu_k} = 1 - \sum_{j=1}^k f^{-1}(f(p_j) + \nu_k) = 1 - \sum_{j=1}^k [T_k(p)]_j=0. \]
As $W(k, \cdot)$ is strictly concave in $\nu$,
 $W(k, \cdot)$ is maximized at $\nu_k$.
Thus, we have: (1) $W(k+1, \nu_{k+1}) \ge W(k+1, \nu_k)$, and (2) $W(k-1, \nu_{k-1}) \ge W(k-1, \nu_k)$. With these in hand, we have:
\begin{align}\label{delta-L}
    L(k+1) - 2L(k) + L(k-1) &= W(k+1, \nu_{k+1}) - 2 W(k, \nu_k) + W(k-1, \nu_{k-1}) \\
    &\ge [W(k+1, \nu_k) - W(k, \nu_k)] - [W(k, \nu_k) - W(k-1, \nu_k)].\nonumber
\end{align}
Now, 
due to the definition of $W$, the last display equals
\begin{align}\label{phi-ineq}
- \phi^*(f(p_{k+1}) + \nu_k) + \phi^*(f(p_{k}) + \nu_k) \ge 0,
\end{align}
the inequality holding as $p_k \ge p_{k+1}$, 
and as the mapping $p \mapsto \phi^*(f(p) + \nu_k)$ is increasing in $p$ since so are $\phi^*$ and $f$. 
This concludes the proof. 

\section{Proof of discrete convexity for dual Bregman projection}
\appendixtocentry{Proof of discrete convexity for dual Bregman projection}

\label{app:proof_discrete_convexity_dual}



We denote 
\(
\theta_x(y) = \phi''(y)(y-x).
\)
As observed before, we have for all admissible $x, y$ that
\[
\frac{\partial }{\partial  y} \mathrm{d}_\phi(x, y) = \theta_x(y),
\]
and the convexity condition for the second argument of $\mathrm{d}_\phi$ of Assumption~\ref{ass:convex_in_second} is given by:
\[
\frac{\partial }{\partial  y} \theta_x(y) \ge 0 \Leftrightarrow \phi''(y) + \phi'''(y) (y-x) \ge 0 \quad \text{for all } y \ge x \ge 0.
\]

The dual projection for any $1 \le i \le j \le V$ is given (for optimal Lagrange multiplier $\nu_j$) by:
\[
\theta_{p_i}([T_j^*(p)]_i) = \nu_j \Leftrightarrow \phi''([T_j^*(p)]_i) ([T_j^*(p)]_i - p_i) = \nu_j.
\]

Denote the dual Bregman objective, as a function of the selected sparsity $k$, as:
\[
\mathrm{cost}^*(k) = \mathrm{D}_\phi\left(p, \left(T^*_k(p), 0_{V-k} \right) \right) + \lambda k.
\]


We now demonstrate that $\mathrm{cost}^*(k)$ is discretely convex in $k$. For this, we will directly show that the second-order differences of this function are nonnegative at every $k \in \{2, \ldots, V-1\}$. Specifically, we can write:
\begin{align*}
    &\Delta^{*,2}(k) := \mathrm{cost}^*(k+1) - 2 \mathrm{cost}^*(k) + \mathrm{cost}^*(k-1) \\
    &= \mathrm{D}_\phi\left(p, \left(T^*_{k+1}(p), 0_{V-k-1} \right) \right) - 2 \mathrm{D}_\phi\left(p, \left(T^*_k(p), 0_{V-k} \right) \right) + \mathrm{D}_\phi\left(p, \left(T^*_{k-1}(p), 0_{V-k+1} \right) \right) 
\end{align*}
We now  decompose this quantity into three terms
corresponding to three ranges of index $i \in [V]$, namely $i \in [k-1]$, $i \in \{ k, k+1\}$, and 
$i \in \{k+2, \ldots, V\}$. We obtain:
\begin{align*}
    \Delta^{*,2}(k) =& \sum_{i=1}^{k-1} \Big\{ \left\{\mathrm{d}_\phi(p_i, [T^*_{k+1}(p)]_i) - \mathrm{d}_\phi(p_i, [T^*_{k}(p)]_i) \right\} 
    + \left\{\mathrm{d}_\phi(p_i, [T^*_{k-1}(p)]_i) - \mathrm{d}_\phi(p_i, [T^*_{k}(p)]_i) \right\} \Big\} \\
    & +  \Big\{ \left(\phi(p_k) - \phi(0) - \phi'(0) \cdot p_k \right) - 2 \left( \phi(p_k) - \phi([T^*_k(p)]_k) - \phi'([T^*_k(p)]_k) \cdot (p_k - [T^*_k(p)]_k) \right)  \\
    & +   \left( \phi(p_k) - \phi([T^*_{k+1}(p)]_k) - \phi'([T^*_{k+1}(p)]_k) \cdot (p_k - [T^*_{k+1}(p)]_k) \right)  \\
    & +   \left(\phi(p_{k+1}) - \phi(0) - \phi'(0) \cdot p_{k+1} \right) - 2 \left(\phi(p_{k+1}) - \phi(0) - \phi'(0) \cdot p_{k+1} \right) \\
    & + \left( \phi(p_{k+1}) - \phi([T^*_{k+1}(p)]_{k+1}) - \phi'([T^*_{k+1}(p)]_{k+1}) \cdot (p_{k+1} - [T^*_{k+1}(p)]_{k+1}) \right) \Big\} \\
    & - \sum_{i=k+2}^V \{\mathrm{d}_\phi(p_i, 0) - 2 \mathrm{d}_\phi(p_i, 0) + \mathrm{d}_\phi(p_i, 0)\}.
\end{align*}
 The last sum is identically zero, so we engage with the other two ranges of indices.

\paragraph{Range 1: $i \in [k-1]$.} For Range 1, recall that for any convex function $\psi,$ it holds for any two points $x, y$ in its domain that $\psi(x) - \psi(y) \ge \psi'(y) (x - y)$. Now, notice that for each $i$ in Range~1, each of the two terms in figure brackets can be bounded via the convexity of $\mathrm{d}_\phi(x, \cdot)$ in its second argument as:
\begin{align*}
    &\mathrm{d}_\phi(p_i, [T^*_{k+1}(p)]_i) - \mathrm{d}_\phi(p_i, [T^*_{k}(p)]_i) \ge \left(\frac{\partial }{\partial   y} \mathrm{d}_\phi(p_i, y) \right) \Big|_{y = [T^*_{k}(p)]_i} \cdot \left( [T^*_{k+1}(p)]_i - [T^*_{k}(p)]_i \right) \\
    &= \theta_{p_i}\left( [T^*_{k}(p)]_i \right) \cdot \left( [T^*_{k+1}(p)]_i - [T^*_{k}(p)]_i \right)
    = \nu_k \cdot \left( [T^*_{k+1}(p)]_i - [T^*_{k}(p)]_i \right)
\end{align*}
and:
\begin{align*}
    &\mathrm{d}_\phi(p_i, [T^*_{k-1}(p)]_i) - \mathrm{d}_\phi(p_i, [T^*_{k}(p)]_i) \ge \left(\frac{\partial }{\partial   y} \mathrm{d}_\phi(p_i, y) \right) \Big|_{y = [T^*_{k}(p)]_i} \cdot \left( [T^*_{k-1}(p)]_i - [T^*_{k}(p)]_i \right) \\
    &= \theta_{p_i} \left( [T^*_{k}(p)]_i \right) \cdot \left( [T^*_{k-1}(p)]_i - [T^*_{k}(p)]_i \right) 
    = \nu_k \cdot \left( [T^*_{k-1}(p)]_i - [T^*_{k}(p)]_i \right).
\end{align*}

As a result, we may simplify the Range~1 sum as follows,
using that by definition, the first $j$ terms in the projection $T^*_j$ for each $j \in \{k-1, k, k+1\}$ sum to unity:
\begin{align*}
    \text{Range~1 Sum} &\ge \sum_{i=1}^{k-1} \nu_k \cdot \left( \left\{ [T^*_{k+1}(p)]_i - [T^*_{k}(p)]_i \right\} + \left\{ [T^*_{k-1}(p)]_i - [T^*_{k}(p)]_i \right\}  \right) \\
    &= \nu_k \left( \sum_{i=1}^{k-1} [T^*_{k+1}(p)]_i - 2 \sum_{i=1}^{k-1} [T^*_{k}(p)]_i + \sum_{i=1}^{k-1} [T^*_{k-1}(p)]_i\right) \\
    &= \nu_k \left( \left(1 - [T^*_{k+1}(p)]_k - [T^*_{k+1}(p)]_{k+1} \right) - 2 (1 - [T^*_{k}(p)]_k) + 1 \right) \\
    &= \nu_k \left( 2 [T^*_{k}(p)]_k - [T^*_{k+1}(p)]_k - [T^*_{k+1}(p)]_{k+1} \right).
\end{align*}

\paragraph{Range 2: $i \in \{ k, k+1\}$.} 
For Range 2, we first note that the following three types of terms cancel out: $\phi(0)$, $\phi(p_k)$, $\phi(p_{k+1})$. 
Furthermore, terms involving $\phi'(0)$ 
vanish by assumption.

The remaining terms in the Range 2 sum can then be written as:
\begin{align*}
    \text{Range~2 Sum} &\ge \Big\{ - 2 \left( - \phi([T^*_k(p)]_k) - \phi'([T^*_k(p)]_k) \cdot (p_k - [T^*_k(p)]_k) \right)  \\
    & +  \left( - \phi([T^*_{k+1}(p)]_k) - \phi'([T^*_{k+1}(p)]_k) \cdot (p_k - [T^*_{k+1}(p)]_k) \right) \Big\} \\
     & + \Big\{ - \phi([T^*_{k+1}(p)]_{k+1}) - \phi'([T^*_{k+1}(p)]_{k+1}) \cdot (p_{k+1} - [T^*_{k+1}(p)]_{k+1})\Big\}.
\end{align*}

Now, we can bound
\[
    - \phi'([T^*_{k+1}(p)]_{k+1}) \cdot p_{k+1} \ge - \phi'([T^*_{k+1}(p)]_{k+1}) \cdot p_{k},
\]
using that $p_k \ge p_{k+1}$ and the strict convexity of $\phi$. 
We find the lower bound
\begin{align*}
    \text{Range~2 Sum} & \ge - 2 \Big\{ - \phi([T^*_k(p)]_k) - \phi'([T^*_k(p)]_k) \cdot (p_k - [T^*_k(p)]_k) \Big\} \\
    & + \Big\{- \phi([T^*_{k+1}(p)]_k) - \phi'([T^*_{k+1}(p)]_k) \cdot (p_k - [T^*_{k+1}(p)]_k) \Big\} \\
     & + \Big\{ - \phi([T^*_{k+1}(p)]_{k+1}) - \phi'([T^*_{k+1}(p)]_{k+1}) \cdot (p_{k} - [T^*_{k+1}(p)]_{k+1})\Big\}.
\end{align*}
By adding and subtracting the term $\phi(p_k)$ twice, 
we have the following equivalent bound:
\begin{align*}
    \text{Range~2 Sum} & \ge - 2 \Big\{ \phi(p_k) - \phi([T^*_k(p)]_k) - \phi'([T^*_k(p)]_k) \cdot (p_k - [T^*_k(p)]_k) \Big\} \\
    & + \Big\{ \phi(p_k) - \phi([T^*_{k+1}(p)]_k) - \phi'([T^*_{k+1}(p)]_k) \cdot (p_k - [T^*_{k+1}(p)]_k) \Big\} \\
     & + \Big\{\phi(p_k) - \phi([T^*_{k+1}(p)]_{k+1}) - \phi'([T^*_{k+1}(p)]_{k+1}) \cdot (p_{k} - [T^*_{k+1}(p)]_{k+1})\Big\} \\
     &= - 2 \mathrm{d}_\phi \left(p_k, [T^*_k(p)]_k \right) + \mathrm{d}_\phi \left(p_k, [T^*_{k+1}(p)]_k \right) + \mathrm{d}_\phi \left(p_k, [T^*_{k+1}(p)]_{k+1} \right).
\end{align*}

\paragraph{Returning to the main bound} 
We can now merge the cases, 
resulting in the following tight lower bound of the second differential of the cost function:
\begin{align*}
    & \Delta^{*,2}(k) \ge \quad \nu_k \left( 2 [T^*_{k}(p)]_k - [T^*_{k+1}(p)]_k - [T^*_{k+1}(p)]_{k+1} \right) \\
    & - \quad  2 \mathrm{d}_\phi \left(p_k, [T^*_k(p)]_k \right) + \mathrm{d}_\phi \left(p_k, [T^*_{k+1}(p)]_k \right) + \mathrm{d}_\phi \left(p_k, [T^*_{k+1}(p)]_{k+1} \right).
\end{align*}

Now, define the following key auxiliary function $\psi_k:[0,1]\to \mathbb{R}$, such that for all $x\in [0,1]$: 
\[
\psi_k (x) = \nu_k \cdot x - \mathrm{d}_\phi(p_k, x).
\]
This lets us rewrite our lower bound equivalently as:
\begin{align}\label{d2lb}
    \Delta^{*,2}(k)
    \ge 2 \, \psi \left( [T^*_{k}(p)]_k \right) -  \psi \left([T^*_{k+1}(p)]_k \right) - \psi \left([T^*_{k+1}(p)]_{k+1} \right) .
\end{align}

We now establish a monotonicity property for $\psi_k$.

\begin{lemma} \label{lem:dual_helper_psi}
    For every $k \in [V]$ the function $\psi_k(x)$ is increasing on $x \in \left[0, [T^*_k(p)]_k \right]$.
\end{lemma}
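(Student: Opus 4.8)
The plan is to show directly that $\psi_k' \ge 0$ on the stated interval, splitting into the two regimes $x \le p_k$ and $x \ge p_k$ and invoking the dual-validity assumption in the second. First I would differentiate: using $\mathrm{d}_\phi(p_k, x) = \phi(p_k) - \phi(x) - \phi'(x)(p_k - x)$, the chain rule gives, for $x \in (0,1]$,
\[
\psi_k'(x) \;=\; \nu_k - \frac{\partial}{\partial x}\mathrm{d}_\phi(p_k, x) \;=\; \nu_k - \phi''(x)(x - p_k) \;=\; \nu_k - \theta_{p_k}(x).
\]
Next I would recall two facts from the dual renormalization theory. From the proof of Theorem~\ref{d-ren}, the optimal multiplier obeys $\nu_k > 0$ and each renormalized coordinate strictly exceeds its input, so in particular $[T^*_k(p)]_k > p_k$; and the defining relation~\eqref{ren-dual} of the dual map reads $\theta_{p_k}([T^*_k(p)]_k) = \nu_k$.

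With these in hand, I would argue as follows. For $x \in (0, p_k]$, since $\phi'' \ge 0$ and $x - p_k \le 0$ we get $\theta_{p_k}(x) \le 0 < \nu_k$, hence $\psi_k'(x) \ge \nu_k > 0$. For $x \in [p_k, [T^*_k(p)]_k]$, this interval lies inside $[p_k, 1]$ (or $(0,1]$ when $p_k = 0$), on which Assumption~\ref{ass:convex_in_second} guarantees that $y \mapsto \mathrm{d}_\phi(p_k, y)$ is strictly convex; therefore its derivative $\theta_{p_k}$ is increasing there, so $\theta_{p_k}(x) \le \theta_{p_k}([T^*_k(p)]_k) = \nu_k$, and again $\psi_k'(x) \ge 0$. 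Combining the two ranges shows $\psi_k' \ge 0$ throughout $(0, [T^*_k(p)]_k]$; since $\psi_k$ is moreover continuous at $x = 0$, it is nondecreasing on the closed interval $[0, [T^*_k(p)]_k]$, which is the claim.

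I do not anticipate a genuine difficulty here. The only point needing care is the behavior at the left endpoint $x = 0$: when $\phi''(0^+) = +\infty$ the formula for $\psi_k'$ blows up near $0$, but this is harmless because one concludes monotonicity on $[0, [T^*_k(p)]_k]$ from nonnegativity of $\psi_k'$ on the open interval together with continuity of $\psi_k$ at the endpoint (the value $\psi_k(0) = -\mathrm{d}_\phi(p_k, 0)$ being finite under the dual-validity hypotheses). A secondary bookkeeping check is that the convexity region $[p_k,1]$ from Assumption~\ref{ass:convex_in_second} contains $[T^*_k(p)]_k$, which holds because dual renormalization only pushes coordinates upward and never past $1$.
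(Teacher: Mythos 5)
Your proof is correct and follows the same strategy as the paper's: differentiate $\psi_k$, identify $\nu_k = \theta_{p_k}([T^*_k(p)]_k)$ via the dual renormalization equation, and then argue $\theta_{p_k}(x) \le \nu_k$ on the interval. You are in fact slightly more careful than the paper's terse proof — you explicitly split $x \le p_k$ (where $\theta_{p_k}(x) \le 0 < \nu_k$ since $\phi'' \ge 0$) from $x \in [p_k, [T^*_k(p)]_k]$ (where Assumption~\ref{ass:convex_in_second} supplies monotonicity of $\theta_{p_k}$), whereas the paper glosses over the first regime, and you also address the endpoint behavior at $x=0$ when $\phi''$ blows up; both are genuine bookkeeping points worth recording.
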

\begin{proof}
    We consider the derivative of the function $\psi_k$:
\[
\frac{\partial }{\partial  x} \psi_k (x) = \nu_k - \frac{\partial }{\partial  x} \mathrm{d}_\phi(p_k, x) = \nu_k - \theta_{p_k}(x) = \theta_{p_k} \left( [T^*_k(p)]_k \right) - \theta_{p_k}(x),
\]
where we have used the connection between $\theta_x(y)$ and $\nu_k$ (see Lemma \ref{d-ren}).

Now, recalling that by assumption, 
$\frac{\partial }{\partial  y} \theta_x(y) \ge 0$ for all $y \ge x \ge 0$, and using that $[T^*_k(p)]_k \ge p_k$ by the properties of the dual projection method (see Lemma \ref{d-ren}), 
we have that:
\[
\frac{\partial }{\partial  x} \psi_k (x) = \theta_{p_k} \left( [T^*_k(p)]_k \right) - \theta_{p_k}(x) \ge 0,
\]
so long as $0 \le x \le [T^*_k(p)]_k$. 
\end{proof}

Continuing,
by the properties of the dual projection, we have:
\[
    [T^*_{k}(p)]_k \ge [T^*_{k+1}(p)]_k \ge [T^*_{k+1}(p)]_{k+1}.
\]
In view of Lemma~\ref{lem:dual_helper_psi}, \eqref{d2lb} implies that
\begin{align*}
    \Delta^{*,2}(k) 
    & \ge \left[\psi \left( [T^*_{k}(p)]_k \right) - \psi \left([T^*_{k+1}(p)]_k \right) \right]
    + \left[ \psi \left( [T^*_{k}(p)]_k \right) - \psi \left([T^*_{k+1}(p)]_{k+1} \right) \right] \ge 0 + 0 = 0.
\end{align*}
This concludes the proof of dual discrete convexity of the Bregman cost function.

\section{Algorithmic details}
\label{alg-det}
\appendixtocentry{Algorithmic details}

\subsection{Computing the dual renormalization map}

Recall that 
when $\phi$  is dual valid,
the renormalization map $T^*_\phi$ is uniquely defined for $x \in \Delta_{\mathrm{sub},k}$ with
$\sum_i x_i>0$ by  the fixed point equation (see Lemma \ref{d-ren})
    \begin{equation*}
        [T^*_\phi(x)]_i = x_i+ \nu^*/f'([T^*_\phi(x)]_i) \quad \text{ for all } i \in [k], \text{ where } \nu^*\in \mathbb{R} \text{ is chosen so that } \sum_{i=1}^k [T^*_\phi(x)]_i = 1.
    \end{equation*}
To  compute $T^*_\phi$, recall from Section \ref{not+}
the function $\Psi$ from \eqref{psi} with 
$\Psi(x, y, \nu) := \phi''(y)(y-x) - \nu$ for all $x,y,\nu$. 
Then, for a fixed $\nu$, $[T(x)]_i$ satisfying the equation $[T(x)]_i = x_i+ \nu/f'([T(x)]_i)$
is equivalent to solving
$\Psi(x_i, y_i, \nu) = 0$ for $y_i=[T(x)]_i$.
The monotonicity properties from Lemma \ref{d-ren} then suggest 
the following algorithm, consisting of a
binary search over $\nu \in (0,M]$, 
and then over each coordinate of $T$ solving \(\phi''([T(x)]_i)([T(x)]_i-x_i)=\nu\). 

\begin{algorithm}[H]
\caption{Dual Renormalization Map $T^*_\phi(x)$ via Nested Binary Search}
\label{alg:dual-renormalization}
\begin{algorithmic}[1]
\Require Convex generator $\phi$ with derivatives $f = \phi'$, $f'' = \phi''$; input vector $x \in \Delta_{\mathrm{sub},k}$ with $\sum x_i < 1$; tolerance $\varepsilon > 0$
\Ensure Renormalized vector $\hat{p} = T^*_\phi(x) \in \Delta_k$
\Function{DualRenormalize}{$x$, $\phi$, $\varepsilon$}
    \State $k \gets$ length of $x$
    \State $f'' \gets \phi''$
    \State $M \gets \phi''(1) \cdot (1 - \max_i x_i)$ \Comment{Upper bound on feasible $\nu$}
    \State Initialize $\nu_{\text{low}} \gets 0$, $\nu_{\text{high}} \gets M$
    \While{$\nu_{\text{high}} - \nu_{\text{low}} > \varepsilon$}
        \State $\nu \gets (\nu_{\text{low}} + \nu_{\text{high}})/2$
        \For{$i = 1$ to $k$}
            \State $x_i \gets x[i]$
            \State $y[i] \gets$ \Call{SolveRoot}{$x_i$, $\nu$, $f''$, $\varepsilon$}
        \EndFor
        \State $G \gets \sum_{i=1}^k y[i]$
        \If{$G < 1$}
            \State $\nu_{\text{low}} \gets \nu$
        \Else
            \State $\nu_{\text{high}} \gets \nu$
        \EndIf
    \EndWhile
    \State \Return $y$
\EndFunction

\Function{SolveRoot}{$x_i$, $\nu$, $f''$, $\varepsilon$}
       \State $a \gets x_i,\ b \gets 1$
    \While{$b - a > \varepsilon$}
        \State $m \gets (a + b)/2$
        \State $\Psi \gets f''(m) \cdot (m - x_i) - \nu$
        \If{$\Psi < 0$}
            \State $a \gets m$
        \Else
            \State $b \gets m$
        \EndIf
    \EndWhile
    \State \Return $(a + b)/2$
\EndFunction
\end{algorithmic}
\end{algorithm}

\subsection{Pseudocode for algorithms}
\label{alg-pseudo}

See \Cref{alg:sparse-bregman-binary} and \Cref{alg:sparse-dual-bregman} for pseudocode for sparse primal (resp. dual) Bregman decoding.

\begin{algorithm}[H]
\caption{Discrete Binary Search for Unimodal Cost Minimization}
\label{alg:binary-search}
\begin{algorithmic}[1]
\Require Callable function \textsc{ComputeCost}, maximum support size $V$
\Ensure Optimal support size $k^*$ minimizing $\textsc{ComputeCost}(k)$

\Function{BinarySearch}{$\textsc{ComputeCost}$, $V$}
    \State $c_1 \gets$ \Call{ComputeCost}{$1$}
    \State $c_2 \gets$ \Call{ComputeCost}{$2$}
    \If{$c_2 - c_1 \ge 0$}
        \State \Return $1$
    \EndIf

    \State $c_{V-1} \gets$ \Call{ComputeCost}{$V-1$}
    \State $c_V \gets$ \Call{ComputeCost}{$V$}
    \If{$c_V - c_{V-1} \le 0$}
        \State \Return $V$
    \EndIf

    \State Initialize $L \gets 1$, $R \gets V$
    \While{$R - L > 1$}
        \State $m \gets \lfloor (L + R)/2 \rfloor$
        \State $c_m \gets$ \Call{ComputeCost}{$m$}
        \State $c_{m+1} \gets$ \Call{ComputeCost}{$m+1$}
        \If{$c_{m+1} - c_m \ge 0$}
            \State $R \gets m$
        \Else
            \State $L \gets m$
        \EndIf
    \EndWhile
    \State \Return $R$
\EndFunction
\end{algorithmic}
\end{algorithm}

\begin{algorithm}[H]
\caption{Regularized Sparse Primal Bregman Decoding}
\label{alg:sparse-bregman-binary}
\begin{algorithmic}[1]
\Require Probability vector $p \in \Delta_V$, valid convex generator $\phi$, sparsity penalty $\lambda \ge 0$
\Ensure Sparse decoded distribution $\hat{p} \in \Delta_V$

\Function{SparsePrimalBregmanDecode}{$p$, $\phi$, $\lambda$}
    \State Sort $p$ in descending order: $p_{(1)} \ge p_{(2)} \ge \cdots \ge p_{(V)}$
    \State Define $f = \phi'$
    
    \Function{ComputeRenormalization}{$x \in \mathbb{R}^k$}
        \State Solve for $\nu \in \mathbb{R}$ such that 
        \(
        \sum_{i=1}^k f^{-1}(f(x_i) + \nu) = 1
        \)
        \State \Return $\hat{p}^{(k)}$ with $[\hat{p}^{(k)}]_i = f^{-1}(f(x_i) + \nu)$ for $i \in [k]$
    \EndFunction
    
    \Function{ComputeCost}{$k$}
        \State Let $x = p[1{:}k]$
        \State $\hat{p}^{(k)} \gets$ \Call{ComputeRenormalization}{$x$}
        \State Pad with zeros: $\hat{p}^{(k)} \leftarrow (\hat{p}^{(k)}_1, \ldots, \hat{p}^{(k)}_k, 0, \ldots, 0)$
        \State Compute $\mathrm{D}_\phi(\hat{p}^{(k)}, p) = \sum_{i=1}^V \left[ \phi(\hat{p}^{(k)}_i) - \phi(p_i) - f(p_i)(\hat{p}^{(k)}_i - p_i) \right]$
        \State \Return $\mathrm{cost}(k) = \mathrm{D}_\phi(\hat{p}^{(k)}, p) + \lambda k$
    \EndFunction
\State $k^* \gets$ \Call{BinarySearch}{ComputeCost, $V$}
    \State Recompute $\hat{p}^{(k^*)}$ using \Call{ComputeRenormalization}{$p[1{:}k^*]$}
    \State Pad with zeros to full length $V$
    \State \Return $\hat{p}^{(k^*)}$
\EndFunction
\end{algorithmic}
\end{algorithm}

\begin{algorithm}[H]
\caption{Regularized Sparse Dual Bregman Decoding}
\label{alg:sparse-dual-bregman}
\begin{algorithmic}[1]
\Require Probability vector $p \in \Delta_V$, valid convex generator $\phi$, sparsity penalty $\lambda \ge 0$
\Ensure Sparse decoded distribution $\hat{p} \in \Delta_V$

\Function{SparseDualBregmanDecode}{$p$, $\phi$, $\lambda$}
    \State Sort $p$ in descending order: $p_{(1)} \ge p_{(2)} \ge \cdots \ge p_{(V)}$
    \State Define $f = \phi'$, \quad $f' = \phi''$
    
    \Function{ComputeDualRenormalization}{$x \in \mathbb{R}^k$}
        \State Solve for $\nu \in \mathbb{R}$ such that:
        \(
            \sum_{i=1}^k [T^*_\phi(x)]_i = 1,
        \)
        where $[T^*_\phi(x)]_i$ satisfies the fixed-point equation:
        \(
            [T^*_\phi(x)]_i = x_i + \nu/{f'([T^*_\phi(x)]_i)}.
        \)
        \State \Return $\hat{p}^{(k)} = T^*_\phi(x)$
    \EndFunction

    \Function{ComputeDualCost}{$k$}
        \State Let $x = p[1{:}k]$
        \State $\hat{p}^{(k)} \gets$ \Call{ComputeDualRenormalization}{$x$}
        \State Pad with zeros: $\hat{p}^{(k)} \leftarrow (\hat{p}^{(k)}_1, \ldots, \hat{p}^{(k)}_k, 0, \ldots, 0)$
        \State Compute $\mathrm{D}_\phi(p, \hat{p}^{(k)}) = \sum_{i=1}^V \left[ \phi(p_i) - \phi(\hat{p}^{(k)}_i) - f(\hat{p}^{(k)}_i)(p_i - \hat{p}^{(k)}_i) \right]$
        \State \Return $\mathrm{cost}(k) = \mathrm{D}_\phi(p, \hat{p}^{(k)}) + \lambda k$
    \EndFunction

\State $k^* \gets$ \Call{BinarySearch}{ComputeDualCost, $V$}

    \State Recompute $\hat{p}^{(k^*)}$ using \Call{ComputeDualRenormalization}{$p[1{:}k^*]$}
    \State Pad with zeros to full length $V$
    \State \Return $\hat{p}^{(k^*)}$
\EndFunction
\end{algorithmic}
\end{algorithm}

\section{Example: \texorpdfstring{$\alpha$}{alpha}-Bregman decoding}
\appendixtocentry{Example: \texorpdfstring{$\alpha$}{alpha}-Bregman decoding}

\subsection{Proof of Lemma \ref{lem:dual-alpha}}

We first restate the lemma.

\primalsatisfiesass*

\begin{proof}
For Assumption~\ref{ass:convex_in_second}, we can explicitly write:
\[
\mathrm{d}_\phi(x, y) = \frac{x^{\alpha}}{\alpha(\alpha-1)} - \frac{y^{\alpha}}{\alpha(\alpha-1)} - \frac{y^{\alpha-1}}{\alpha-1}(x-y) = \frac{y^{\alpha}}{\alpha} - \frac{x}{\alpha-1}y^{\alpha-1} + \frac{x^{\alpha}}{\alpha(\alpha-1)}.
\]
Therefore, the second derivative in $y$ of this expression is
\[
(\alpha-1) y^{\alpha-2} - (\alpha-2) x y^{\alpha-3} = y^{\alpha-3} (y(\alpha-1) - x(\alpha-2)) = y^{\alpha-3} \left(y(\alpha-1) + x (2 - \alpha)\right).
\]
Now, if $y \ge x$, then using $\alpha - 1 \ge 0$ we have that the above expression is
\[
\ge y^{\alpha-3} (x(\alpha-1) + x(2 - \alpha)) 
= y^{\alpha-3} x \ge 0,
\]
confirming the convexity in $y$. 
Now for 
the condition that $x\mapsto u(x) := {x \phi''(x)}/{\phi'(x)}$ is non-decreasing from Assumption \ref{assump:dual-greedy-2}, we can observe that
\[
\phi'(x) \phi'''(x) - \phi''(x)^2 = \frac{x^{\alpha-1}}{\alpha-1} \cdot (\alpha-2) x^{\alpha-3} - (x^{\alpha-2})^2 = - \frac{x^{2\alpha-4}}{\alpha-1}.
\]
Therefore, we identically have: 
\begin{equation*}
    \phi'(x) \phi''(x) + x (\phi'(x) \phi'''(x) - \phi''(x)^2) = \frac{x^{2\alpha-3}}{\alpha-1} - x \frac{x^{2\alpha-4}}{\alpha-1} = 0,
\end{equation*}
thus concluding the proof.
\end{proof}

\subsection{Proof of Proposition~\ref{prop:alpha-special-cases}}
\label{subsec:proof-alpha-special-cases}

Recall the $\alpha$--renormalization map  
\(
  [T_{\alpha}(p)]_i \;=\; \bigl(p_i^{\alpha-1}+\nu\bigr)^{\frac1{\alpha-1}},
i\in[k],
\)
where the shift parameter $\nu=\nu(\alpha,p)$ is chosen so that
$\sum_{i=1}^k[T_{\alpha}(p)]_i = 1$.
We treat each value (or limit) of~$\alpha$ in turn.

\paragraph{The limit $\alpha\to-\infty$.}

Define
\[
F_\beta(\nu) \;:=\;
\sum_{i=1}^{k}\bigl(p_i^{\beta}+\nu\bigr)^{1/\beta},
\qquad
\beta:=\alpha-1<0.
\]
Because $x\mapsto x^{1/\beta}$ is strictly \emph{decreasing} and convex on $(0,\infty)$ for $\beta<0$,
$F_\beta$ is strictly decreasing and continuous on the interval
$\bigl(-\min_i p_i^{\beta},\infty\bigr)$.
Moreover,
\(
\lim_{\nu\downarrow-\min_i p_i^{\beta}} F_\beta(\nu)=\infty
\)
and $\lim_{\nu\uparrow\infty} F_\beta(\nu)=0$, so a unique root
$\nu_\beta$ with $F_\beta(\nu_\beta)=1$ exists.
Because $F_\beta(0)=S :=\; \sum_{i=1}^{k}p_i\le 1$ and $F_\beta$ is decreasing, we have $\nu_\beta\le 0$.

Let
\(
q^{(\alpha)}_i \;=\; [T_\alpha(p)]_i
\;=\; \bigl(p_i^{\beta}+\nu_\beta\bigr)^{1/\beta},
\)
and $i^*$ be the index where $p_{i}$ is largest.
Using the constraint $\sum_i q^{(\alpha)}_i=1$,
\[
q^{(\alpha)}_{i^\star}
=1-\sum_{i\neq i^\star} q^{(\alpha)}_i
=\delta + p_{i^\star} + \sum_{i\neq i^\star}\!\bigl(p_i-q^{(\alpha)}_i\bigr)
\ge p_{i^\star}+\delta.
\]
Raising $q^{(\alpha)}_{i^\star} = \bigl(p_{i^\star}^{\beta}+ \nu_\beta\bigr)^{1/\beta}$
to the power $\beta<0$ yields
\begin{equation}
\nu_\beta
=\bigl(p_{i^\star}+\delta+R_\beta\bigr)^{\beta}-p_{i^\star}^{\beta},
\qquad
R_\beta:=\sum_{i\neq i^\star}\!\bigl(p_i-q^{(\alpha)}_i\bigr)\in[0,\delta].
\label{3.1}
\end{equation}
For $i\neq i^\star$, we have
$\displaystyle \nu_\beta/p_i^{\beta}\to 0$.
Indeed, \eqref{3.1} implies
$|\nu_\beta|\le p_{i^\star}^{\beta}(c^{\beta}-1)$
with $c:=(p_{i^\star}+\delta)/p_{i^\star}>1$. 
Because $\beta\to-\infty$, $c^{\beta}\to 0$, we have
$|\nu_\beta|=O\!\bigl(p_{i^\star}^{\beta}\bigr)
=o\!\bigl(p_i^{\beta}\bigr)$.
Then,
\begin{equation}
q^{(\alpha)}_i
= p_i\Bigl(1+\frac{\nu_\beta}{p_i^{\beta}}\Bigr)^{1/\beta}
\to p_i,
\qquad i\neq i^\star.
\label{5.1}
\end{equation}
Summing \eqref{5.1} over $i\neq i^\star$ and using
$\sum_i q^{(\alpha)}_i =1$ gives
\begin{equation}
q^{(\alpha)}_{i^\star}
= 1-\sum_{i\neq i^\star} q^{(\alpha)}_i
\to
1-\sum_{i\neq i^\star}p_i
= p_{i^\star} + \delta.
\label{6.1}
    \end{equation}
Equations \eqref{5.1} and \eqref{6.1} establish
$q^{(\alpha)}\to T_{-\infty}(p)$ component‐wise,
completing the proof.

\paragraph{The case ${\alpha = \tfrac32}$.}
Now $\alpha-1 = \tfrac12$, hence
\(
  [T_{1.5}(p)]_i
  \;=\;
  \bigl(\sqrt{p_i} + \nu\bigr)^{2}, \qquad i\in[k].
\)
Set
\( s := \sum_{j=1}^k \sqrt{p_j}\)
and
\( A := \sum_{j=1}^k p_j.\)
The normalization condition becomes
\[
  1
  \;=\;
  \sum_{i=1}^k (\sqrt{p_i}+\nu)^2
  \;=\;
  A + 2 s \nu + k \nu^2 .
\]
Solving $k\nu^2 + 2s\nu + (A-1) = 0$ for the root that yields
non–negative probabilities gives
\(
  \nu
  \;=\;
  \frac{-s + \sqrt{s^2 + k\,(1-A)}}{k}.
\)
Hence
\[
  [T_{1.5}(p)]_i
  \;=\;
  \left(
    \sqrt{p_i}
    +
    \frac{\sqrt{s^2 + k\,(1-A)} - s}{k}
  \right)^{\!2},
  \qquad i\in[k].
\]

\paragraph{The case ${\alpha = 2}$.}
Here $\alpha-1 = 1$, so Definition~\ref{alpha-ren-def} yields
\(
  [T_{2}(p)]_i \;=\; p_i + \nu, \qquad i\in[k].
\)
The normalization condition gives  
\(
  1 \;=\; \sum_{i=1}^k(p_i+\nu) \;=\; \sum_{i=1}^k p_i + k\nu,
\)
hence  
\(
  \nu = \tfrac{1-\sum_{j=1}^k p_j}{k}.
\)
Substituting yields
\[
  [T_{2}(p)]_i \;=\; p_i \;+\; \frac{1-\sum_{j=1}^k p_j}{k},
  \qquad i\in[k].
\]

\paragraph{The limit ${\alpha \to +\infty}$.}
Write $\beta := \alpha-1 \to +\infty$.  
Let $\nu = c^{\beta}$ with $c\in[0,1]$.
Then
\[
  [T_{\alpha}(p)]_i
  \;=\;
  \bigl(p_i^{\beta} + c^{\beta}\bigr)^{1/\beta}
  \;=\;
  \exp\!\Bigl\{\tfrac1\beta
    \log\!\bigl(p_i^{\beta} + c^{\beta}\bigr)\Bigr\}.
\]
Using
$\frac1\beta \log(a^{\beta}+b^{\beta}) \to \log(\max\{a,b\})$
as $\beta\to\infty$ gives
\(
  \lim_{\alpha\to\infty}[T_{\alpha}(p)]_i
  \;=\;
  \max\{p_i,c\}.
\)
Choose the \emph{water level} $c$ so that
\(
  \sum_{i=1}^k \max\{p_i,c\} \;=\; 1.
\)
This furnishes the claimed water–filling rule.

\medskip
The four cases above prove Proposition~\ref{prop:alpha-special-cases}.
\qed

\subsection{Illustrating primal and dual renormalization}
\label{illus-pd}



We consider the peaked vector $v = [0.1, \, 0.001, \, 0.001, \, 0.001, \, 0.001]$, and plot how both of its distinct constituent values get transformed by the primal and dual Bregman $\alpha$-renormalization (by symmetry, all copies of $0.001$ are guaranteed to get mapped to the same value by any of our renormalizations). The resulting plots are in Figure~\ref{fig:primal_dual_comparison_peaked}. 
As predicted by our theory, both renormalization families coincide at three values of the parameter, namely at $\alpha \in \{1, 2, \infty\}$. Furthermore, the primal family evolves more gradually than the dual family between the endpoints of the parameter interval $\alpha \in (1, 2]$, while the reverse behavior occurs for $\alpha \in (2, \infty)$ (where both renormalizations gradually converge to the water-filling limit which, in this case, is the uniform distribution).

\begin{figure}[htbp]
    \centering
    {\includegraphics[width=\linewidth]{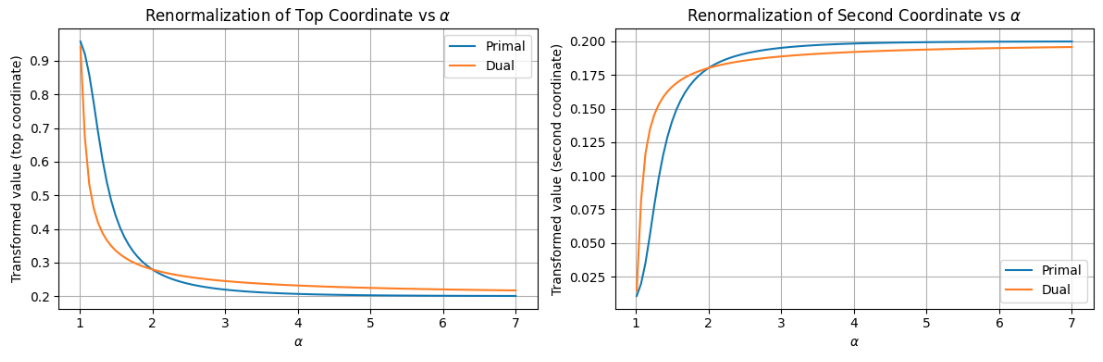}}
    \caption{Comparison of primal and dual renormalization maps: The transformation of the larger value ($0.1$, left) and of the smaller value ($0.001$, right).
    }
    \label{fig:primal_dual_comparison_peaked}
\end{figure}



\subsection{Illustrating general nonconvexity of dual renormalization} \label{app:dual_nonconvexity} 

Figure \ref{fig:dual_bregman_nonconvex} illustrates that the dual Bregman objective can in general be non-convex for large $\alpha$.

\begin{figure}[htbp]
    \centering
    {\includegraphics[width=1.1\linewidth]{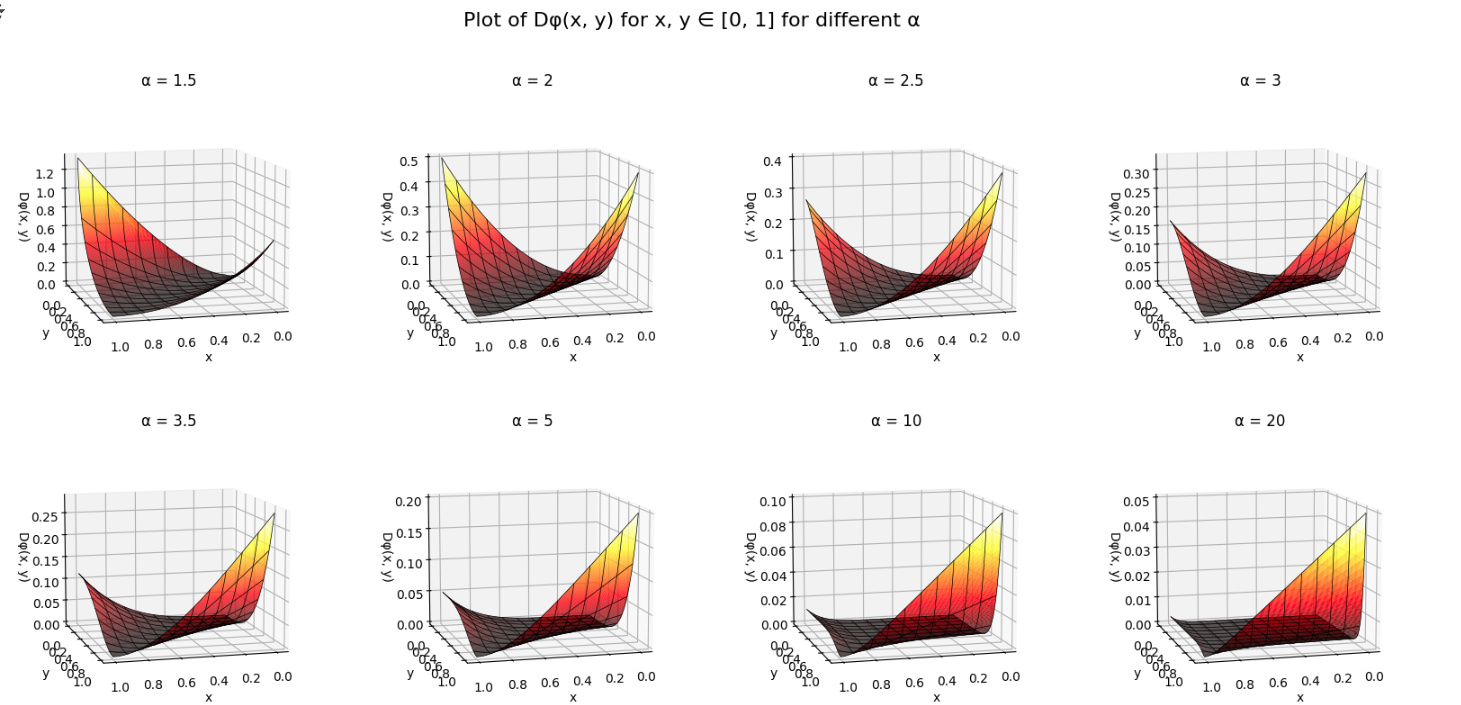}}
    \caption{Nonconvexity of the Bregman dual landscape on the square $(x, y) \in [0, 1]^2$.
    }
    \label{fig:dual_bregman_nonconvex}
\end{figure}

\subsection{Illustrating discrete convexity}
\label{ill:disc-cvx}

Figure \ref{fig:primal_dual_discrete_convex} illustrates that the loss function $\mathrm{cost}(\cdot)$ defined in \eqref{eq:regularised-cost} is discretely convex for both the primal and dual decoding strategies. Here, we have chosen $V=80$ and the regularization parameter $\lambda$ as $1/80$. 
When $k$ is close to $V$, the renormalization maps are all close to the true vector $p$, regardless of the value of $\alpha$, and hence the loss primarily depends on the regularization term $\lambda k$, which here equals $\lambda k  =1$ for $k=80$.
Thus, all curves 
(corresponding to different values of $\alpha$) 
for both the primal and dual plots, 
asymptote to linearity and converge to this value at $k=80$.

\begin{figure}[htbp]
    \centering
    {\includegraphics[width=1\linewidth]{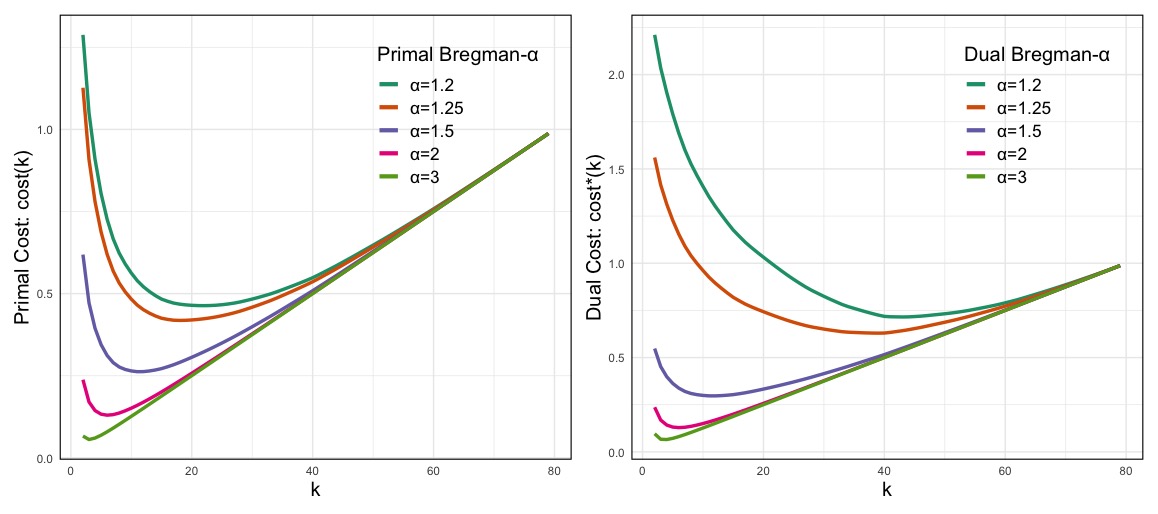}}
    \caption{Discrete convexity of the function $k\mapsto\textnormal{cost}(k)$ for primal and dual Bregman $\alpha$-decoding.
    }
    \label{fig:primal_dual_discrete_convex}
\end{figure}

\subsection{The simultaneous effects of Bregman decoding and temperature scaling}

\begin{figure}[htbp]
    \centering
    {\includegraphics[width=\linewidth]{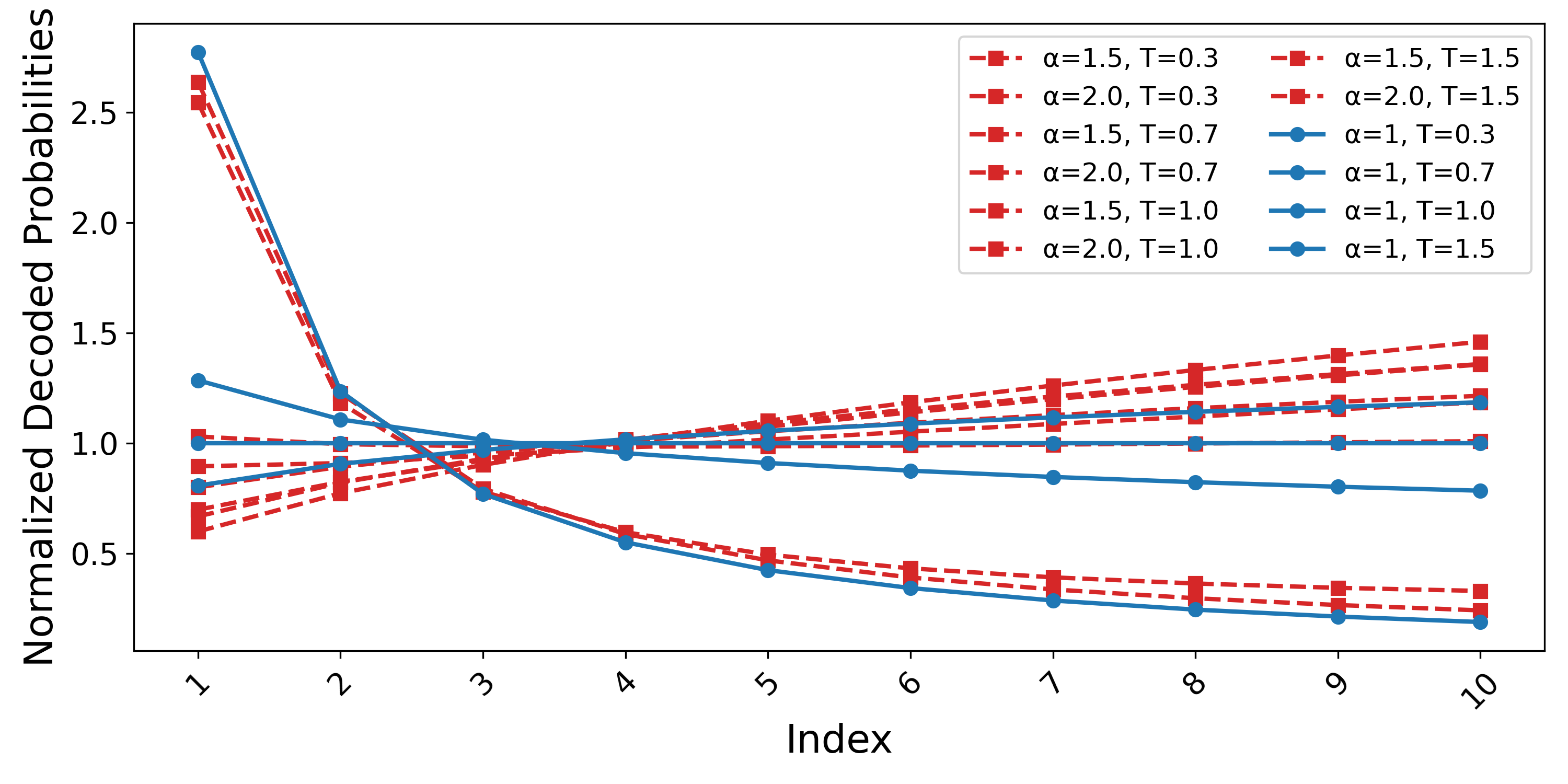}}
    \caption{Comparison with changing the temperature.
    }
    \label{fig:temp_comp}
\end{figure}

Here, we 
provide a plot
to help compare the simultaneous effects of Bregman decoding and temperature scaling.
We use the same
simulation setting and plotting style as in our figure from the introduction (Section \ref{intro}); except we only plot the nonzero probabilities (i.e., the top $k=10$ probabilities), and we plot the \emph{relative} sizes of the probabilities compared to the standard top-$k$ decoding.
Further, we use the same
$\alpha$ and temperature hyperparameters used in our experiments in Table \ref{tab:gsm8k_lambda_compare}. 
The results are shown in Figure \ref{fig:temp_comp}.
Standard top-$k$ decoding corresponds to $\alpha=1$ and $T=1$.
From the figure, 
it appears 
that the effect of $\alpha>1$
is to moderate/regularize the amount by which the small probabilities are pushed to zero; which could potentially be one reason why $\alpha$-Bregman decoding with $\alpha>1$ can perform better at high temperatures.

\section{Supplementary experimental details}
\label{app:supp_experiments}
\appendixtocentry{Supplementary experimental details}

\subsection{Compute resources}
\label{comp-exp}

The experiments were conducted on a system running Rocky Linux 8.10, with 64 CPU cores of Intel(R) Xeon(R) Gold 6448Y processors at 2.10 GHz, 1 TB of RAM, and 8 NVIDIA L40S GPUs with 46 GB of memory each. All experiments can be done with only one GPU and multiple GPUs were used only to parallelize experiments. The software environment used Python 3.11.11, PyTorch 2.5.1, and CUDA 12.4.

\subsection{Supplementary experimental results}

In this section, we provide additional experimental results to supplement those from Section \ref{sec:experiments}.

Table~\ref{tab:gsm8k_lambda10_01_compare} shows results analogous to those in 
Table~\ref{tab:gsm8k_lambda_compare}
for $\lambda \in \{0.1, 0.001\}$.

\begin{table}[htbp]
  \centering
  {\small
  \setlength{\tabcolsep}{1.5pt}
  \caption{Accuracy on GSM8K for LLaMA 3.1 8B using Bregman primal decoding ($\lambda \in \{0.1, 0.001\}$, $\alpha \in \{1.5, 2.0\}$) and top-$k$ decoding, across different temperature settings. For top-$k$, $k$ equals the averaged optimal $k^*$ from the corresponding primal decoding run (matching temperature, $\lambda$, and $\alpha$). Standard deviations are estimated using 1000 bootstrap resamples.}
  \label{tab:gsm8k_lambda10_01_compare}
  \begin{tabular}{c|cc|cc|cc|cc}
    \toprule
    \multirow{2}{*}{Temp}
      & \multicolumn{2}{c|}{$\lambda=0.1$}
      & \multicolumn{2}{c|}{\multirow{2}{*}{Top-$k$ ($\lambda=0.1$)}}
      & \multicolumn{2}{c|}{$\lambda=0.001$ }
      & \multicolumn{2}{c}{\multirow{2}{*}{Top-$k$ ($\lambda=0.001$)}} \\
      & $\alpha=1.5$ & $\alpha=2.0$
      & \multicolumn{2}{c|}{}
      & $\alpha=1.5$ & $\alpha=2.0$
      & \multicolumn{2}{c}{} \\
    \midrule
0.3
  & 83.93{\tiny$\pm$1.01} & 84.46{\tiny$\pm$1.00}
  & 84.69{\tiny$\pm$0.99} & 84.69{\tiny$\pm$0.99}
  & 83.93{\tiny$\pm$1.01} & 85.29{\tiny$\pm$0.98}
  & 83.62{\tiny$\pm$1.02} & 83.62{\tiny$\pm$1.02} \\
0.7
  & 83.47{\tiny$\pm$1.02} & 85.29{\tiny$\pm$0.98}
  & 84.69{\tiny$\pm$0.99} & 84.69{\tiny$\pm$0.99}
  & 82.18{\tiny$\pm$1.05} & 82.41{\tiny$\pm$1.05}
  & 83.78{\tiny$\pm$1.02} & 83.78{\tiny$\pm$1.02} \\
1.0
  & 84.46{\tiny$\pm$1.00} & 84.38{\tiny$\pm$1.00}
  & 84.69{\tiny$\pm$0.99} & 84.69{\tiny$\pm$0.99}
  & 78.92{\tiny$\pm$1.12} & 80.89{\tiny$\pm$1.08}
  & 78.54{\tiny$\pm$1.13} & 81.20{\tiny$\pm$1.08} \\
1.5
  & 83.78{\tiny$\pm$1.02} & 84.38{\tiny$\pm$1.00}
  & 84.69{\tiny$\pm$0.99} & 84.69{\tiny$\pm$0.99}
  & 69.22{\tiny$\pm$1.23} & 73.92{\tiny$\pm$1.21}
  & 64.67{\tiny$\pm$1.32} & 75.97{\tiny$\pm$1.18} \\

    \bottomrule
  \end{tabular}
  }
\end{table}
\medskip
Figure \ref{fig:mauve_scores} presents the MAUVE scores comparing generated and human-written text under different decoding strategies. While primal decoding shows a slight advantage over top-$k$ decoding, the differences are not statistically significant. We report standard deviations estimated from 50 bootstrap resamples; a higher number of resamples was not used due to the high computational cost of MAUVE score evaluation.
\begin{figure}
    \centering
    \includegraphics[width=1.0\linewidth]{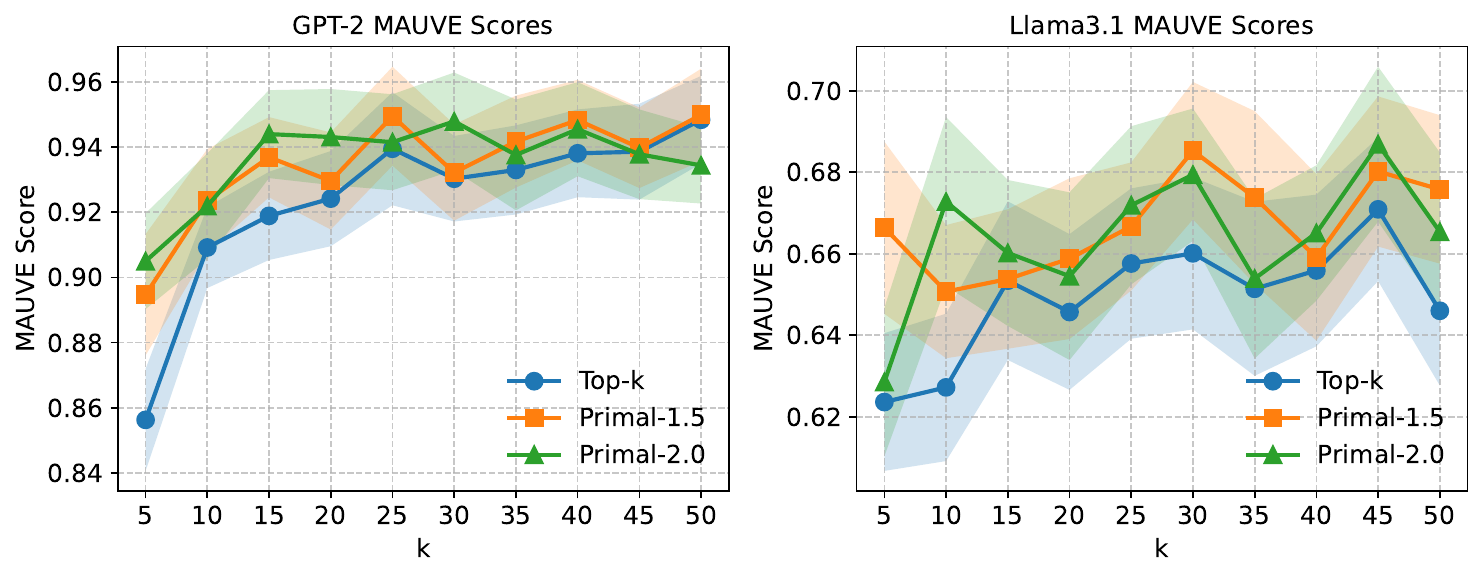}
    \caption{MAUVE scores results between generated and human-written text for GPT2-large (left panel) and LLaMA 3.1 8B (right panel), for various $k$ values. We show top-$k$ decoding and primal decoding with $\alpha \in \{1.5, 2.0\}$. Standard deviations are estimated using 50 bootstrap resamples}
    \label{fig:mauve_scores}
\end{figure}

\subsection{Experiments for Larger models: Qwen and Phi}
We repeat our experiments for Qwen2.5-14B-Instruct and Phi-3-medium-4k-instruct.

\begin{figure}
    \centering
    \includegraphics[width=1.0\linewidth]{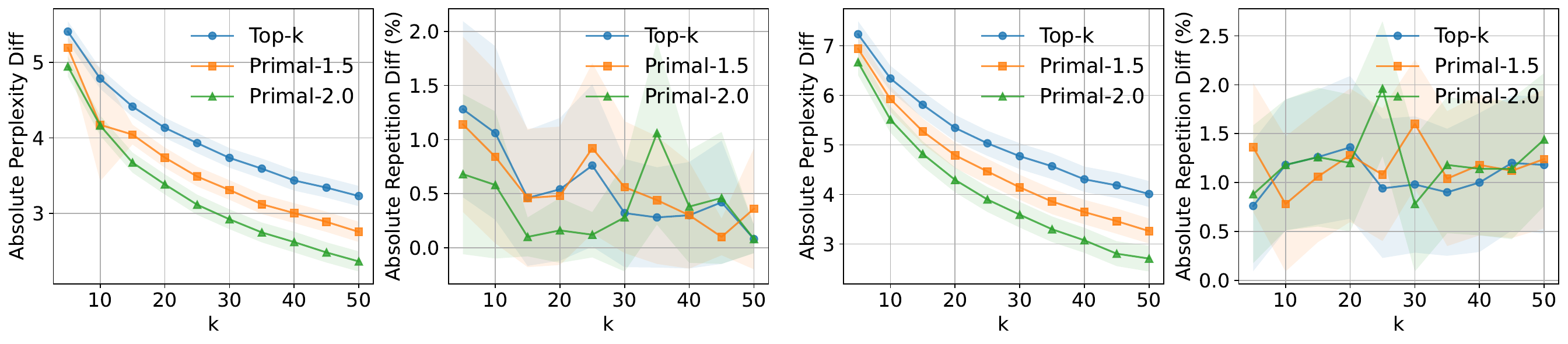}
    \caption{\small Perplexity and repetition frequency differences between generated and human-written text for Phi-3-medium-4k-instruct (left two panels) and Qwen2.5-14B-Instruct (right two panels), for various $k$ values. We show top-$k$ decoding and primal decoding with $\alpha \in \{1.5, 2.0\}$. Standard deviations are estimated using 1000 bootstrap resamples.}
    \label{fig:ppl&rep_phi&qwen}
\end{figure}

Figure \ref{fig:ppl&rep_phi&qwen} shows results 
analogous to those in
Figure \ref{fig:text_generation}. 
Table \ref{tab:gsm8k_lambda1_01_compare_qwen} and \ref{tab:gsm8k_lambda001_0001_compare_qwen} show the accuracy on GSM8K, analogously to Table \ref{tab:gsm8k_lambda_compare} and \ref{tab:gsm8k_lambda10_01_compare}. 
Table \ref{tab:gsm8k_mu1e1_1e2_compare_phi} and \ref{tab:gsm8k_mu1e3_1e4_compare_phi} show results for 
the 
Phi-3-medium-4k-instruct model.

\begin{table}[H]
  \centering
  {\small
  \setlength{\tabcolsep}{1.5pt}
  \caption{\small
Accuracy on GSM8K for Qwen2.5-14B-Instruct using Bregman primal decoding ($\lambda \in \{0.1, 0.01\}$, $\alpha \in \{1.5, 2.0\}$) and top-$k$ decoding, for various temperatures. For top-$k$, $k$ equals the averaged $k^*$ from primal decoding with matching temperature, $\lambda$, and $\alpha$. Standard deviations are over 1000 bootstrap resamples.
}
  \label{tab:gsm8k_lambda1_01_compare_qwen}
  \resizebox{0.9\textwidth}{!}{%
  \begin{tabular}{c|cc|cc|cc|cc}
    \toprule
    \multirow{2}{*}{Temp}
      & \multicolumn{2}{c|}{$\lambda=0.1$}
      & \multicolumn{2}{c|}{\multirow{2}{*}{Top-$k$ ($\lambda=0.1$)}}
      & \multicolumn{2}{c|}{$\lambda=0.01$}
      & \multicolumn{2}{c}{\multirow{2}{*}{Top-$k$ ($\lambda=0.01$)}} \\
      & $\alpha=1.5$ & $\alpha=2.0$  & \multicolumn{2}{c|}{} 
      & $\alpha=1.5$ & $\alpha=2.0$  & \multicolumn{2}{c}{}   \\
 \midrule
0.3 & 82.71{\tiny$\pm$1.04} & 82.26{\tiny$\pm$1.05} & 81.42{\tiny$\pm$1.07} & 81.43{\tiny$\pm$1.07} & 82.64{\tiny$\pm$1.04} & 82.18{\tiny$\pm$1.05} & 81.43{\tiny$\pm$1.07} & 81.43{\tiny$\pm$1.07} \\
0.7 & 81.73{\tiny$\pm$1.06} & 81.05{\tiny$\pm$1.08} & 81.43{\tiny$\pm$1.07} & 81.43{\tiny$\pm$1.07} & 79.53{\tiny$\pm$1.11} & 80.21{\tiny$\pm$1.10} & 80.21{\tiny$\pm$1.10} & 81.43{\tiny$\pm$1.07} \\
1.0 & 80.59{\tiny$\pm$1.09} & 81.50{\tiny$\pm$1.07} & 81.43{\tiny$\pm$1.07} & 81.43{\tiny$\pm$1.07} & 78.85{\tiny$\pm$1.12} & 80.29{\tiny$\pm$1.10} & 79.30{\tiny$\pm$1.12} & 81.43{\tiny$\pm$1.07} \\
1.5 & 80.89{\tiny$\pm$1.08} & 81.73{\tiny$\pm$1.06} & 81.43{\tiny$\pm$1.07} & 81.43{\tiny$\pm$1.07} & 77.18{\tiny$\pm$1.16} & 78.99{\tiny$\pm$1.12} & 77.48{\tiny$\pm$1.15} & 81.43{\tiny$\pm$1.07} \\
    \bottomrule
  \end{tabular}%
  }
  }
\end{table}

\begin{table}[H]
  \centering
  {\small
  \setlength{\tabcolsep}{1.5pt}
  \caption{\small
Accuracy on GSM8K for Qwen2.5-14B-Instruct using Bregman primal decoding ($\lambda \in \{0.001, 0.0001\}$, $\alpha \in \{1.5, 2.0\}$) and top-$k$ decoding, for various temperatures. For top-$k$, $k$ equals the averaged $k^*$ from primal decoding with matching temperature, $\lambda$, and $\alpha$. Standard deviations are over 1000 bootstrap resamples.
}
  \label{tab:gsm8k_lambda001_0001_compare_qwen}
  \resizebox{0.9\textwidth}{!}{%
  \begin{tabular}{c|cc|cc|cc|cc}
    \toprule
    \multirow{2}{*}{Temp}
      & \multicolumn{2}{c|}{$\lambda=0.001$}
      & \multicolumn{2}{c|}{\multirow{2}{*}{Top-$k$ ($\lambda=0.001$)}}
      & \multicolumn{2}{c|}{$\lambda=0.0001$}
      & \multicolumn{2}{c}{\multirow{2}{*}{Top-$k$ ($\lambda=0.0001$)}} \\
      & $\alpha=1.5$ & $\alpha=2.0$  & \multicolumn{2}{c|}{}
      & $\alpha=1.5$ & $\alpha=2.0$  & \multicolumn{2}{c}{}   \\
    \midrule
0.3 & 82.11{\tiny$\pm$1.06} & 82.49{\tiny$\pm$1.05} & 82.41{\tiny$\pm$1.05} & 82.56{\tiny$\pm$1.05} & 81.88{\tiny$\pm$1.06} & 82.26{\tiny$\pm$1.05} & 82.03{\tiny$\pm$1.06} & 82.41{\tiny$\pm$1.05} \\
0.7 & 80.21{\tiny$\pm$1.10} & 79.76{\tiny$\pm$1.11} & 80.06{\tiny$\pm$1.10} & 80.21{\tiny$\pm$1.10} & 79.61{\tiny$\pm$1.11} & 79.76{\tiny$\pm$1.11} & 79.98{\tiny$\pm$1.10} & 80.06{\tiny$\pm$1.10} \\
1.0 & 78.92{\tiny$\pm$1.12} & 78.32{\tiny$\pm$1.14} & 79.38{\tiny$\pm$1.11} & 79.30{\tiny$\pm$1.12} & 78.47{\tiny$\pm$1.13} & 79.30{\tiny$\pm$1.12} & 78.77{\tiny$\pm$1.13} & 79.38{\tiny$\pm$1.11} \\
1.5 & 76.72{\tiny$\pm$1.16} & 78.01{\tiny$\pm$1.14} & 75.89{\tiny$\pm$1.18} & 77.48{\tiny$\pm$1.15} & 74.91{\tiny$\pm$1.19} & 74.91{\tiny$\pm$1.19} & 71.19{\tiny$\pm$1.25} & 75.89{\tiny$\pm$1.18} \\
    \bottomrule
  \end{tabular}%
  }
  }
\end{table}

\begin{table}[htbp]
  \centering
  {\small
  \setlength{\tabcolsep}{1.5pt}
  \caption{\small
Accuracy on GSM8K for Phi-3-medium-4k-instruct using Bregman primal decoding ($\lambda \in \{0.1, 0.01\}$, $\alpha \in \{1.5, 2.0\}$) and top-$k$ decoding, for various temperatures. For top-$k$, $k$ equals the averaged $k^*$ from primal decoding with matching temperature, $\mu$, and $\alpha$. Standard deviations are over 1000 bootstrap resamples.
}
  \label{tab:gsm8k_mu1e1_1e2_compare_phi}
  \resizebox{0.9\textwidth}{!}{
  \begin{tabular}{c|cc|cc|cc|cc}
    \toprule
    \multirow{2}{*}{Temp}
      & \multicolumn{2}{c|}{$\lambda=0.1$}
      & \multicolumn{2}{c|}{\multirow{2}{*}{Top-$k$ ($\lambda=0.1$)}}
      & \multicolumn{2}{c|}{$\lambda=0.01$}
      & \multicolumn{2}{c}{\multirow{2}{*}{Top-$k$ ($\lambda=0.01$)}} \\
      & $\alpha=1.5$ & $\alpha=2.0$ & \multicolumn{2}{c|}{} & $\alpha=1.5$ & $\alpha=2.0$ & \multicolumn{2}{c}{}   \\
 \midrule
0.3 & 86.81{\tiny$\pm$0.93} & 87.87{\tiny$\pm$0.90} & 85.97{\tiny$\pm$0.96} & 85.97{\tiny$\pm$0.96} & 87.41{\tiny$\pm$0.91} & 87.04{\tiny$\pm$0.93} & 87.26{\tiny$\pm$0.92} & 87.26{\tiny$\pm$0.92} \\
0.7 & 86.96{\tiny$\pm$0.93} & 88.17{\tiny$\pm$0.89} & 85.97{\tiny$\pm$0.96} & 85.97{\tiny$\pm$0.96} & 85.67{\tiny$\pm$0.97} & 86.88{\tiny$\pm$0.93} & 88.10{\tiny$\pm$0.89} & 88.10{\tiny$\pm$0.89} \\
1.0 & 86.35{\tiny$\pm$0.95} & 87.11{\tiny$\pm$0.92} & 85.97{\tiny$\pm$0.96} & 85.97{\tiny$\pm$0.96} & 84.99{\tiny$\pm$0.98} & 83.93{\tiny$\pm$1.01} & 85.44{\tiny$\pm$0.97} & 85.44{\tiny$\pm$0.97} \\
1.5 & 87.19{\tiny$\pm$0.92} & 86.58{\tiny$\pm$0.94} & 85.97{\tiny$\pm$0.96} & 85.97{\tiny$\pm$0.96} & 82.94{\tiny$\pm$1.04} & 83.70{\tiny$\pm$1.02} & 80.14{\tiny$\pm$1.10} & 80.14{\tiny$\pm$1.10} \\
    \bottomrule
  \end{tabular}
  }
  }
\end{table}

\begin{table}[H]
  \centering
  {\small
  \setlength{\tabcolsep}{1.5pt}
  \caption{\small
Accuracy on GSM8K for Phi-3-medium-4k-instruct using Bregman primal decoding ($\lambda \in \{0.001, 0.0001\}$, $\alpha \in \{1.5, 2.0\}$) and top-$k$ decoding, for various temperatures. For top-$k$, $k$ equals the averaged $k^*$ from primal decoding with matching temperature, $\mu$, and $\alpha$. Standard deviations are over 1000 bootstrap resamples.
}
  \label{tab:gsm8k_mu1e3_1e4_compare_phi}\resizebox{0.9\textwidth}{!}{
  \begin{tabular}{c|cc|cc|cc|cc}
    \toprule
    \multirow{2}{*}{Temp}
      & \multicolumn{2}{c|}{$\lambda=0.001$}
      & \multicolumn{2}{c|}{\multirow{2}{*}{Top-$k$ ($\lambda=0.001$)}}
      & \multicolumn{2}{c|}{$\lambda=0.0001$}
      & \multicolumn{2}{c}{\multirow{2}{*}{Top-$k$ ($\lambda=0.0001$)}} \\
      & $\alpha=1.5$ & $\alpha=2.0$ & \multicolumn{2}{c|}{} & $\alpha=1.5$ & $\alpha=2.0$ & \multicolumn{2}{c}{}   \\
    \midrule
0.3 & 87.11{\tiny$\pm$0.92} & 86.88{\tiny$\pm$0.93} & 86.50{\tiny$\pm$0.94} & 86.81{\tiny$\pm$0.93} & 87.49{\tiny$\pm$0.91} & 87.49{\tiny$\pm$0.91} & 86.20{\tiny$\pm$0.95} & 86.50{\tiny$\pm$0.94} \\
0.7 & 86.81{\tiny$\pm$0.93} & 86.50{\tiny$\pm$0.94} & 85.29{\tiny$\pm$0.98} & 85.67{\tiny$\pm$0.97} & 84.99{\tiny$\pm$0.98} & 84.91{\tiny$\pm$0.99} & 85.60{\tiny$\pm$0.97} & 85.29{\tiny$\pm$0.98} \\
1.0 & 83.62{\tiny$\pm$1.02} & 82.34{\tiny$\pm$1.05} & 82.71{\tiny$\pm$1.04} & 82.79{\tiny$\pm$1.04} & 82.71{\tiny$\pm$1.04} & 82.11{\tiny$\pm$1.06} & 81.35{\tiny$\pm$1.07} & 82.71{\tiny$\pm$1.04} \\
1.5 & 76.95{\tiny$\pm$1.16} & 78.92{\tiny$\pm$1.12} & 69.75{\tiny$\pm$1.27} & 73.84{\tiny$\pm$1.21} & 72.25{\tiny$\pm$1.23} & 76.04{\tiny$\pm$1.18} & 62.62{\tiny$\pm$1.33} & 65.81{\tiny$\pm$1.31} \\
    \bottomrule
  \end{tabular}
  }
  }
\end{table}

\subsection{Experiments for TriviaQA}
Table \ref{tab:triviaqa_lambda01_001_llama} and \ref{tab:triviaqa_lambda0001_llama} show accuracy on TriviaQA for LLaMA3.1-8B model. Here we choose 10\% ($\approx$ 1800 questions) proportion of TriviQA validation dataset for evaluation.
\begin{table}[H]
  \centering
  {\small
  \setlength{\tabcolsep}{1.5pt}
  \caption{\small
Accuracy on TriviaQA for LLaMA 3.1 8B using Bregman primal decoding ($\lambda \in \{0.1, 0.01\}$, $\alpha \in \{1.5, 2.0\}$) and top-$k$ decoding, for various temperatures. For top-$k$, $k$ equals the averaged $k^*$ from primal decoding with matching temperature, $\lambda$, and $\alpha$. Standard deviations are over 1000 bootstrap resamples.
}
  \label{tab:triviaqa_lambda01_001_llama}
  \resizebox{0.9\textwidth}{!}{%
  \begin{tabular}{c|cc|cc|cc|cc}
    \toprule
    \multirow{2}{*}{Temp}
      & \multicolumn{2}{c|}{$\lambda=0.1$}
      & \multicolumn{2}{c|}{Top-$k$ ($\lambda=0.1$)}
      & \multicolumn{2}{c|}{$\lambda=0.01$}
      & \multicolumn{2}{c}{Top-$k$ ($\lambda=0.01$)} \\
      & $\alpha=1.5$ & $\alpha=2.0$ & $\alpha=1.5$ & $\alpha=2.0$ & $\alpha=1.5$ & $\alpha=2.0$ & $\alpha=1.5$ & $\alpha=2.0$ \\
    \midrule
0.3 & 67.80{\tiny$\pm$1.10} & 67.47{\tiny$\pm$1.11} & 67.58{\tiny$\pm$1.11} & 67.58{\tiny$\pm$1.11} & 66.57{\tiny$\pm$1.11} & 66.69{\tiny$\pm$1.11} & 66.74{\tiny$\pm$1.11} & 66.74{\tiny$\pm$1.11} \\
0.7 & 65.68{\tiny$\pm$1.12} & 66.35{\tiny$\pm$1.12} & 67.58{\tiny$\pm$1.11} & 67.58{\tiny$\pm$1.11} & 64.23{\tiny$\pm$1.13} & 63.84{\tiny$\pm$1.13} & 65.01{\tiny$\pm$1.13} & 65.01{\tiny$\pm$1.13} \\
1.0 & 65.63{\tiny$\pm$1.12} & 66.69{\tiny$\pm$1.11} & 67.58{\tiny$\pm$1.11} & 67.58{\tiny$\pm$1.11} & 61.06{\tiny$\pm$1.15} & 61.17{\tiny$\pm$1.15} & 62.67{\tiny$\pm$1.14} & 62.67{\tiny$\pm$1.14} \\
1.5 & 64.85{\tiny$\pm$1.13} & 66.96{\tiny$\pm$1.11} & 67.58{\tiny$\pm$1.11} & 67.58{\tiny$\pm$1.11} & 59.78{\tiny$\pm$1.16} & 60.84{\tiny$\pm$1.15} & 60.84{\tiny$\pm$1.15} & 60.84{\tiny$\pm$1.15} \\
    \bottomrule
  \end{tabular}%
  }
  }
\end{table}

\begin{table}[H]
  \centering
  {\small
  \setlength{\tabcolsep}{1.5pt}
  \caption{\small
Accuracy on TriviaQA for LLaMA 3.1 8B using Bregman primal decoding ($\lambda \in \{0.001, 0.0001\}$, $\alpha \in \{1.5, 2.0\}$) and top-$k$ decoding, for various temperatures. For top-$k$, $k$ equals the averaged $k^*$ from primal decoding with matching temperature, $\lambda$, and $\alpha$. Standard deviations are over 1000 bootstrap resamples.
}
  \label{tab:triviaqa_lambda0001_llama}
  \resizebox{0.9\textwidth}{!}{%
  \begin{tabular}{c|cc|cc|cc|cc}
    \toprule
    \multirow{2}{*}{Temp}
      & \multicolumn{2}{c|}{$\lambda=0.001$}
      & \multicolumn{2}{c|}{Top-$k$ ($\lambda=0.001$)}
      & \multicolumn{2}{c|}{$\lambda=0.0001$}
      & \multicolumn{2}{c}{Top-$k$ ($\lambda=0.0001$)} \\
      & $\alpha=1.5$ & $\alpha=2.0$ & $\alpha=1.5$ & $\alpha=2.0$ & $\alpha=1.5$ & $\alpha=2.0$ & $\alpha=1.5$ & $\alpha=2.0$ \\
    \midrule
0.3 & 66.85{\tiny$\pm$1.11} & 67.58{\tiny$\pm$1.11} & 67.13{\tiny$\pm$1.11} & 67.13{\tiny$\pm$1.11} & 66.69{\tiny$\pm$1.11} & 67.08{\tiny$\pm$1.11} & 67.19{\tiny$\pm$1.11} & 67.58{\tiny$\pm$1.11} \\
0.7 & 63.40{\tiny$\pm$1.14} & 63.18{\tiny$\pm$1.14} & 64.68{\tiny$\pm$1.13} & 64.79{\tiny$\pm$1.13} & 62.73{\tiny$\pm$1.14} & 62.73{\tiny$\pm$1.14} & 63.79{\tiny$\pm$1.13} & 63.68{\tiny$\pm$1.14} \\
1.0 & 59.00{\tiny$\pm$1.16} & 59.00{\tiny$\pm$1.16} & 60.17{\tiny$\pm$1.16} & 62.23{\tiny$\pm$1.14} & 57.99{\tiny$\pm$1.17} & 59.11{\tiny$\pm$1.16} & 58.55{\tiny$\pm$1.16} & 60.11{\tiny$\pm$1.16} \\
1.5 & 55.04{\tiny$\pm$1.17} & 55.71{\tiny$\pm$1.17} & 52.81{\tiny$\pm$1.18} & 56.38{\tiny$\pm$1.17} & 49.19{\tiny$\pm$1.18} & 52.59{\tiny$\pm$1.18} & 50.19{\tiny$\pm$1.18} & 51.31{\tiny$\pm$1.18} \\
    \bottomrule
  \end{tabular}%
  }
  }
\end{table}

Table \ref{tab:gsm8k_lambda01_001_phi} and \ref{tab:gsm8k_lambda0001_phi} show analogous accuracy results for Phi3-medium-4k-instruct on TriviaQA.
\begin{table}[H]
  \centering
  {\small
  \setlength{\tabcolsep}{1.5pt}
  \caption{\small
Accuracy on TriviaQA for Phi-3-medium-4k-instruct using Bregman primal decoding ($\lambda \in \{0.1, 0.01\}$, $\alpha \in \{1.5, 2.0\}$) and top-$k$ decoding, for various temperatures. For top-$k$, $k$ equals the averaged $k^*$ from primal decoding with matching temperature, $\lambda$, and $\alpha$. Standard deviations are over 1000 bootstrap resamples.
}
  \label{tab:gsm8k_lambda01_001_phi}
  \resizebox{0.9\textwidth}{!}{%
  \begin{tabular}{c|cc|cc|cc|cc}
    \toprule
    \multirow{2}{*}{Temp}
      & \multicolumn{2}{c|}{$\lambda=0.1$}
      & \multicolumn{2}{c|}{Top-$k$ ($\lambda=0.1$)}
      & \multicolumn{2}{c|}{$\lambda=0.01$}
      & \multicolumn{2}{c}{Top-$k$ ($\lambda=0.01$)} \\
      & $\alpha=1.5$ & $\alpha=2.0$ & $\alpha=1.5$ & $\alpha=2.0$ & $\alpha=1.5$ & $\alpha=2.0$ & $\alpha=1.5$ & $\alpha=2.0$ \\
    \midrule
0.3 & 58.44{\tiny$\pm$1.16} & 59.67{\tiny$\pm$1.16} & 59.05{\tiny$\pm$1.16} & 60.50{\tiny$\pm$1.15} & 59.33{\tiny$\pm$1.16} & 59.22{\tiny$\pm$1.16} & 59.11{\tiny$\pm$1.16} & 59.39{\tiny$\pm$1.16} \\
0.7 & 57.44{\tiny$\pm$1.17} & 58.22{\tiny$\pm$1.16} & 56.77{\tiny$\pm$1.17} & 60.50{\tiny$\pm$1.15} & 55.21{\tiny$\pm$1.17} & 55.88{\tiny$\pm$1.17} & 55.54{\tiny$\pm$1.17} & 56.77{\tiny$\pm$1.17} \\
1.0 & 56.60{\tiny$\pm$1.17} & 56.94{\tiny$\pm$1.17} & 54.54{\tiny$\pm$1.18} & 60.50{\tiny$\pm$1.15} & 52.09{\tiny$\pm$1.18} & 51.75{\tiny$\pm$1.18} & 50.31{\tiny$\pm$1.18} & 52.37{\tiny$\pm$1.18} \\
1.5 & 57.16{\tiny$\pm$1.17} & 58.22{\tiny$\pm$1.16} & 50.14{\tiny$\pm$1.18} & 60.50{\tiny$\pm$1.15} & 49.47{\tiny$\pm$1.18} & 50.19{\tiny$\pm$1.18} & 43.57{\tiny$\pm$1.17} & 45.29{\tiny$\pm$1.18} \\
    \bottomrule
  \end{tabular}%
  }
  }
\end{table}

\begin{table}[H]
  \centering
  {\small
  \setlength{\tabcolsep}{1.5pt}
  \caption{\small
Accuracy on TriviaQA for Phi-3-medium-4k-instruct using Bregman primal decoding ($\lambda \in \{0.001, 0.0001\}$, $\alpha \in \{1.5, 2.0\}$) and top-$k$ decoding, for various temperatures. For top-$k$, $k$ equals the averaged $k^*$ from primal decoding with matching temperature, $\lambda$, and $\alpha$. Standard deviations are over 1000 bootstrap resamples.
}
  \label{tab:gsm8k_lambda0001_phi}
  \resizebox{0.9\textwidth}{!}{%
  \begin{tabular}{c|cc|cc|cc|cc}
    \toprule
    \multirow{2}{*}{Temp}
      & \multicolumn{2}{c|}{$\lambda=0.001$}
      & \multicolumn{2}{c|}{Top-$k$ ($\lambda=0.001$)}
      & \multicolumn{2}{c|}{$\lambda=0.0001$}
      & \multicolumn{2}{c}{Top-$k$ ($\lambda=0.0001$)} \\
      & $\alpha=1.5$ & $\alpha=2.0$ & $\alpha=1.5$ & $\alpha=2.0$ & $\alpha=1.5$ & $\alpha=2.0$ & $\alpha=1.5$ & $\alpha=2.0$ \\
    \midrule
0.3 & 59.72{\tiny$\pm$1.16} & 58.61{\tiny$\pm$1.16} & 59.44{\tiny$\pm$1.16} & 59.22{\tiny$\pm$1.16} & 59.83{\tiny$\pm$1.16} & 59.39{\tiny$\pm$1.16} & 59.44{\tiny$\pm$1.16} & 59.44{\tiny$\pm$1.16} \\
0.7 & 54.82{\tiny$\pm$1.17} & 54.04{\tiny$\pm$1.18} & 53.70{\tiny$\pm$1.18} & 54.60{\tiny$\pm$1.18} & 54.54{\tiny$\pm$1.18} & 54.43{\tiny$\pm$1.18} & 56.21{\tiny$\pm$1.17} & 54.71{\tiny$\pm$1.18} \\
1.0 & 48.13{\tiny$\pm$1.18} & 49.19{\tiny$\pm$1.18} & 49.58{\tiny$\pm$1.18} & 50.64{\tiny$\pm$1.18} & 48.69{\tiny$\pm$1.18} & 48.58{\tiny$\pm$1.18} & 48.64{\tiny$\pm$1.18} & 48.64{\tiny$\pm$1.18} \\
1.5 & 42.51{\tiny$\pm$1.17} & 44.18{\tiny$\pm$1.17} & 39.55{\tiny$\pm$1.15} & 42.67{\tiny$\pm$1.17} & 38.22{\tiny$\pm$1.15} & 39.94{\tiny$\pm$1.16} & 36.04{\tiny$\pm$1.13} & 37.72{\tiny$\pm$1.14} \\
    \bottomrule
  \end{tabular}%
  }
  }
\end{table}

\subsection{Adaptivity} \label{sec:adaptivity}
In this section, we consider the adaptivity of primal decoding by presenting the mean, standard deviation and entropy of the $k^*$ chosen by our method during evaluation on GSM8K and TriviaQA datasets. 

In Table \ref{tab:averaged_k*_llama_gsm8k}, we show the average
$k^*$ values (and their values rounded to the nearest integer) selected by primal Bregman decoding on GSM8K
with LLaMA 3.1 8B for various temperatures, $\alpha$, and $\lambda$. Table \ref{tab:averaged_k_entropy_llama_gsm8k} shows corresponding standard deviation and entropy.

\begin{table}[H]
  \centering
  \setlength{\tabcolsep}{1.5pt}
  \caption{Mean (and rounded) average $k^*$ values on GSM8K with LLaMA 3.1 8B for various temperatures, $\alpha$, and $\lambda$.}
  \label{tab:averaged_k*_llama_gsm8k}\resizebox{0.9\textwidth}{!}{
  \begin{tabular}{c|cc|cc|cc|cc}
    \toprule
    \multirow{2}{*}{Temp} 
      & \multicolumn{2}{c|}{$\lambda=0.1$} 
      & \multicolumn{2}{c|}{$\lambda=0.01$} 
      & \multicolumn{2}{c|}{$\lambda=0.001$} 
      & \multicolumn{2}{c}{$\lambda=0.0001$}  \\
    & $\alpha=1.5$ & $\alpha=2.0$ & $\alpha=1.5$ & $\alpha=2.0$ & $\alpha=1.5$ & $\alpha=2.0$ & $\alpha=1.5$ & $\alpha=2.0$  \\
    \midrule
    0.3 & 1.2231(1) & 1.1537 (1) & 1.6201 (2) & 1.4453 (1) & 2.1274 (2) & 1.7964 (2) & 2.8578 (3) & 2.2112 (2) \\
    0.7 & 1.2295 (1) & 1.1554 (1) & 1.6689 (2) & 1.4794 (1) & 2.3193 (2) & 1.9048 (2) & 3.2554 (3) & 2.4974 (2) \\
    1.0 & 1.2287 (1) & 1.1594 (1) & 1.7519 (2) & 1.5048 (2) & 2.7231 (3) & 2.0234 (2) & 4.6926 (5) & 3.0924 (3) \\
    1.5 & 1.2331 (1) & 1.1566 (1) & 1.8106 (2) & 1.5189 (2) & 4.1842 (4) & 2.4067 (2) &14.2539 (14)& 5.6002 (6) \\
    \bottomrule
  \end{tabular}
  }
\end{table}

\begin{table}[htbp]
  \centering
  \setlength{\tabcolsep}{2pt}
  \caption{Standard deviation (and entropy) of average $k^*$ values on GSM8K with LLaMA 3.1 8B for various temperatures, $\alpha$, and $\lambda$.}
  \label{tab:averaged_k_entropy_llama_gsm8k}\resizebox{0.9\textwidth}{!}{
  \begin{tabular}{c|cc|cc|cc|cc}
    \toprule
    \multirow{2}{*}{Temp} 
      & \multicolumn{2}{c|}{$\lambda=0.1$} 
      & \multicolumn{2}{c|}{$\lambda=0.01$} 
      & \multicolumn{2}{c|}{$\lambda=0.001$} 
      & \multicolumn{2}{c}{$\lambda=0.0001$}  \\
    & $\alpha=1.5$ & $\alpha=2.0$ & $\alpha=1.5$ & $\alpha=2.0$ & $\alpha=1.5$ & $\alpha=2.0$ & $\alpha=1.5$ & $\alpha=2.0$  \\
    \midrule
    0.3 & 0.46 (0.82) & 0.36 (0.62) & 1.07 (1.55) & 0.77 (1.28) & 1.89 (2.08) & 1.31 (1.77) & 3.11 (2.58) & 2.00 (2.16) \\
    0.7 & 0.47 (0.84) & 0.36 (0.62) & 1.12 (1.62) & 0.80 (1.34) & 2.21 (2.24) & 1.47 (1.89) & 3.98 (2.78) & 2.53 (2.37) \\
    1.0 & 0.47 (0.84) & 0.37 (0.63) & 1.23 (1.72) & 0.83 (1.38) & 3.03 (2.49) & 1.65 (2.00) & 7.31 (3.21) & 3.69 (2.69) \\
    1.5 & 0.47 (0.85) & 0.36 (0.63) & 1.30 (1.79) & 0.84 (1.40) & 5.37 (3.13) & 2.19 (2.32) & 18.01 (4.04) & 7.77 (3.51) \\
    \bottomrule
  \end{tabular}
  }
\end{table}

Table \ref{tab:averaged_k_star_qwen_gsm8k}-\ref{tab:averaged_k_entropy_qwen_gsm8k} show analougous adaptivity results for  Qwen2.5-14B-Instruct. 
\begin{table}[H]
  \centering
  \setlength{\tabcolsep}{1.5pt}
  \caption{Mean (and rounded) average $k^*$ values on GSM8K with Qwen2.5-14B-Instruct for various temperatures, $\alpha$, and $\lambda$.}
  \label{tab:averaged_k_star_qwen_gsm8k}\resizebox{0.9\textwidth}{!}{
  \begin{tabular}{c|cc|cc|cc|cc}
    \toprule
    \multirow{2}{*}{Temp} 
      & \multicolumn{2}{c|}{$\lambda=0.1$} 
      & \multicolumn{2}{c|}{$\lambda=0.01$} 
      & \multicolumn{2}{c|}{$\lambda=0.001$} 
      & \multicolumn{2}{c}{$\lambda=0.0001$}  \\
    & $\alpha=1.5$ & $\alpha=2.0$ & $\alpha=1.5$ & $\alpha=2.0$ & $\alpha=1.5$ & $\alpha=2.0$ & $\alpha=1.5$ & $\alpha=2.0$  \\
    \midrule
0.3 & 1.0973(1) & 1.0660(1) & 1.4899(1) & 1.3425(1) & 2.7614(3) & 1.9317(2) & 5.4537(5) & 3.1726(3)  \\
0.7 & 1.1010(1) & 1.0672(1) & 1.5043(2) & 1.3534(1) & 2.7778(3) & 1.9522(2) & 5.5047(6) & 3.1911(3)  \\
1.0 & 1.1000(1) & 1.0666(1) & 1.5171(2) & 1.3591(1) & 2.7985(3) & 1.9723(2) & 5.5603(6) & 3.2493(3)  \\
1.5 & 1.1008(1) & 1.0662(1) & 1.5211(2) & 1.3628(1) & 2.8761(3) & 2.0028(2) & 5.7831(6) & 3.3285(3)  \\
    \bottomrule
  \end{tabular}
  }
\end{table}

\begin{table}[htbp]
  \centering
  \setlength{\tabcolsep}{4pt}
  \caption{Standard deviation (and entropy) of average $k^*$ values on GSM8K with Qwen2.5-14B-Instruct under $\lambda=0.0001$ and varying temperatures.}
  \label{tab:averaged_k_entropy_qwen_gsm8k}
  \begin{tabular}{c|cc}
    \toprule
    Temp & $\alpha=1.5$ & $\alpha=2.0$ \\
    \midrule
    0.3 & 10.75 (2.81) &  4.88 (2.26) \\
    0.7 & 10.71 (2.86) &  4.85 (2.29) \\
    1.0 & 10.70 (2.90) &  4.88 (2.34) \\
    1.5 & 10.75 (3.03) &  4.90 (2.42) \\
    \bottomrule
  \end{tabular}
\end{table}

Table \ref{tab:averaged_k_star_phi_gsm8k}-\ref{tab:averaged_k_entropy_phi_gsm8k} show analougous adaptivity results for  Phi-3-medium-4k-instruct.
\begin{table}[htbp]
  \centering
  \setlength{\tabcolsep}{1.5pt}
  \caption{Mean (and rounded) average $k^*$ values on GSM8K with Phi-3-medium-4k-instruct for various temperatures, $\alpha$, and $\mu$.}
  \label{tab:averaged_k_star_phi_gsm8k}\resizebox{0.9\textwidth}{!}{
  \begin{tabular}{c|cc|cc|cc|cc} \toprule
    \multirow{2}{*}{Temp} & \multicolumn{2}{c|}{$\lambda=0.1$} & \multicolumn{2}{c|}{$\lambda=0.01$} & \multicolumn{2}{c|}{$\lambda=0.001$} & \multicolumn{2}{c|}{$\lambda=0.0001$}  \\ 
& $\alpha=1.5$ & $\alpha=2.0$ & $\alpha=1.5$ & $\alpha=2.0$ & $\alpha=1.5$ & $\alpha=2.0$ & $\alpha=1.5$ & $\alpha=2.0$  \\ 
    \midrule
0.3 & 1.4048(1) & 1.2609(1) & 2.4123(2) & 1.9287(2) & 4.7186(5) & 3.1299(3) & 8.6473(9) & 5.2889(5)  \\ 
0.7 & 1.4074(1) & 1.2601(1) & 2.4337(2) & 1.9409(2) & 4.6706(5) & 3.1307(3) & 8.6958(9) & 5.3697(5)  \\ 
1.0 & 1.4073(1) & 1.2603(1) & 2.4541(2) & 1.9364(2) & 4.7772(5) & 3.1792(3) & 8.8501(9) & 5.4394(5)  \\ 
1.5 & 1.4098(1) & 1.2575(1) & 2.4667(2) & 1.9498(2) & 4.9289(5) & 3.2335(3) & 9.4782(9) & 5.6113(6)  \\ 
    \bottomrule
  \end{tabular}
  }
\end{table}

\begin{table}[htbp]
  \centering
  \setlength{\tabcolsep}{4pt}
  \caption{Standard deviation (and entropy) of average $k^*$ values on GSM8K with Phi-3-medium-4k-instruct under $\lambda=0.0001$ for varying temperatures and $\alpha$.}
  \label{tab:averaged_k_entropy_phi_gsm8k}
  \begin{tabular}{c|cc}
    \toprule
    Temp & $\alpha=1.5$ & $\alpha=2.0$ \\
    \midrule
    0.3 & 12.09 (3.83) & 6.77 (3.32) \\
    0.7 & 12.01 (3.89) & 7.23 (3.61) \\
    1.0 & 11.98 (3.98) & 6.74 (3.45) \\
    1.5 & 11.79 (4.24) & 7.29 (3.79) \\
    \bottomrule
  \end{tabular}
\end{table}

In Table \ref{tab:averaged_k_star_llama_triviaqa}, we show the average
$k^*$ values (and their values rounded to the nearest integer) selected by primal Bregman decoding on TriviaQA
with LLaMA 3.1 8B for various temperatures, $\alpha$, and $\lambda$. Table \ref{tab:averaged_k_entropy_llama_triviaqa} shows corresponding standard deviation and entropy.
\begin{table}[H]
  \centering
  \setlength{\tabcolsep}{1.5pt}
  \caption{Mean (and rounded) average $k^*$ values on TriviaQA with LLaMA 3.1 8B for various temperatures, $\alpha$, and $\lambda$.}
  \label{tab:averaged_k_star_llama_triviaqa}
  \resizebox{0.9\textwidth}{!}{
  \begin{tabular}{c|cc|cc|cc|cc} \toprule
    \multirow{2}{*}{Temp} & \multicolumn{2}{c|}{$\lambda=0.1$} & \multicolumn{2}{c|}{$\lambda=0.01$} & \multicolumn{2}{c|}{$\lambda=0.001$} & \multicolumn{2}{c|}{$\lambda=0.0001$}  \\ 
& $\alpha=1.5$ & $\alpha=2.0$ & $\alpha=1.5$ & $\alpha=2.0$ & $\alpha=1.5$ & $\alpha=2.0$ & $\alpha=1.5$ & $\alpha=2.0$  \\ 
    \midrule
0.3 & 1.1536(1) & 1.1452(1) & 1.9135(2) & 1.5291(2) & 3.4193(3) & 2.5753(3) & 6.9406(7) & 4.5149(5)  \\ 
0.7 & 1.2265(1) & 1.1275(1) & 2.0109(2) & 1.6265(2) & 3.8877(4) & 2.7593(3) & 8.8845(9) & 5.1892(5)  \\ 
1.0 & 1.2138(1) & 1.1324(1) & 2.0273(2) & 1.6818(2) & 3.9715(4) & 2.9759(3) & 8.4552(8) & 5.7381(6)  \\ 
1.5 & 1.2013(1) & 1.1384(1) & 2.0289(2) & 1.7032(2) & 4.1749(4) & 2.9398(3) & 8.4399(8) & 5.5166(6)  \\ 
    \bottomrule
  \end{tabular}
  }
\end{table}

\begin{table}[H]
  \centering
  \setlength{\tabcolsep}{1.5pt}
  \caption{Standard deviation (and entropy) of average $k^*$ values on TriviaQA with LLaMA 3.1 8B  for various temperatures, $\alpha$, and $\lambda$.}
  \label{tab:averaged_k_entropy_llama_triviaqa}
  \resizebox{0.9\textwidth}{!}{
  \begin{tabular}{c|cc|cc|cc|cc}
    \toprule
    \multirow{2}{*}{Temp} 
      & \multicolumn{2}{c|}{$\lambda=0.1$} 
      & \multicolumn{2}{c|}{$\lambda=0.01$} 
      & \multicolumn{2}{c|}{$\lambda=0.001$} 
      & \multicolumn{2}{c}{$\lambda=0.0001$}  \\
    & $\alpha=1.5$ & $\alpha=2.0$ & $\alpha=1.5$ & $\alpha=2.0$ & $\alpha=1.5$ & $\alpha=2.0$ & $\alpha=1.5$ & $\alpha=2.0$  \\
    \midrule
    0.3 & 0.41 (0.65) & 0.35 (0.60) & 1.37 (1.90) & 0.86 (1.42) & 3.65 (2.85) & 2.09 (2.42) & 10.36 (3.63) & 5.35 (3.28) \\
    0.7 & 0.48 (0.83) & 0.33 (0.55) & 1.44 (2.00) & 0.93 (1.56) & 4.24 (3.09) & 2.20 (2.53) & 12.18 (4.10) & 5.98 (3.56) \\
    1.0 & 0.47 (0.81) & 0.34 (0.56) & 1.42 (2.01) & 0.98 (1.63) & 4.42 (3.03) & 2.43 (2.68) & 12.07 (3.77) & 6.54 (3.68) \\
    1.5 & 0.46 (0.78) & 0.35 (0.58) & 1.42 (2.01) & 1.00 (1.66) & 5.07 (3.02) & 2.46 (2.62) & 12.90 (3.34) & 7.18 (3.35) \\
    \bottomrule
  \end{tabular}
  }
\end{table}

Table \ref{tab:averaged_k_star_phi_triviaqa}-\ref{tab:averaged_k_entropy_phi_triviaqa} show analougous adaptivity results for  Phi-3-medium-4k-instruct on TriviaQA.
\begin{table}[H]
  \centering
  \setlength{\tabcolsep}{1.5pt}
  \caption{Mean (and rounded) average $k^*$ values on TriviaQA with Phi-3-medium-4k-instruct for various temperatures, $\alpha$, and $\lambda$.}
  \label{tab:averaged_k_star_phi_triviaqa}\resizebox{0.9\textwidth}{!}{
  \begin{tabular}{c|cc|cc|cc|cc} \toprule
    \multirow{2}{*}{Temp} & \multicolumn{2}{c|}{$\lambda=0.1$} & \multicolumn{2}{c|}{$\lambda=0.01$} & \multicolumn{2}{c|}{$\lambda=0.001$} & \multicolumn{2}{c|}{$\lambda=0.0001$}  \\ 
& $\alpha=1.5$ & $\alpha=2.0$ & $\alpha=1.5$ & $\alpha=2.0$ & $\alpha=1.5$ & $\alpha=2.0$ & $\alpha=1.5$ & $\alpha=2.0$  \\ 
    \midrule
0.3 & 1.7393(2) & 1.4142(1) & 3.6184(4) & 2.8184(3) & 9.2976(9) & 5.2226(5) & 18.7026(19) & 10.4901(10)  \\ 
0.7 & 1.7148(2) & 1.4288(1) & 3.6134(4) & 2.6381(3) & 8.4512(8) & 4.8061(5) & 16.8627(17) & 9.3718(9)  \\ 
1.0 & 1.7348(2) & 1.4216(1) & 3.6840(4) & 2.6050(3) & 8.3500(8) & 4.8924(5) & 16.7567(17) & 9.6411(10)  \\ 
1.5 & 1.6687(2) & 1.4378(1) & 3.6081(4) & 2.6601(3) & 8.6007(9) & 5.1906(5) & 18.2735(18) & 9.7162(10)  \\ 
    \bottomrule
  \end{tabular}
  }
\end{table}

\begin{table}[htbp]
  \centering
  \setlength{\tabcolsep}{1.5pt}
  \caption{Standard deviation (and entropy) of average $k^*$ values on TriviaQA with Phi-3-medium-4k-instruct  for various temperatures, $\alpha$, and $\lambda$.}
  \label{tab:averaged_k_entropy_phi_triviaqa}\resizebox{0.9\textwidth}{!}{
  \begin{tabular}{c|cc|cc|cc|cc}
    \toprule
    \multirow{2}{*}{Temp} 
      & \multicolumn{2}{c|}{$\lambda=0.1$} 
      & \multicolumn{2}{c|}{$\lambda=0.01$} 
      & \multicolumn{2}{c|}{$\lambda=0.001$} 
      & \multicolumn{2}{c}{$\lambda=0.0001$}  \\
    & $\alpha=1.5$ & $\alpha=2.0$ & $\alpha=1.5$ & $\alpha=2.0$ & $\alpha=1.5$ & $\alpha=2.0$ & $\alpha=1.5$ & $\alpha=2.0$  \\
    \midrule
    0.3 & 0.87 (1.43) & 0.49 (0.98) & 2.84 (2.65) & 1.76 (2.26) & 8.19 (4.18) & 3.99 (3.32) & 16.54 (5.06) & 8.88 (4.33) \\
    0.7 & 0.87 (1.41) & 0.49 (0.99) & 2.69 (2.76) & 1.70 (2.22) & 7.50 (4.16) & 3.78 (3.28) & 15.26 (5.16) & 8.38 (4.31) \\
    1.0 & 0.87 (1.43) & 0.49 (0.98) & 2.68 (2.82) & 1.65 (2.24) & 7.04 (4.22) & 3.62 (3.37) & 13.81 (5.27) & 7.75 (4.46) \\
    1.5 & 0.84 (1.40) & 0.50 (0.99) & 2.58 (2.84) & 1.63 (2.27) & 6.60 (4.30) & 3.58 (3.45) & 13.94 (5.37) & 7.51 (4.51) \\
    \bottomrule
  \end{tabular}
  }
\end{table}

\end{document}